\newcommand{\citep}[1]{\cite{#1}}
\newcommand{\citet}[1]{\cite{#1}}
\begin{document}

\title{Unsupervised Supervised Learning II:\\ Training Margin Based Classifiers without Labels}
\author{Krishnakumar Balasubramanian\textsuperscript{*}\\School of Computational Science and Engineering\\College of Computing\\Georgia Institute of Technology\\Atlanta, GA \and Pinar Donmez\\ Yahoo! Labs \\701 First Ave., Sunnyale CA 94089, USA \\ \and Guy Lebanon\\School of Computational Science and Engineering\\College of Computing\\Georgia Institute of Technology\\Atlanta, GA}
\date{\today}
\maketitle
\thispagestyle{empty}
\let\oldthefootnote\thefootnote
\renewcommand{\thefootnote}{\fnsymbol{footnote}}
\footnotetext[1]{To whom correspondence should be addressed. Email: \url{krishnakumar3@gatech.edu }}
\let\thefootnote\oldthefootnote

\maketitle

\maketitle
\begin{abstract}  
Many popular linear classifiers, such as logistic regression, boosting, or SVM, are trained by optimizing a margin-based risk   function. Traditionally, these risk functions are computed based on   a labeled dataset. We develop a novel technique for estimating such   risks using only unlabeled data and the marginal label distribution. We prove that the proposed risk estimator is consistent on high-dimensional datasets and demonstrate it on synthetic and real-world data. In particular, we   show how the estimate is used for evaluating classifiers in transfer   learning, and for training classifiers with no labeled data   whatsoever.
\end{abstract} 

\section{Introduction} \label{sec:intro} Many popular linear classifiers, such as logistic regression, boosting, or SVM, are trained by optimizing a margin-based risk function. For standard linear classifiers $\hat Y=\text{sign} \sum\theta_jX_j$ with $Y\in\{-1,+1\}$, and $X,\theta\in\R^d$ the margin is defined as the product
\begin{align}
  Y f_{\theta}(X)\quad \text{where} \quad f_{\theta}(X) \defeq   \sum_{j=1}^d \theta_jX_j. \label{eq:defF}
\end{align}
Training such classifiers involves choosing a particular value of $\theta$. This is done by minimizing the risk or expected loss
\begin{align} \label{eq:defR} R(\theta) &= \E_{p(X,Y)}   L(Y,f_{\theta}(X))
\end{align}
with the three most popular loss functions
\begin{align} \label{eq:loss1}
  L_1(Y,f_{\theta}(X)) &= \exp\left(-Y f_{\theta}(X)\right)\\
  L_2(Y,f_{\theta}(X)) &= \log \left( 1+\exp\left(-Y f_{\theta}(X)     \right)\right) \label{eq:loss2}\\
  L_3(Y,f_{\theta}(X)) &= (1-Yf_{\theta}(X))_+. \label{eq:loss3}
\end{align}
being exponential loss $L_1$ (boosting), logloss $L_2$ (logistic regression) and hinge loss $L_3$ (SVM) respectively ($A_+$ above corresponds to $A$ if $A>0$ and 0 otherwise). 

Since the risk $R(\theta)$ depends on the unknown distribution $p$, it is usually replaced during training with its empirical counterpart \begin{align} 
  R_n(\theta) &= \frac{1}{n} \sum_{i=1}^n L(Y^{(i)},f_{\theta}(X^{(i)}))   \label{eq:empiricalLoss}
\end{align}
based on a labeled training set
\begin{align} \label{eq:labeledData}   (X^{(1)},Y^{(1)}),\ldots,(X^{(n)},Y^{(n)})\iid p
\end{align}
leading to the following estimator
\begin{align} \nonumber
  \hat\theta_n &= \argmin_{\theta} R_n(\theta).
\end{align}
Note, however, that evaluating and minimizing $R_n$ requires labeled data \eqref{eq:labeledData}. While suitable in some cases, there are certainly situations in which labeled data is difficult or impossible to obtain.
 
In this paper we construct an estimator for $R(\theta)$ using only unlabeled data, that is using
\begin{align} \label{eq:unlabeledData} X^{(1)},\ldots,X^{(n)} \iid p
\end{align}
instead of \eqref{eq:labeledData}. Our estimator is based on  the observations that when the data is high dimensional ($d\to\infty$) the quantities
\begin{align} \label{eq:condInProd} 
f_{\theta}(X)|\{Y=y\},\quad   y\in\{-1,+1\}
\end{align}
are often normally distributed. This phenomenon is supported by empirical evidence and may also be derived using non-iid central limit theorems. We then observe that the limit distributions of \eqref{eq:condInProd} may be estimated from unlabeled data \eqref{eq:unlabeledData} and that these distributions may be used to measure margin-based losses such as \eqref{eq:loss1}-\eqref{eq:loss3}. 
We examine two novel unsupervised applications: (i) estimating margin-based losses in transfer learning and (ii) training margin-based classifiers. We investigate these applications theoretically and also provide empirical results on synthetic and real-world data. Our empirical evaluation shows the effectiveness of the proposed framework in risk estimation and classifier training without any labeled data.

The consequences of estimating $R(\theta)$ without labels are indeed profound. Label scarcity is a well known problem which has lead to the emergence of semisupervised learning: learning using a few labeled examples and many unlabeled ones. The techniques we develop lead to a new paradigm that goes beyond semisupervised learning in requiring no labels whatsoever.

\section{Unsupervised Risk Estimation} \label{sec:riskEstimation} 
In this section we describe in detail the proposed estimation framework and discuss its theoretical properties. Specifically, we construct an estimator for $R(\theta)$ \eqref{eq:defR} using the unlabeled data \eqref{eq:unlabeledData} which we denote $\hat R_n(\theta\,;X^{(1)},\ldots,X^{(n)})$ or simply $\hat R_n(\theta)$ (to distinguish it from $R_n$ in \eqref{eq:empiricalLoss}).

Our estimation is based on two assumptions. The first assumption is that the label marginals $p(Y)$ are known and that $p(Y=1)\neq p(Y=-1)$. While this assumption may seem restrictive at first, there are many cases where it holds. Examples include medical diagnosis ($p(Y)$ is the well known marginal disease frequency), handwriting recognition or OCR ($p(Y)$ is the easily computable marginal frequencies of different letters in the English language), life expectancy prediction ($p(Y)$ is based on marginal life expectancy tables). In these and other examples $p(Y)$ is known with great accuracy even if labeled data is unavailable. Furthermore, this assumption may be replaced with a weaker form in which we know the ordering of the marginal distributions e.g., $p(Y=1)>p(Y=-1)$, but without knowing the specific values of the marginal distributions.

The second assumption is that the quantity $f_{\theta}(X)|Y$ follows a normal distribution. As $f_{\theta}(X)|Y$ is a linear combination of random variables, it is frequently normal when $X$ is high dimensional. From a theoretical perspective this assumption is motivated by the central limit theorem (CLT). The classical CLT states that $f_{\theta}(X)=\sum_{i=1}^d\theta_i X_i|Y$ is approximately normal for large $d$ if the data components $X_1,\ldots,X_d$ are iid given $Y$. A more general CLT states that $f_{\theta}(X)|Y$ is asymptotically normal if $X_1,\ldots,X_d|Y$ are independent (but not necessary identically distributed). Even more general CLTs state that $f_{\theta}(X)|Y$ is asymptotically normal if $X_1,\ldots,X_d|Y$ are not independent but their dependency is limited in some way. We examine this issue in Section~\ref{sec:CLT} and also show that the normality assumption holds empirically for several standard datasets. 

To derive the estimator we rewrite \eqref{eq:defR} by taking expectation with respect to $Y$ and $\alpha=f_{\theta}(X)$
\begin{align} \label{eq:risk2}
R(\theta) = \E_{p(f_{\theta}(X),Y)}   L(Y,f_{\theta}(X)) 
= \sum_{y\in\{-1,+1\}} p(y) \int_{\R} p(f_{\theta}(X)=\alpha|y)   L(y,\alpha) \, d\alpha.
\end{align}

Equation~\eqref{eq:risk2} involves three terms $L(y,\alpha)$, $p(y)$ and $p(f_{\theta}(X)=\alpha|y)$. The loss function $L$ is known and poses no difficulty. The second term $p(y)$ is assumed to be known (see discussion above). The third term is assumed to be normal 
$f_{\theta}(X)\,|\,\{Y=y\} = \sum_i \theta_i X_i \,| \, \{Y=y\}\sim N(\mu_y,\sigma_y)$ with parameters $\mu_y,\sigma_y$, $y\in\{-1,1\}$ that are estimated by maximizing the likelihood of a Gaussian mixture model. These estimated parameters are used to construct the plug-in estimator $\hat R_n(\theta)$ as follows.

\newcommand*\widefbox[1]{\fbox{\hspace{1em}#1\hspace{1em}}}

\begin{empheq}[box=\widefbox]{align} 
 \label{eq:ll1}
\ell_{n}(\mu,\sigma)
&= \sum_{i=1}^n \log \sum_{y^{(i)}\in\{-1,+1\}} p(y^{(i)})   p_{\mu_y,\sigma_y}(f_{\theta}(X^{(i)})|y^{(i)}).  \\  \label{eq:ll}
(\hat\mu^{(n)},\hat\sigma^{(n)})&=\argmax_{\mu,\sigma}     \ell_{n}(\mu,\sigma)\\ 
\hat R_{n}(\theta) &=  \sum_{y\in\{-1,+1\}}  p(y) \int_{\R}   p_{\hat\mu^{(n)}_y,\hat\sigma^{(n)}_y}(f_{\theta}(X)=\alpha|y)   L(y,\alpha) \, d\alpha. 
\label{eq:pluginEstimate}
\end{empheq}

We make the following observations.
\begin{enumerate}
\item Although we do not denote it explicitly, $\mu_y$ and $\sigma_y$ are functions of $\theta$. 
\item The loglikelihood \eqref{eq:ll1} does not use labeled data (it marginalizes over the label $y^{(i)}$). 
\item The parameter of the loglikelihood \eqref{eq:ll1} are $\mu=(\mu_{1},\mu_{-1})$ and $\sigma=(\sigma_{1},\sigma_{-1})$ rather than the parameter $\theta$ associated with the margin-based classifier. We consider the latter one as a fixed constant at this point. 
\item The estimation problem \eqref{eq:ll} is equivalent to the problem of maximum likelihood for means and variances of a Gaussian mixture model where the label marginals are assumed to be known. It is well known that in this case (barring the symmetric case of a uniform $p(y)$) the MLE converges to the true parameter values.
\item The estimator $\hat R_n$ \eqref{eq:pluginEstimate} is consistent in the limit of infinite unlabeled data
\[P\left(\lim_{n\to\infty} \hat R_{n}(\theta)=R(\theta)\right)=1.\] 
\item The two risk estimators $\hat R_n(\theta)$ \eqref{eq:pluginEstimate} and $R_n(\theta)$  \eqref{eq:empiricalLoss} approximate the expected loss $R(\theta)$. The latter uses labeled samples and is typically more accurate than the former for a fixed $n$.
\item Under suitable conditions $\argmin_{\theta} \hat R_n(\theta)$ converges to the expected risk minimizer
\[ P\left(\,\lim_{n\to\infty}  \,\argmin_{\theta\in\Theta} \,R_{n}(\theta)\,=\,\argmin_{\theta\in\Theta}\, R(\theta)\,\right)\,=\,1.\] 
This far reaching conclusion implies that in cases where $\argmin_{\theta} R(\theta)$ is the Bayes classifier (as is the case with exponential loss, log loss, and hinge loss) we can retrieve the optimal classifier without a single labeled data point.  
\end{enumerate}

\subsection{Asymptotic Normality of $f_{\theta}(X)|Y$} \label{sec:CLT}
The quantity $f_{\theta}(X)|Y$ is essentially a sum of $d$ random variables which for large $d$ is likely to be normally distributed. One way to verify this is empirically, as we show in  Figures~\ref{fig:CLT}-\ref{fig:CLT2} which contrast the histogram with a fitted normal pdf for text, digit images, and face images data. For these datasets the dimensionality $d$ is sufficiently high to provide a nearly normal $f_{\theta}(X)|Y$. For example, in the case of text documents ($X_i$ is the relative number of times word $i$ appeared in the document) $d$ corresponds to the vocabulary size which is typically a large number in the range $10^3-10^5$. Similarly, in the case of image classification ($X_i$ denotes the brightness of the $i$-pixel) the dimensionality is on the order of $10^2-10^4$.

Figures~\ref{fig:CLT}-\ref{fig:CLT2} show that in these cases of text and image data $f_{\theta}(X)|Y$ is approximately normal for both randomly drawn $\theta$ vectors (Figure~\ref{fig:CLT}) and for $\theta$ representing estimated classifiers (Figure~\ref{fig:CLT2}). The single caveat in this case is that normality may not hold when $\theta$ is sparse, as may happen for example for $l_1$ regularized models (last row of Figure~\ref{fig:CLT2}).

\begin{figure} 
\centering     
\begin{tabular}{ccc}
{\scriptsize RCV1 text data} & &
{\scriptsize face images} \\
  \includegraphics[width=0.31\textwidth]{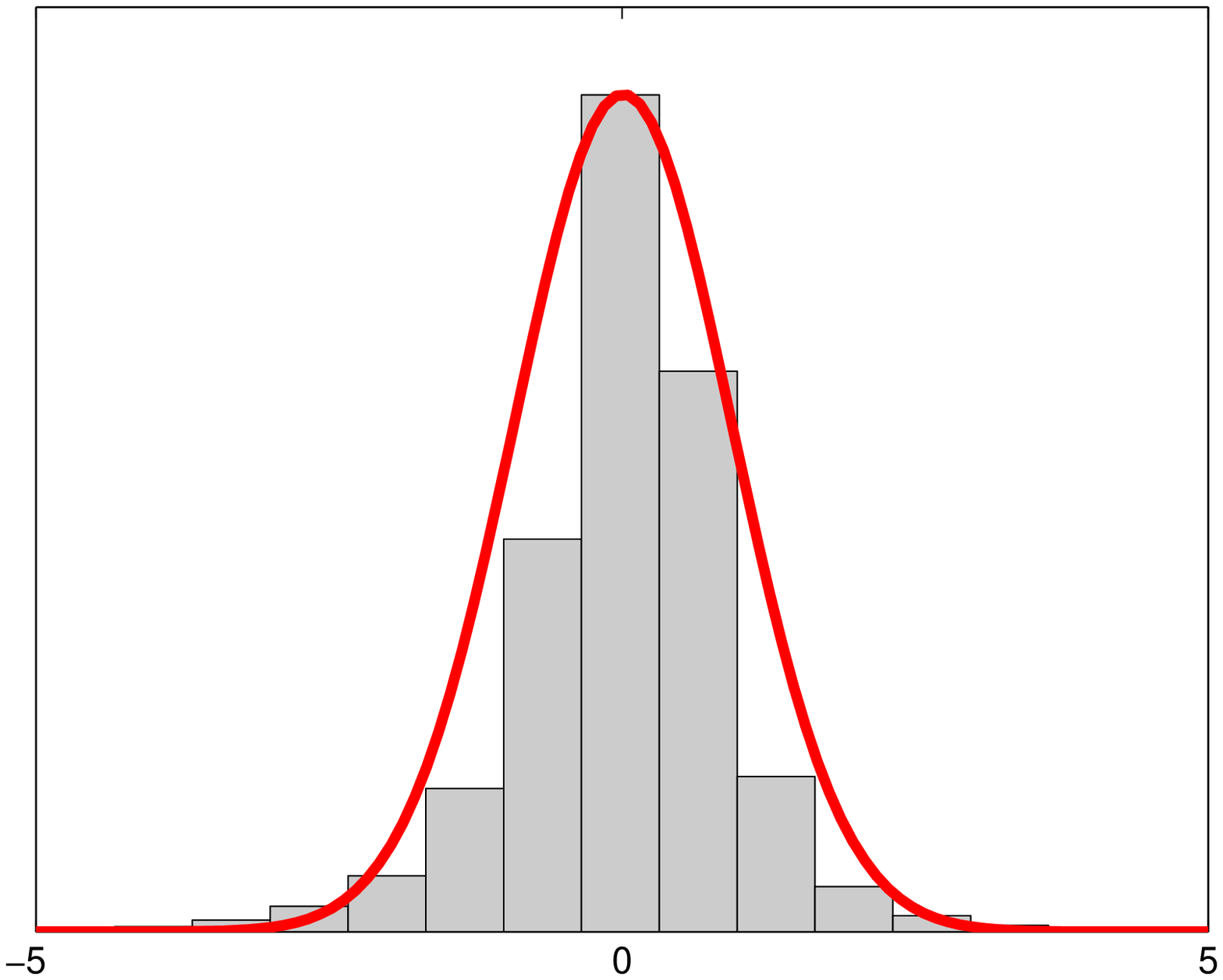}&   \includegraphics[width=0.31\textwidth]{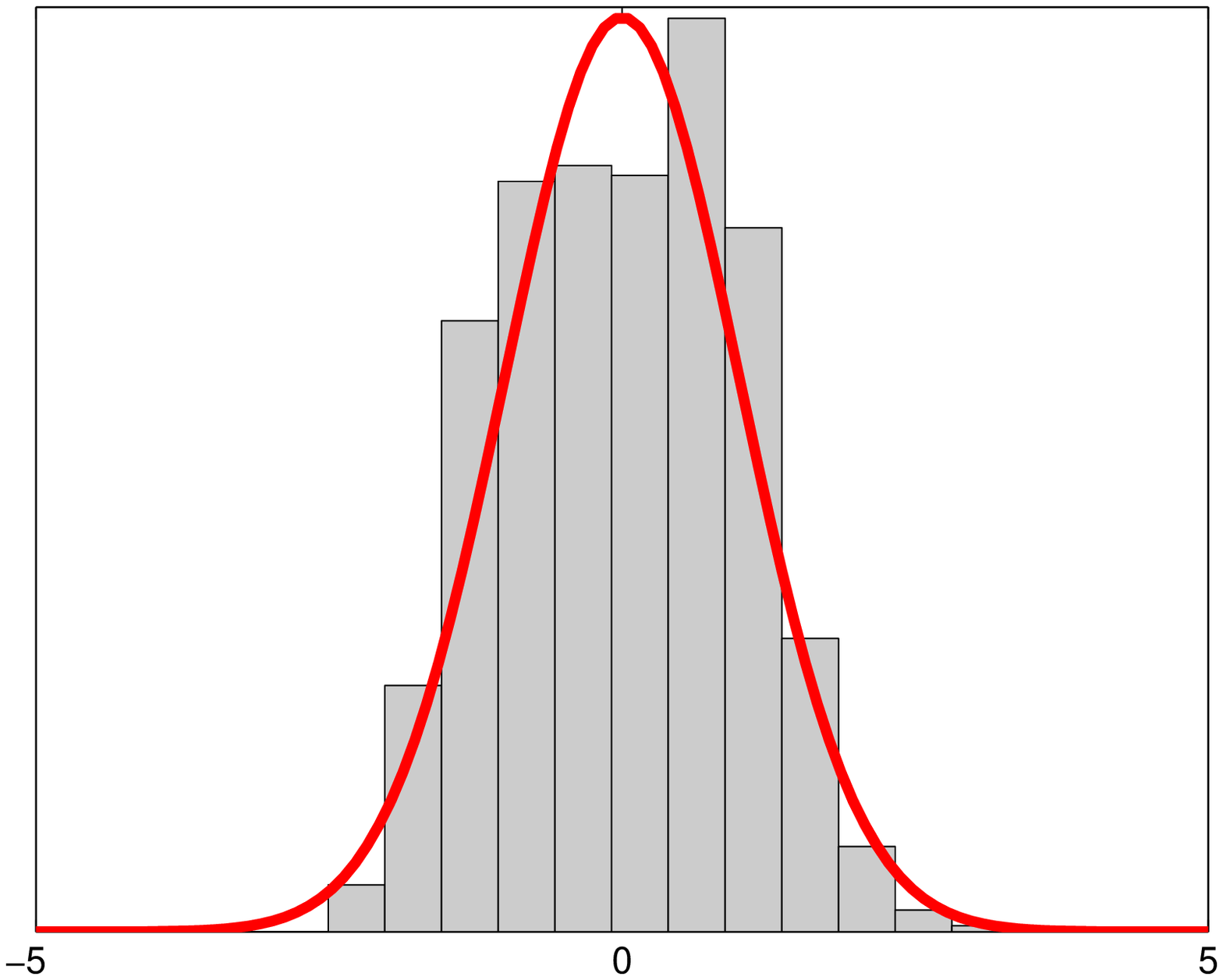}&  \includegraphics[width=0.31\textwidth]{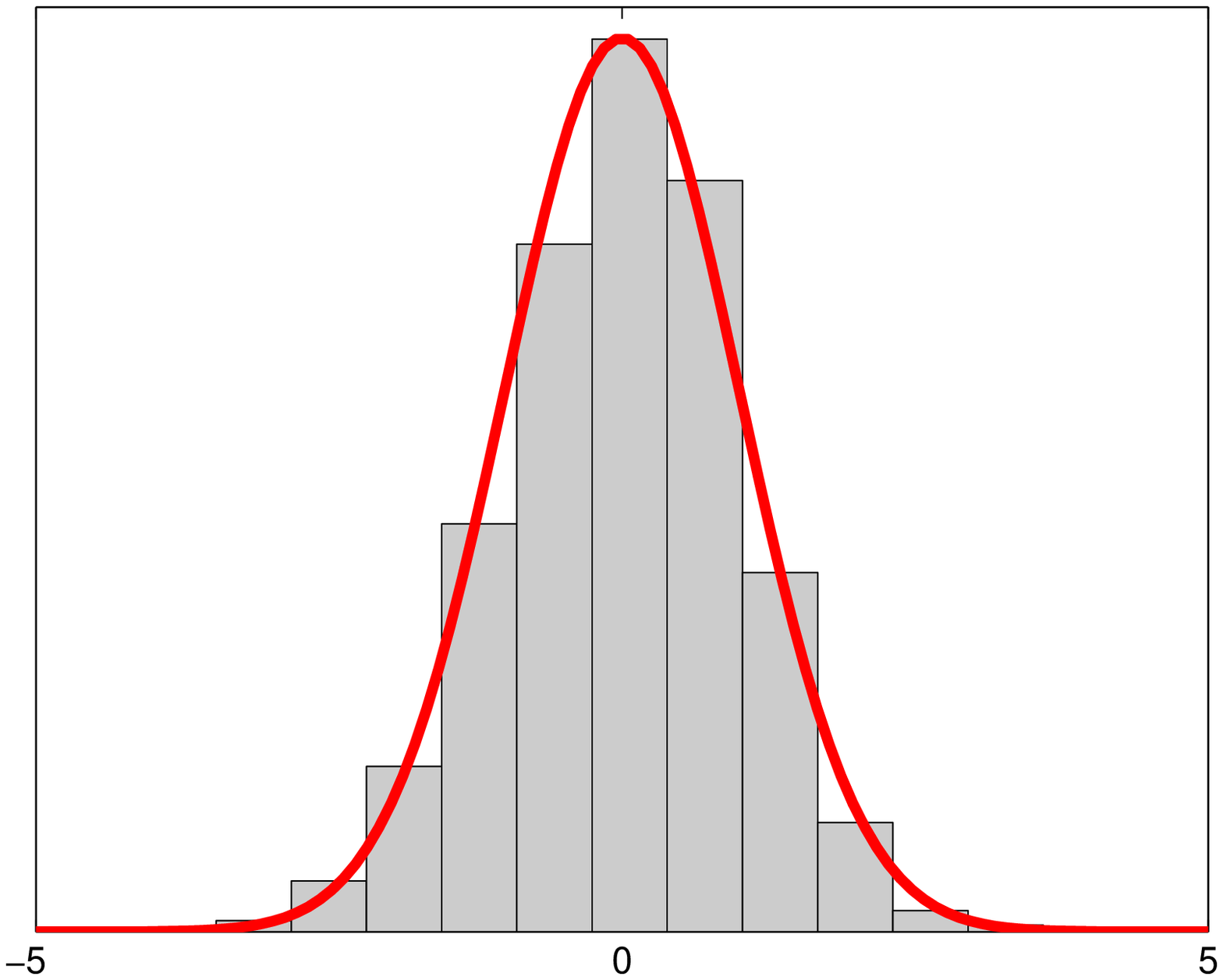} \\
  \includegraphics[width=0.31\textwidth]{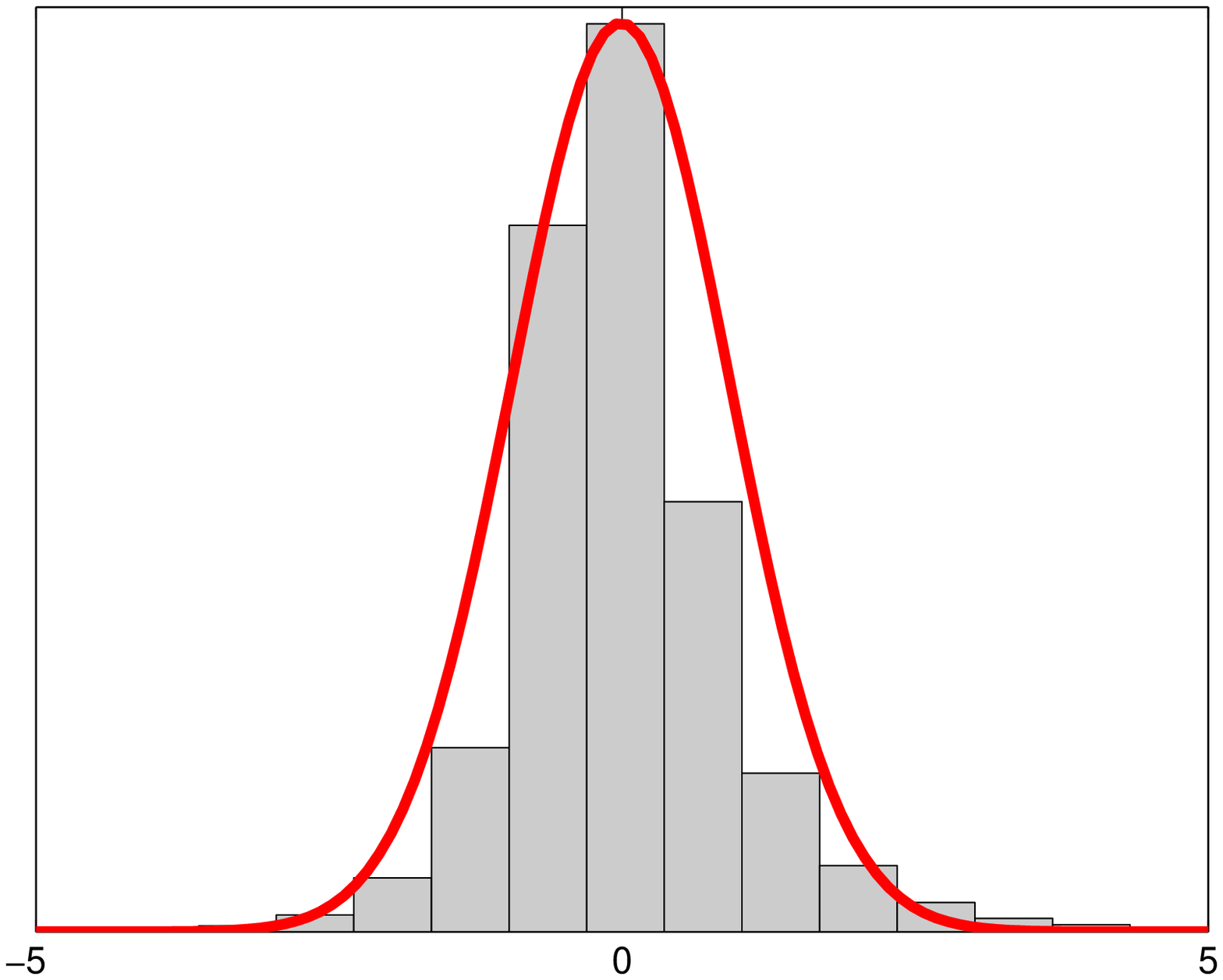}&   \includegraphics[width=0.31\textwidth]{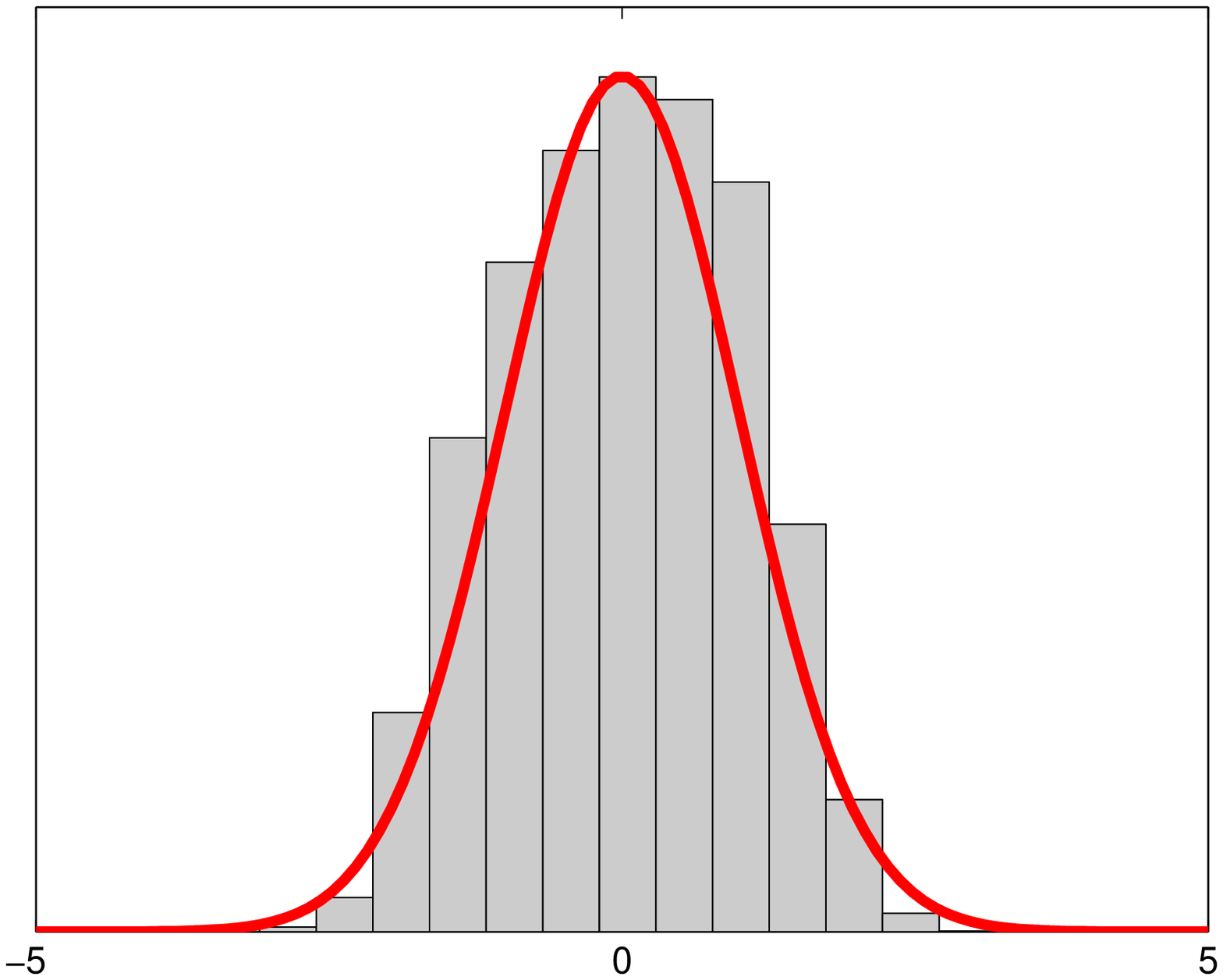}&  \includegraphics[width=0.31\textwidth]{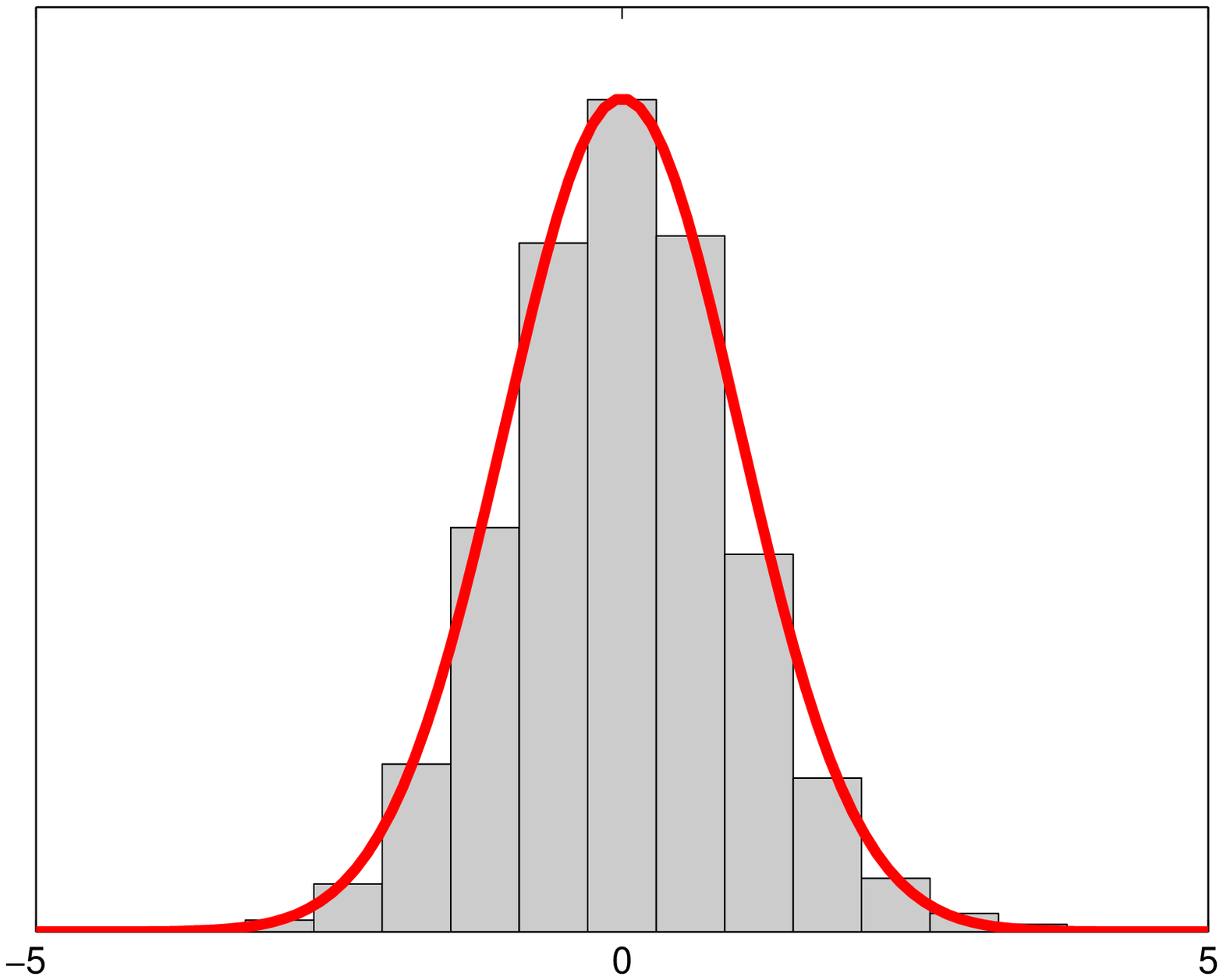} \\
  \includegraphics[width=0.31\textwidth]{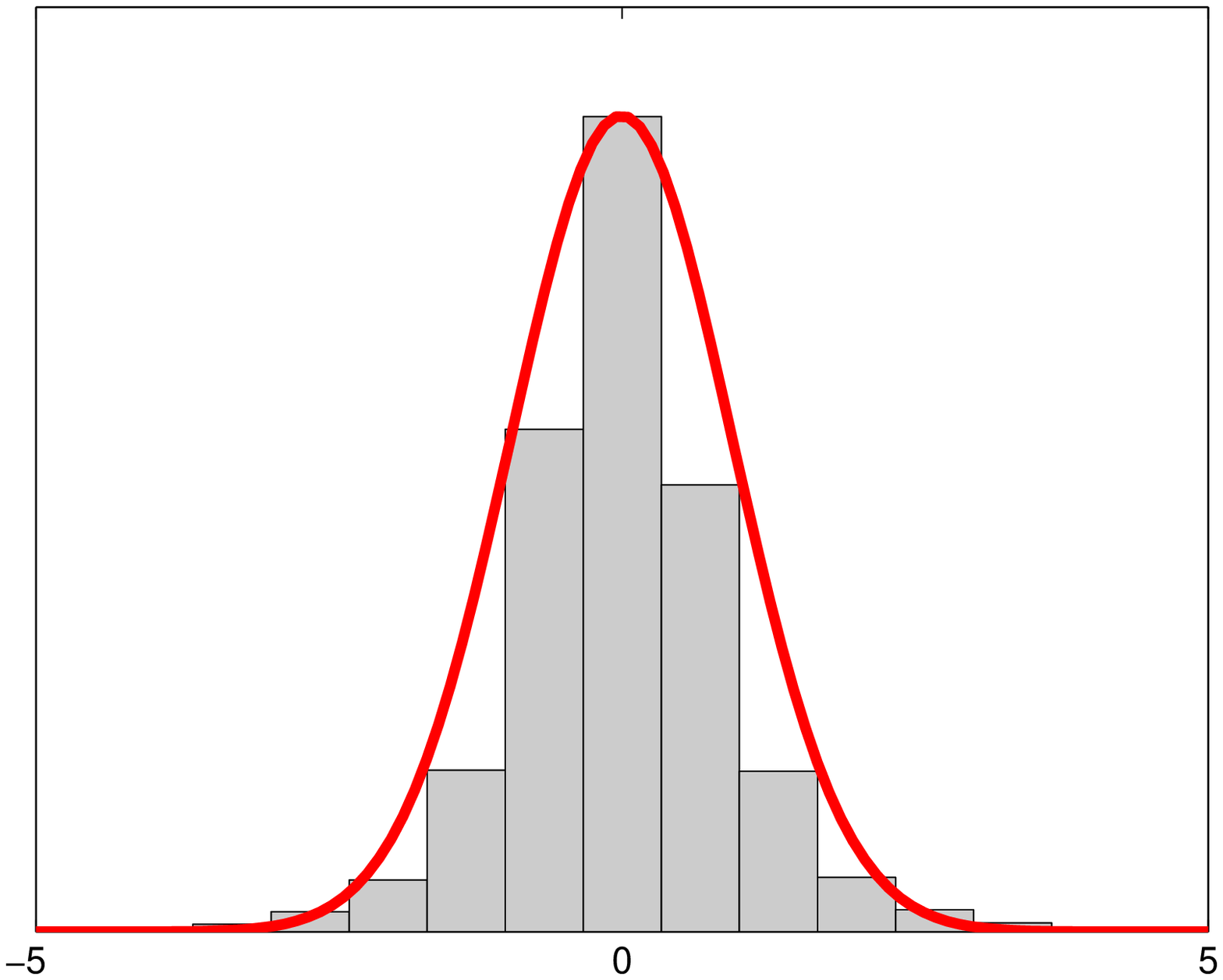}&   \includegraphics[width=0.31\textwidth]{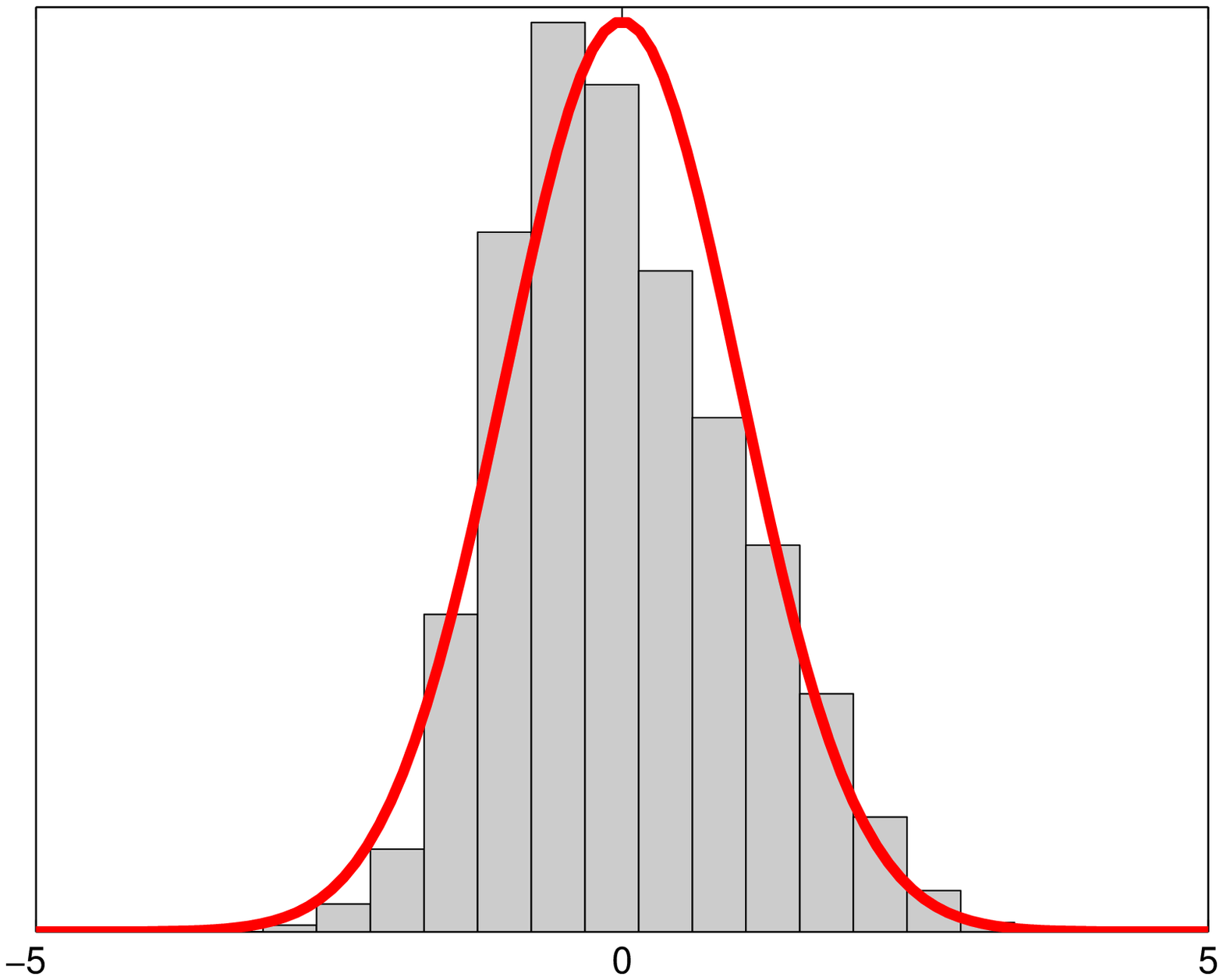}&  \includegraphics[width=0.31\textwidth]{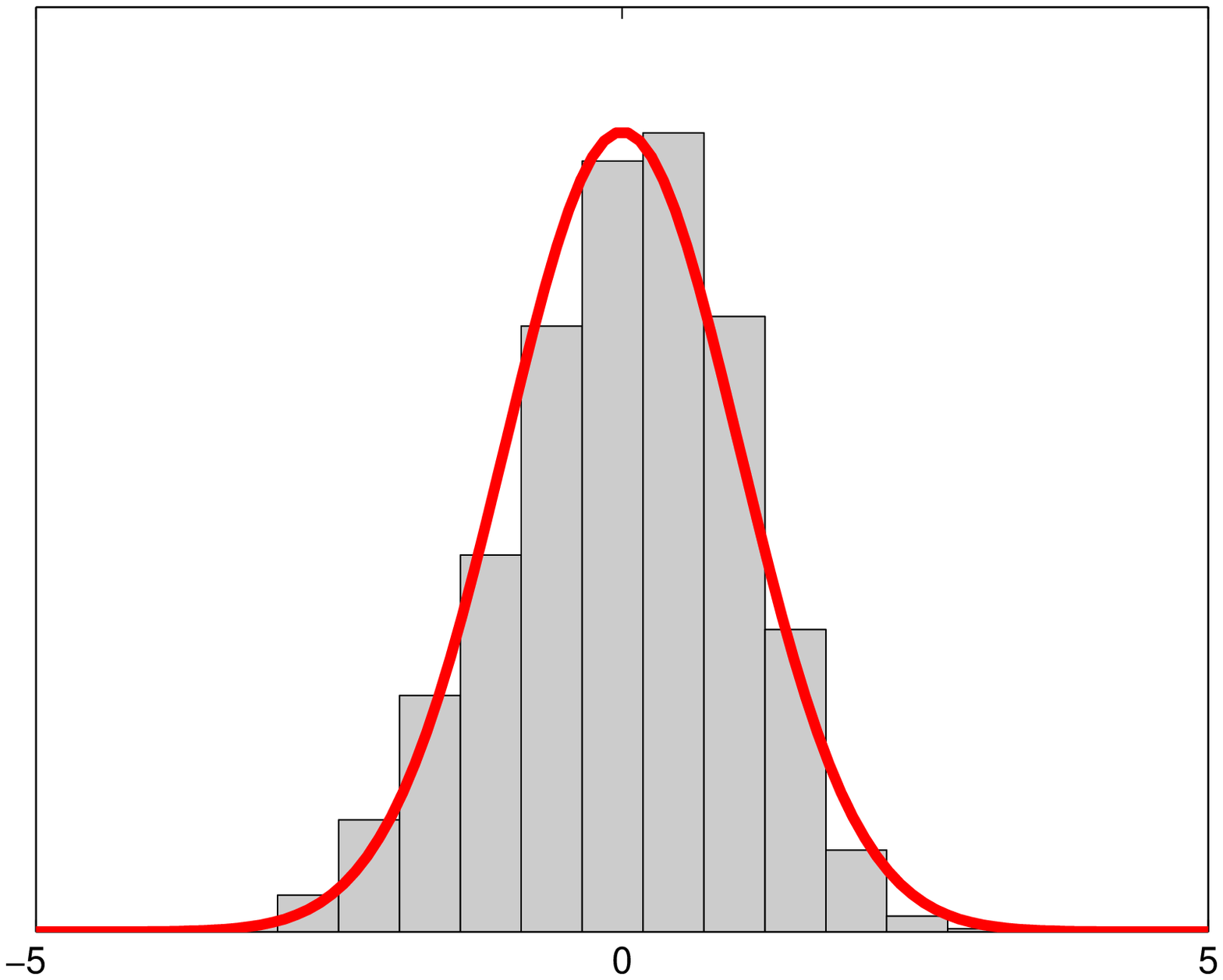} \\
  \includegraphics[width=0.31\textwidth]{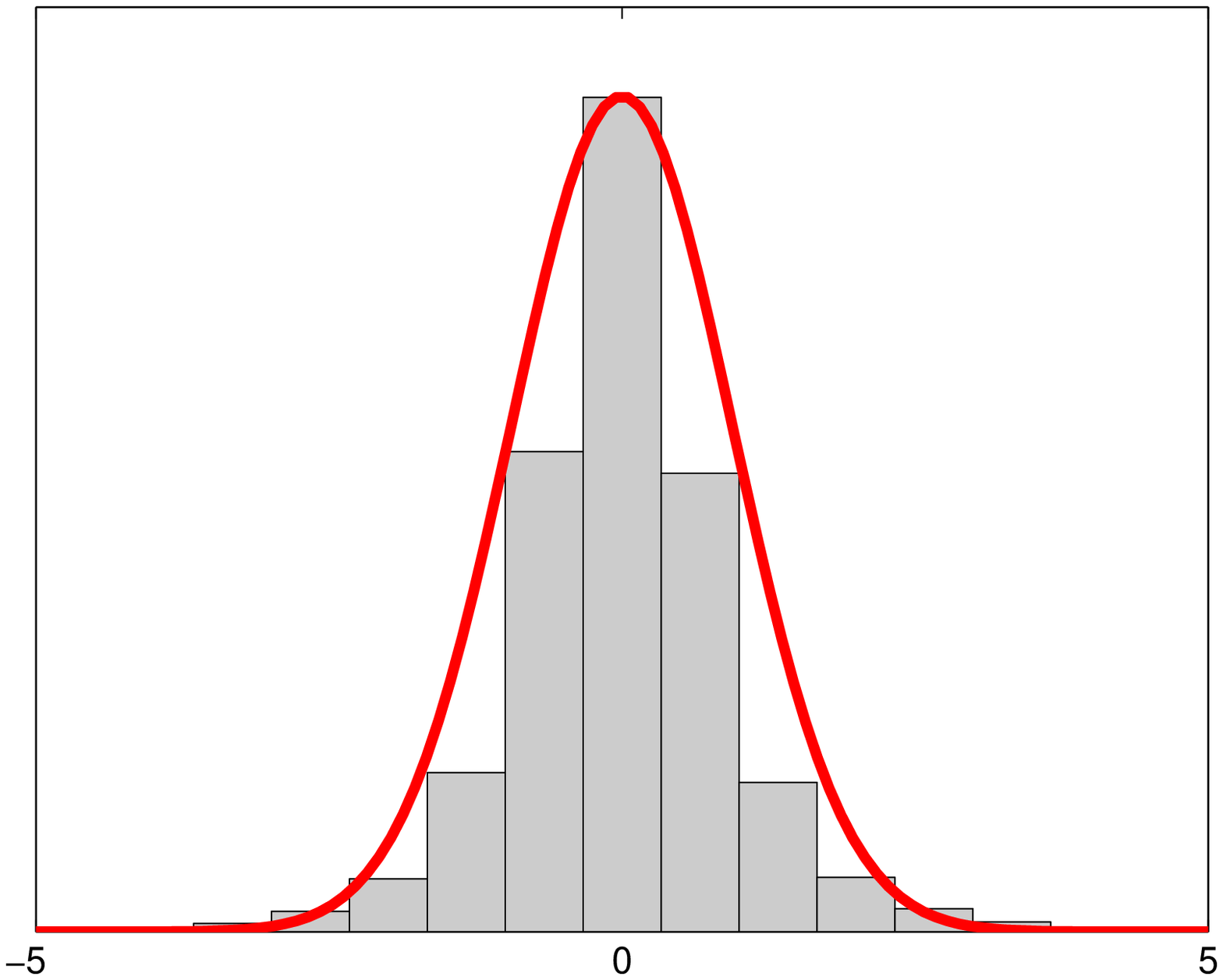}&   \includegraphics[width=0.31\textwidth]{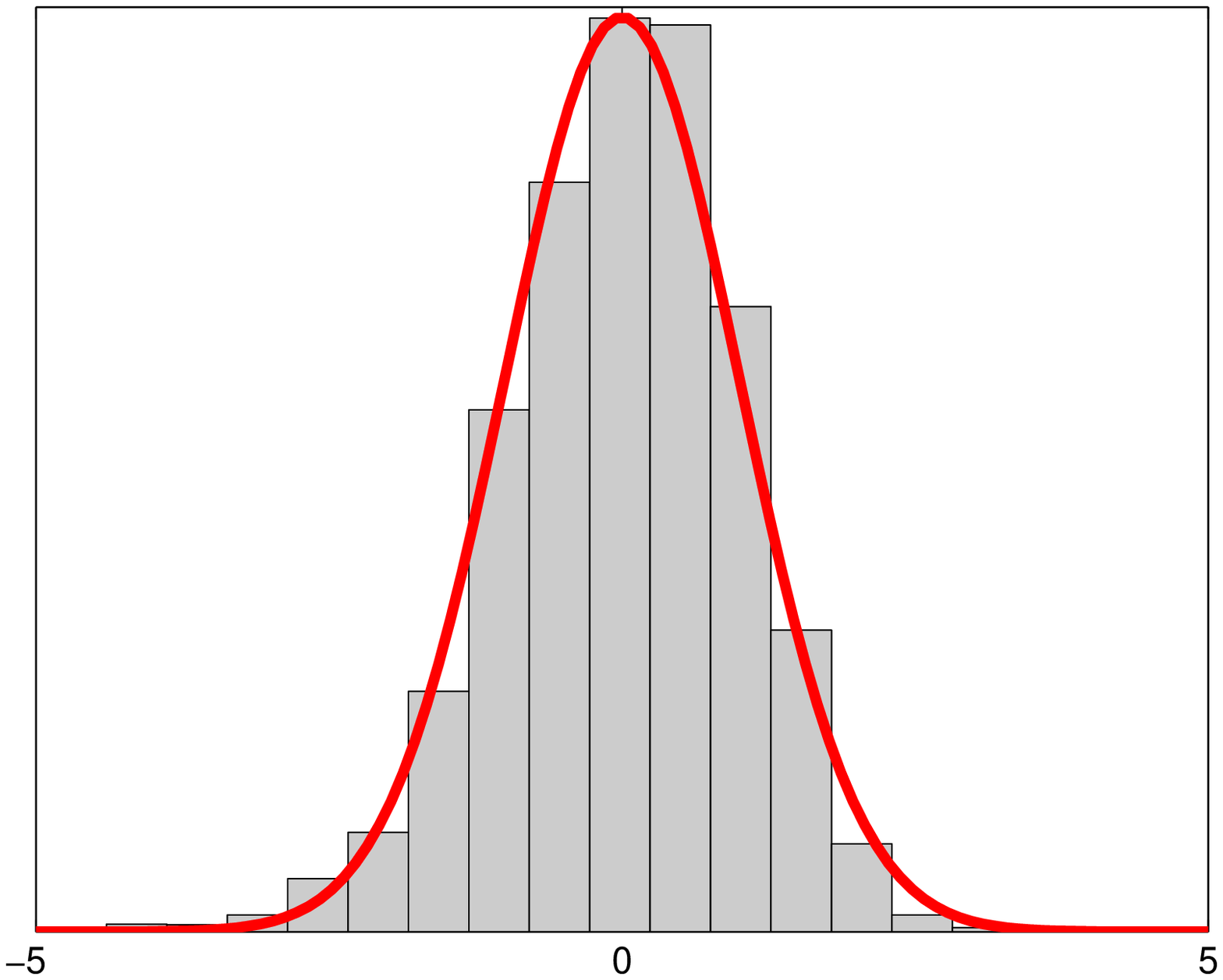}&  \includegraphics[width=0.31\textwidth]{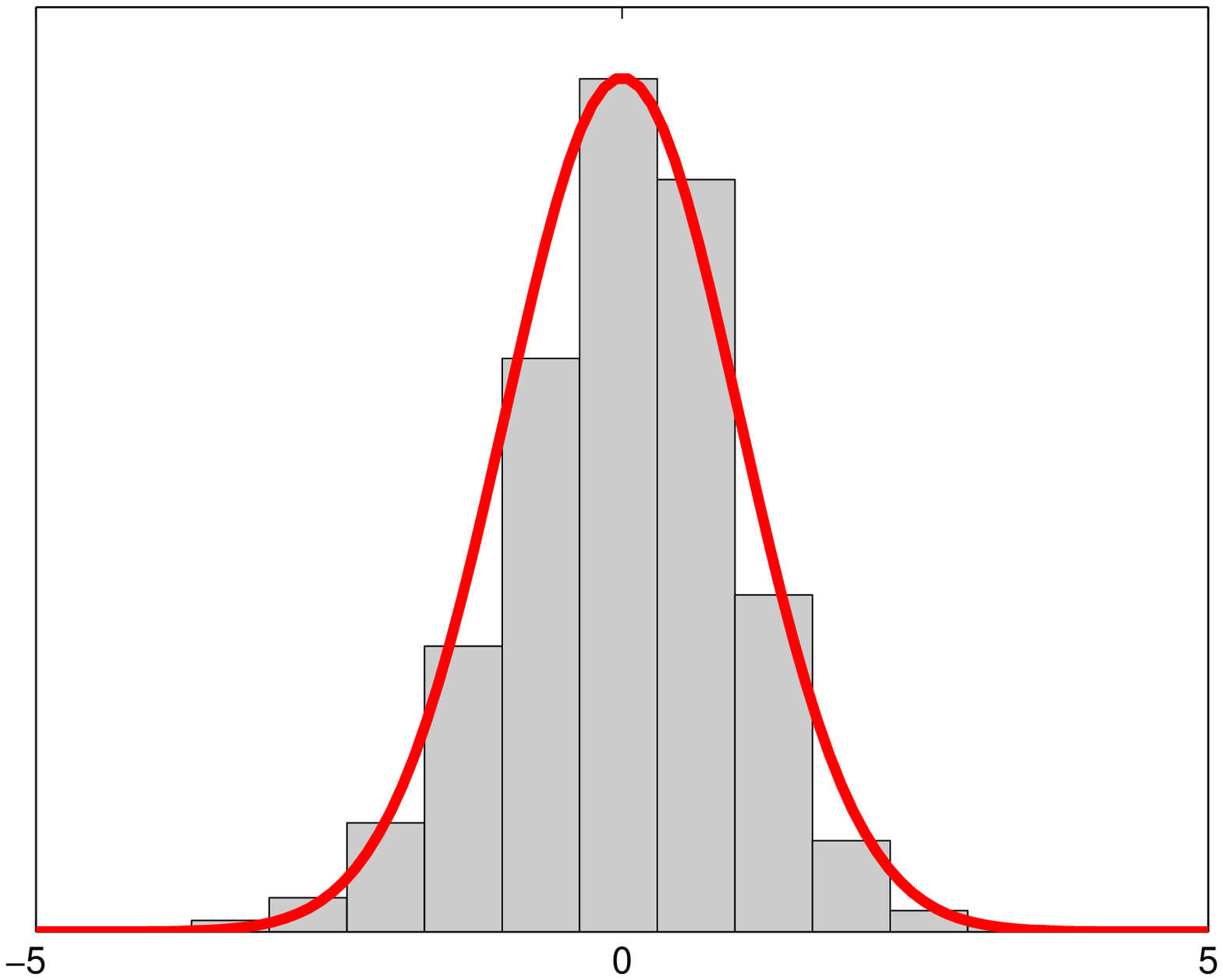} \\
&{\scriptsize MNIST handwritten digit images}&
\end{tabular}
  \caption{\small Centered histograms of $f_{\theta}(X)|\{Y=1\}$ overlayed with the pdf of a fitted Gaussian for randomly drawn $\theta$ vectors ($\theta_i\sim U(-1/2,1/2)$). The columns represent datasets (RCV1 text data \citep{lewis04rcv}, MNIST digit images, and face images \citep{Pham-etal-2002}) and the rows represent multiple random draws. For uniformity we subtracted the empirical mean and divided by the empirical standard deviation. The twelve panels show that even in moderate dimensionality (RCV1: 1000 top words, MNIST digits: 784 pixels, face images: 400 pixels) the assumption that $f_{\theta}(X)|Y$ is normal holds often for randomly drawn $\theta$.}\label{fig:CLT}
\end{figure}

\begin{figure} 
\centering
\begin{tabular}{ccccc}&
{\scriptsize RCV1 text data} & 
 & 
{\scriptsize face images} & \\ &
  \includegraphics[width=0.27\textwidth]{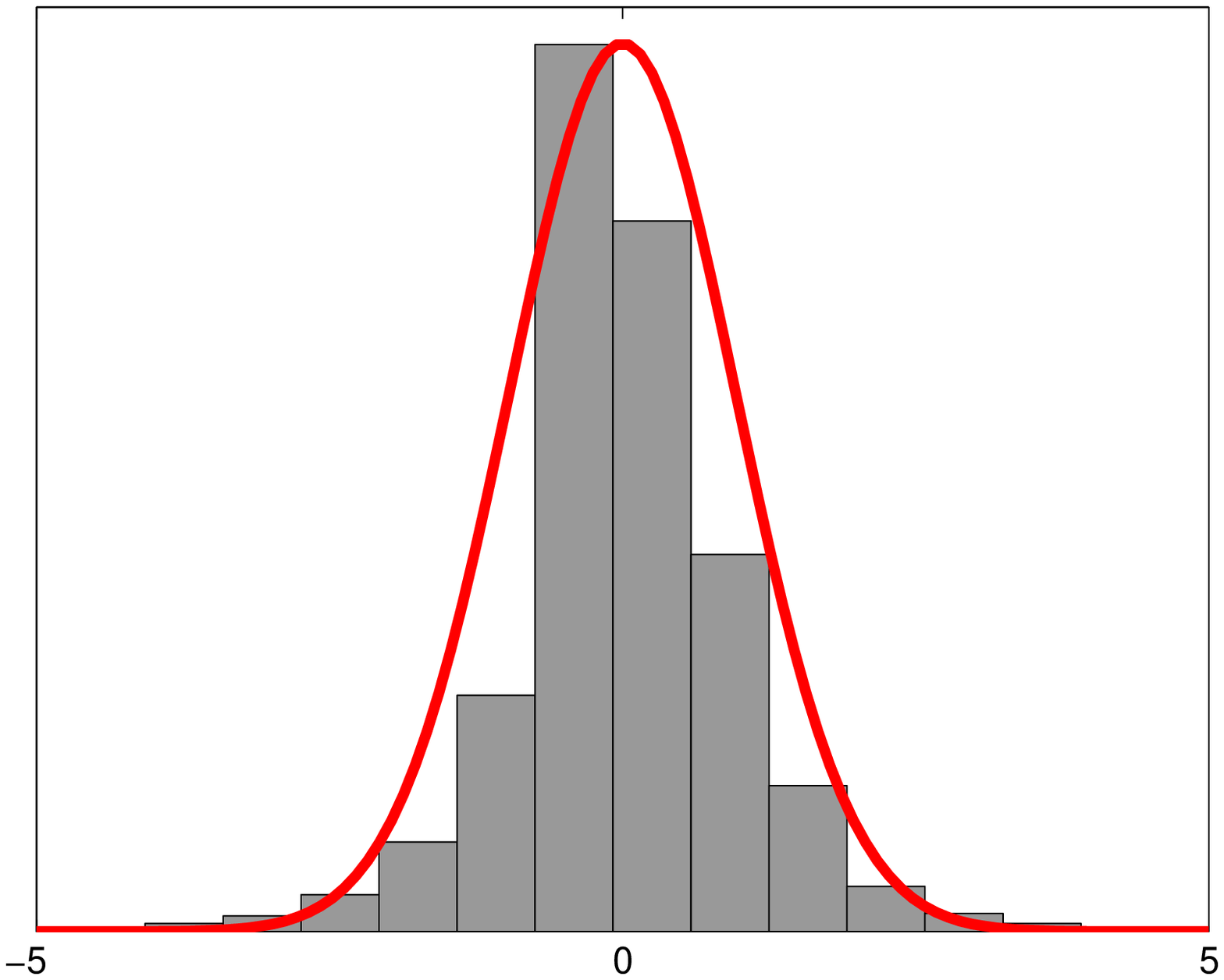}&\hspace{-0.2in}
  \includegraphics[width=0.27\textwidth]{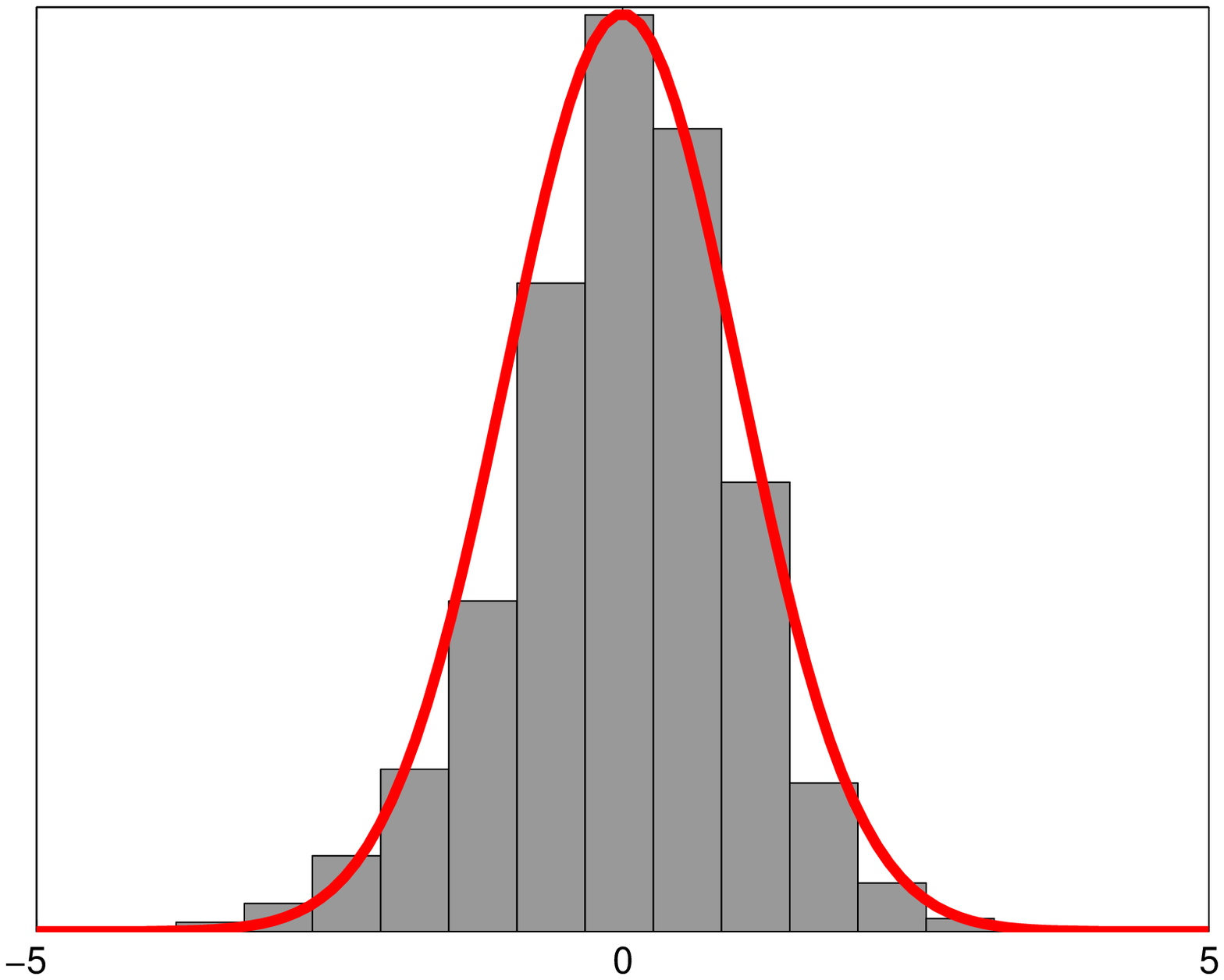}&\hspace{-0.2in}
  \includegraphics[width=0.27\textwidth]{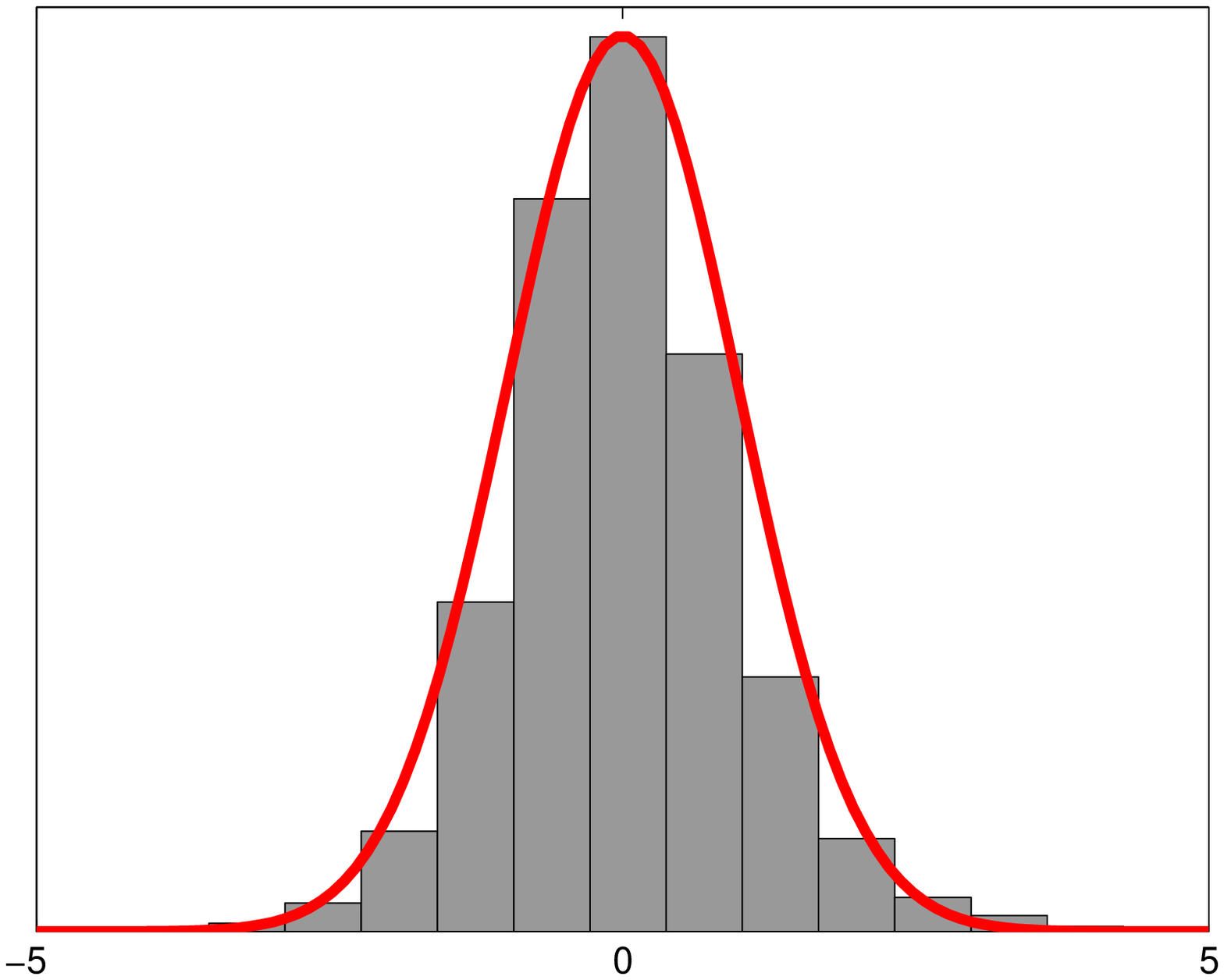}&  \begin{sideways} \scriptsize \hspace{0.4in}  Fisher's LDA\end{sideways}\\
  \begin{sideways}\scriptsize \hspace{0.4in}log. regression\end{sideways}&
  \includegraphics[width=0.27\textwidth]{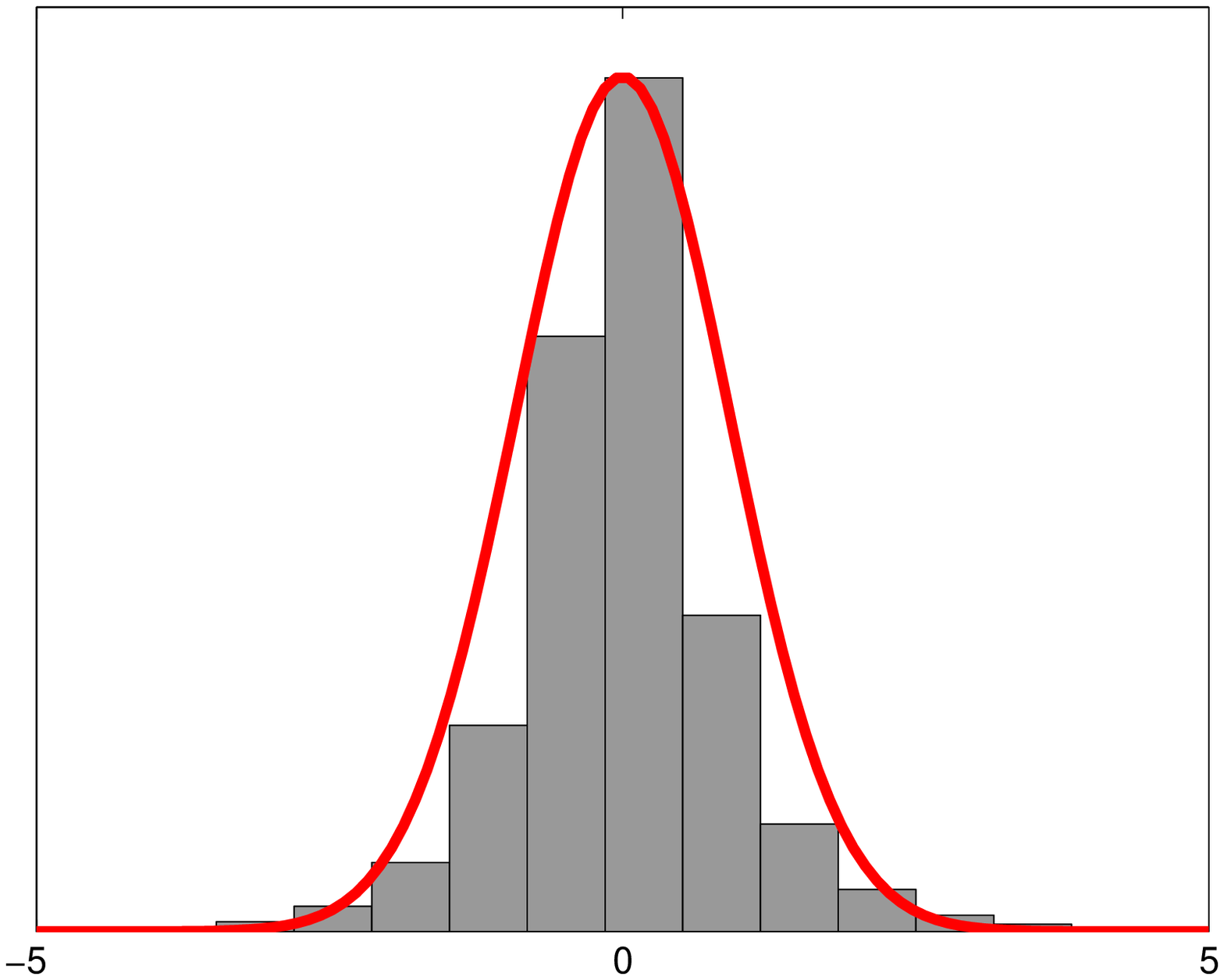}&\hspace{-0.2in}
  \includegraphics[width=0.27\textwidth]{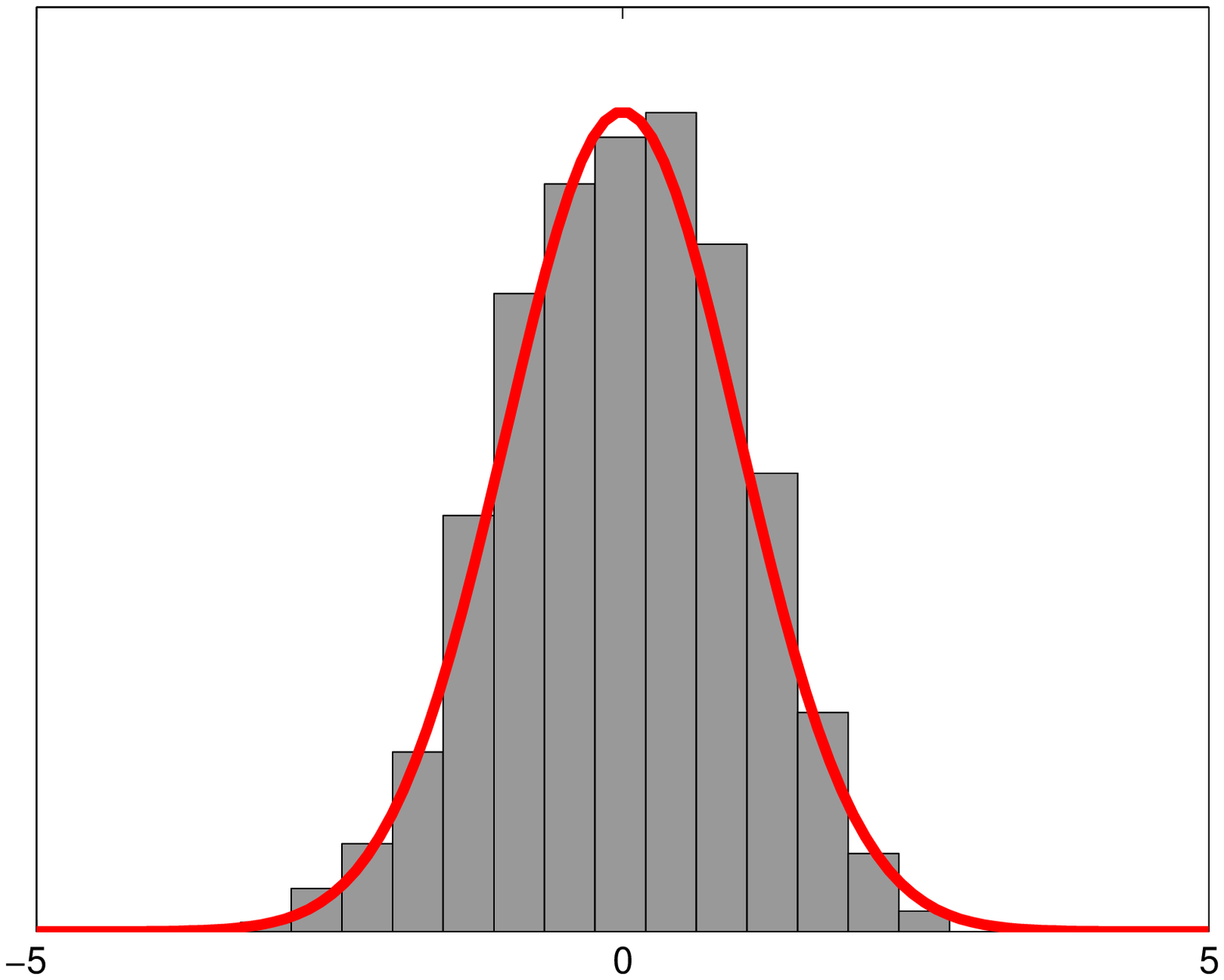}&\hspace{-0.2in}
  \includegraphics[width=0.27\textwidth]{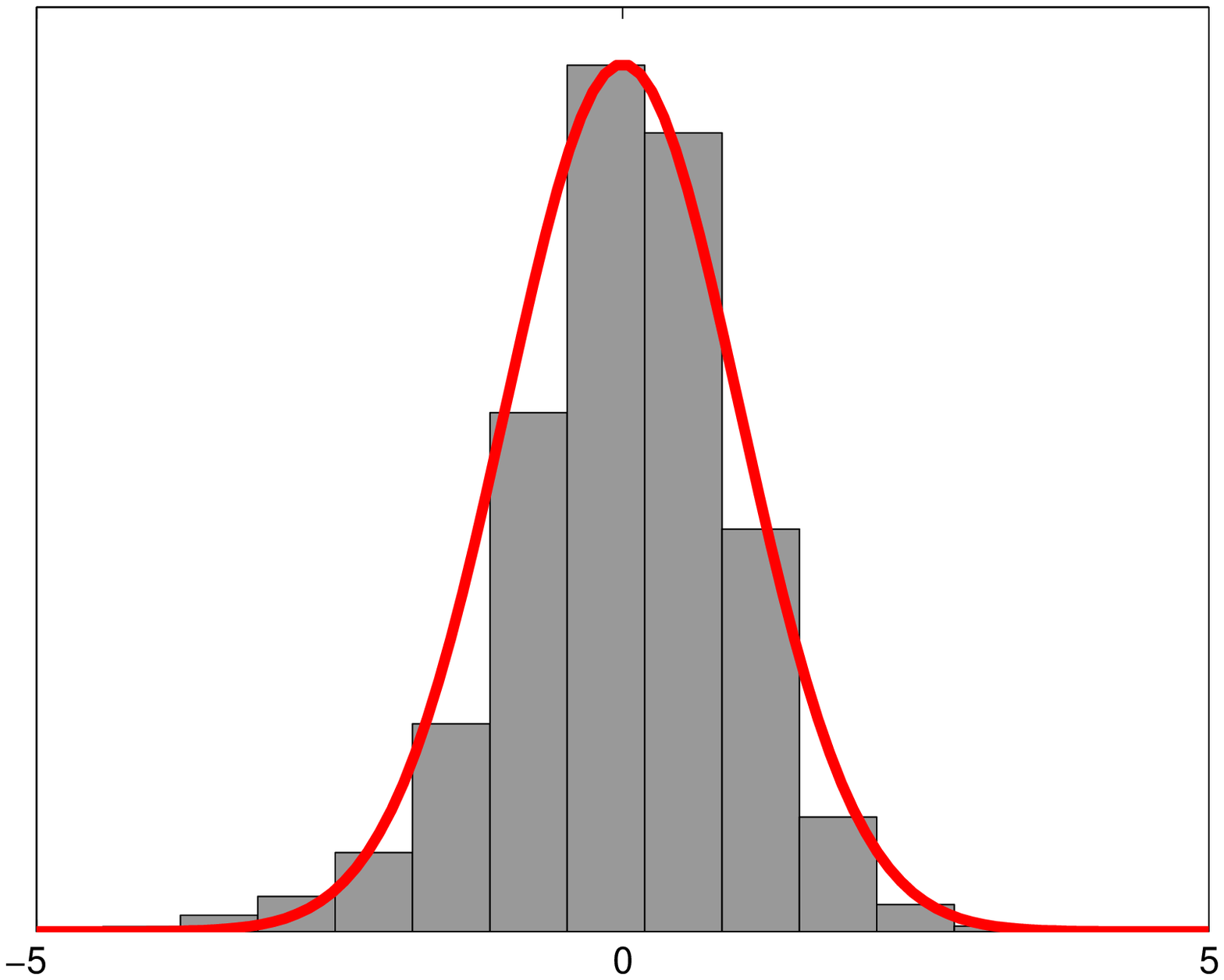}&\\
&
  \includegraphics[width=0.27\textwidth]{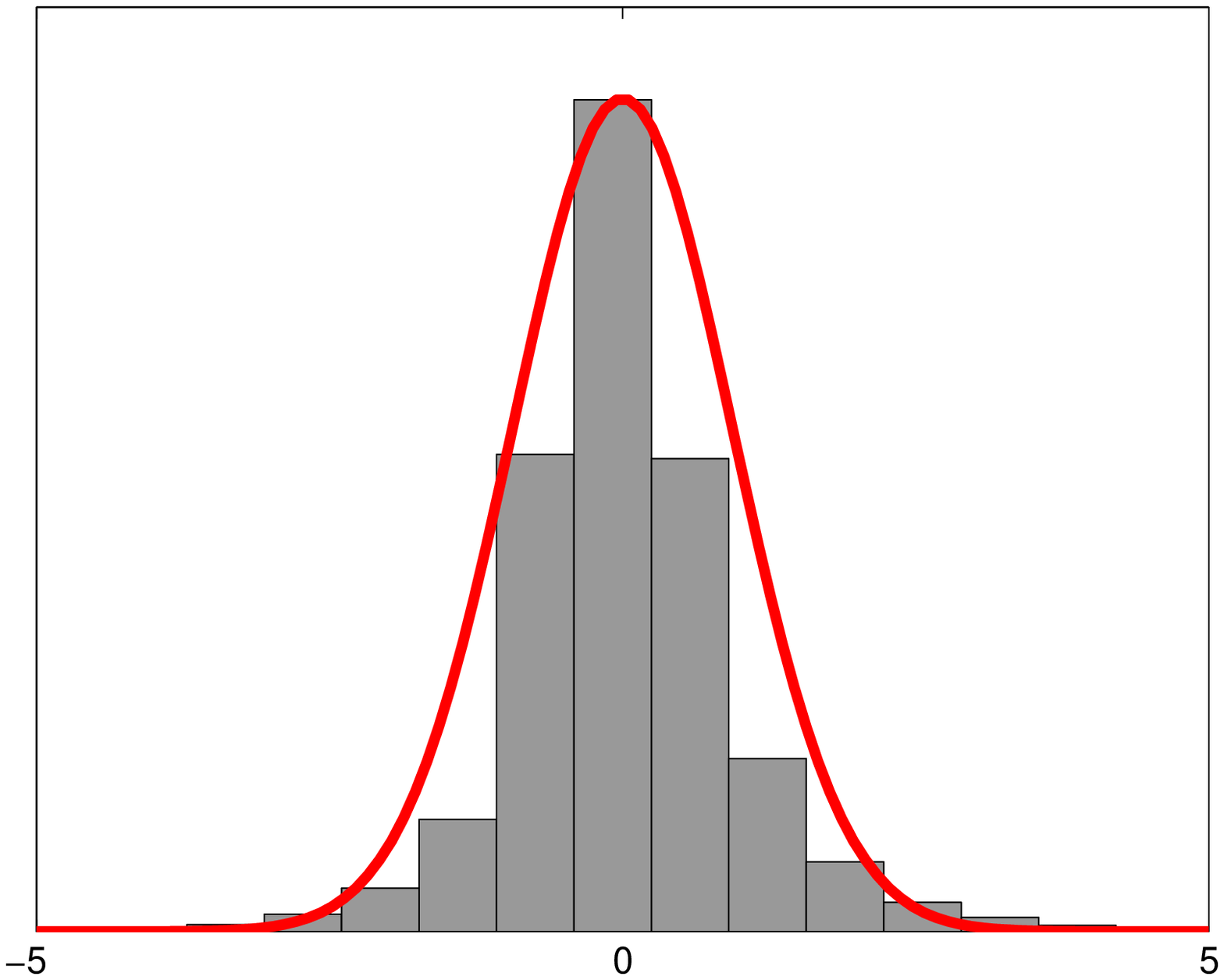}&\hspace{-0.2in}
  \includegraphics[width=0.27\textwidth]{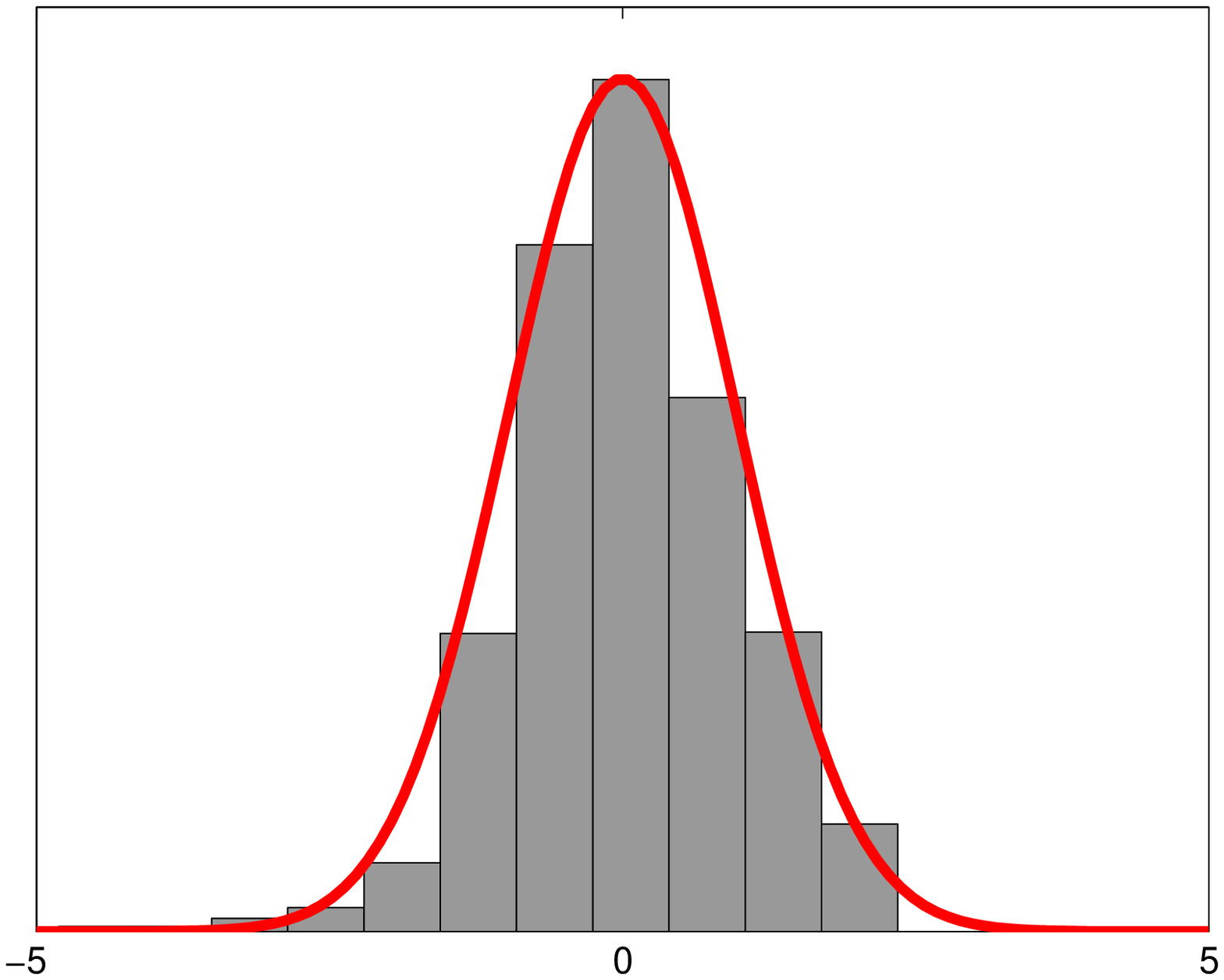}&\hspace{-0.2in}
  \includegraphics[width=0.27\textwidth]{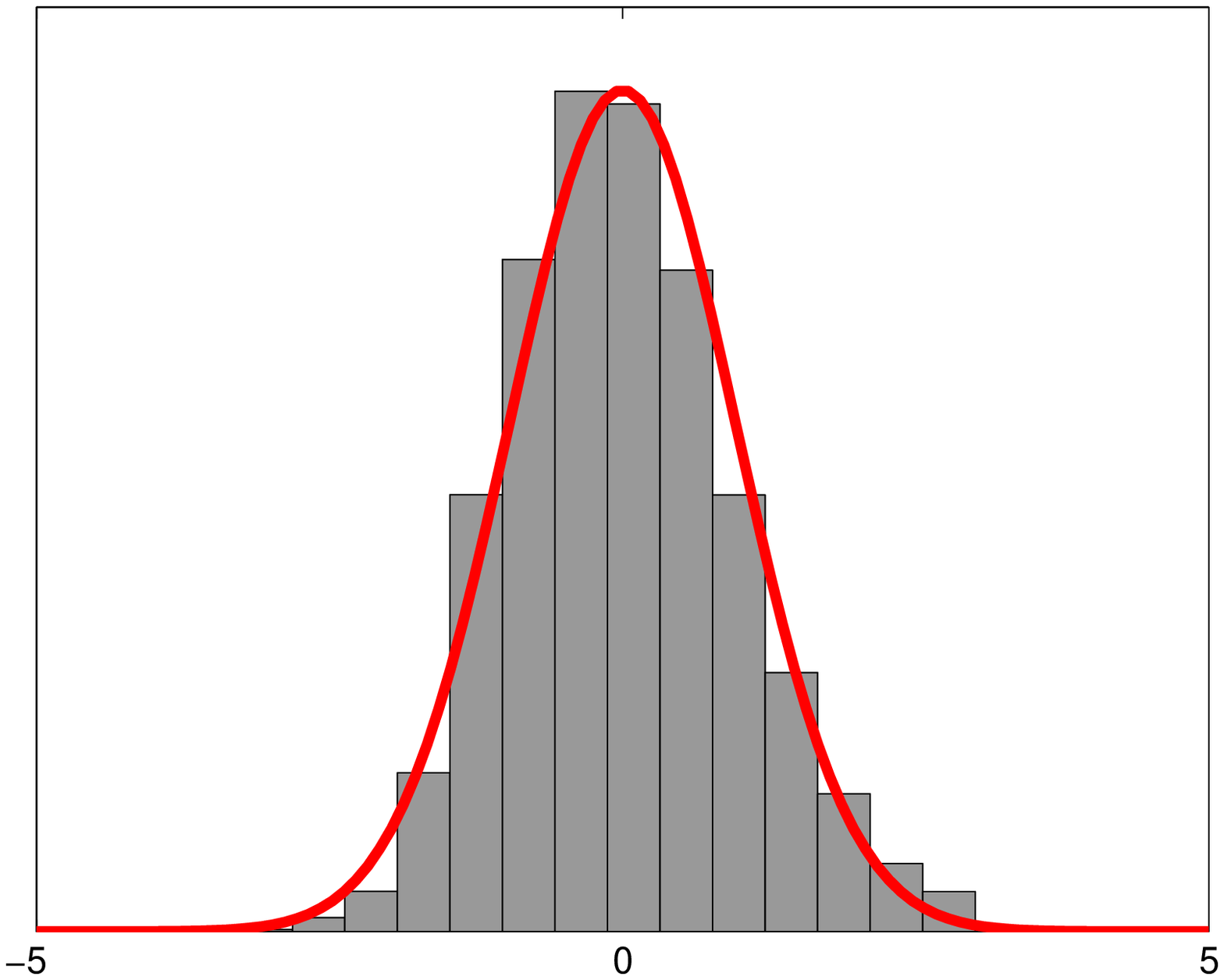}&  \begin{sideways}\scriptsize log. regression ($l_2$ regularized) \end{sideways}\\
  \begin{sideways}\scriptsize log. regression ($l_1$ regularized) \end{sideways}&
  \includegraphics[width=0.27\textwidth]{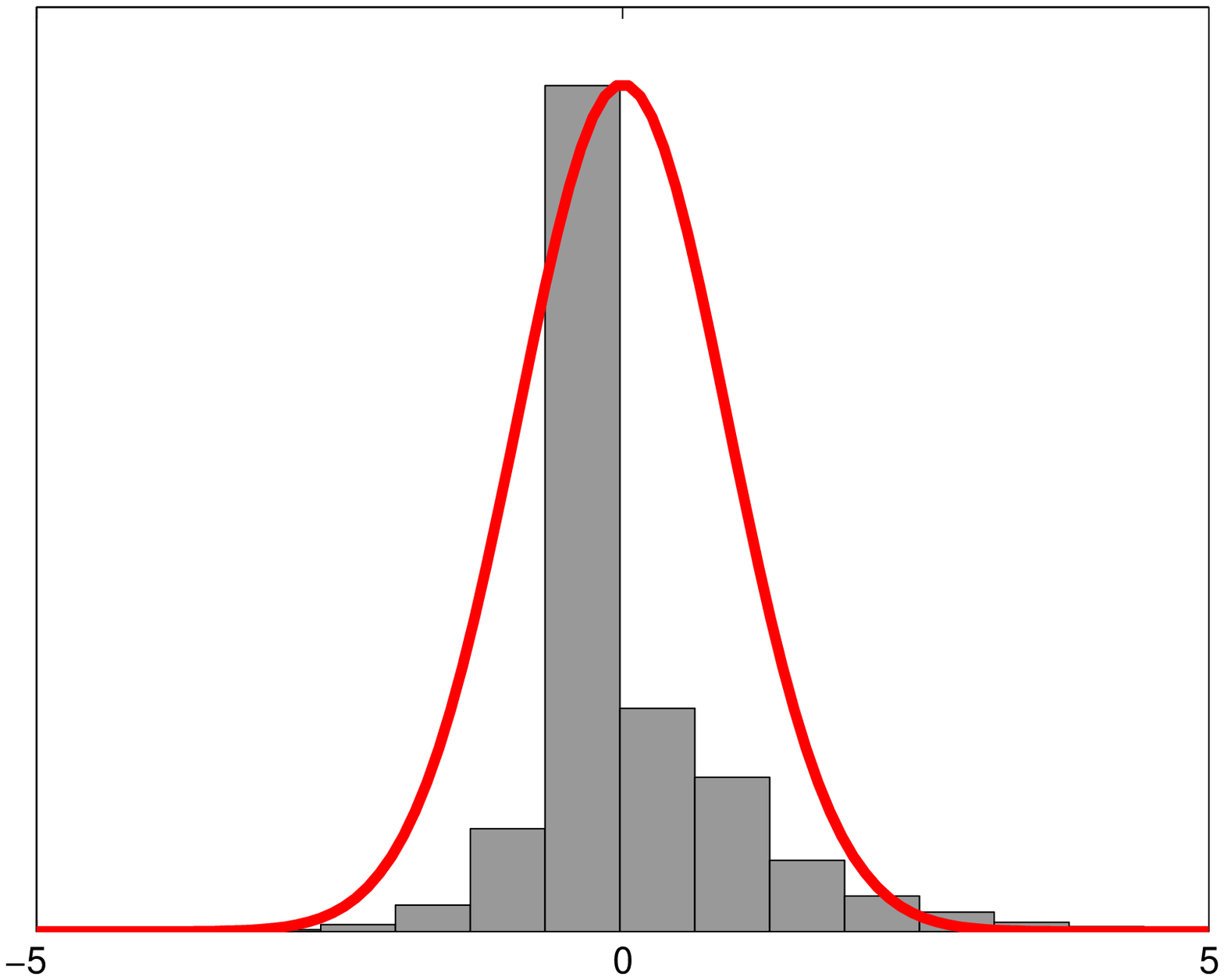}&\hspace{-0.2in}
  \includegraphics[width=0.27\textwidth]{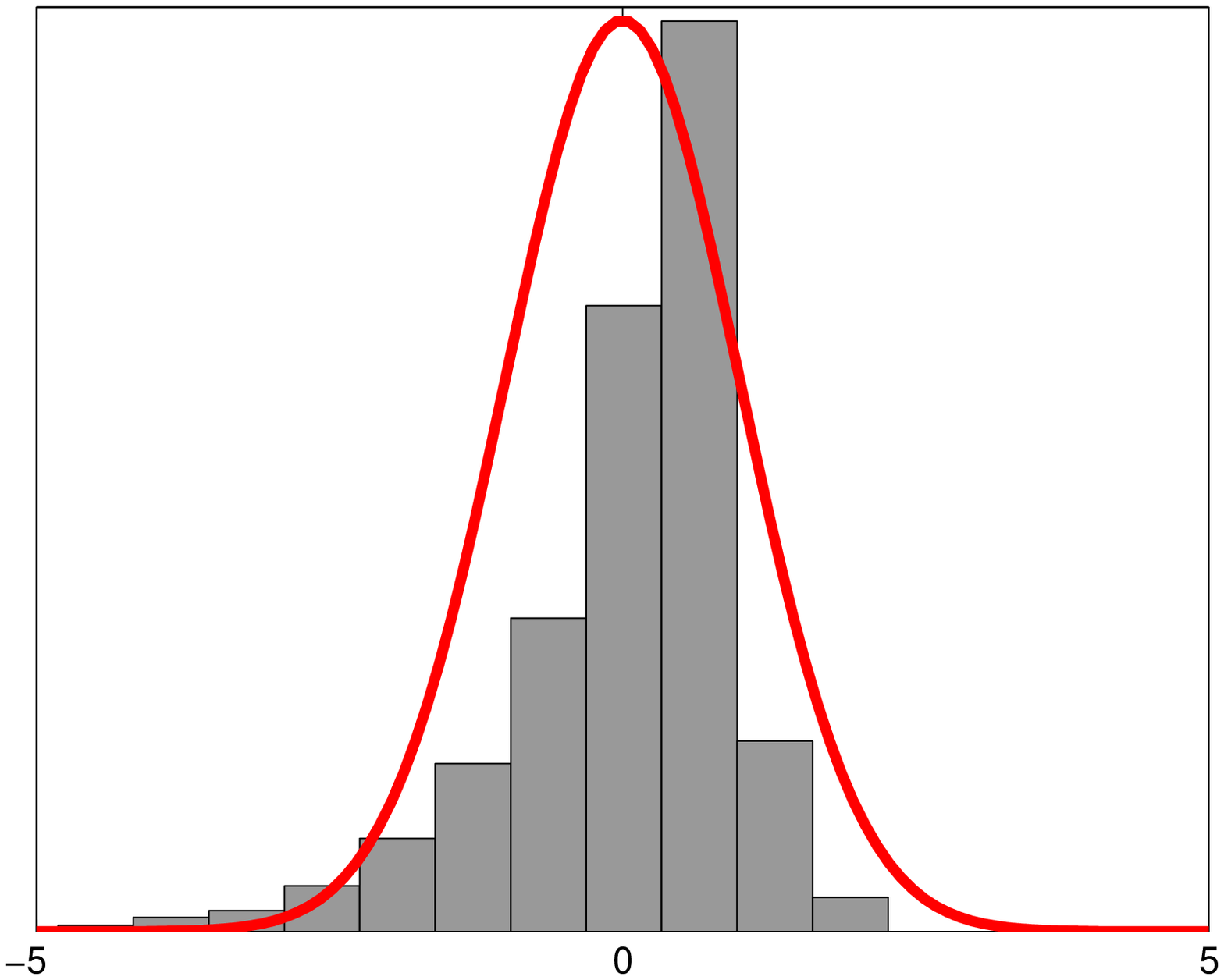}&\hspace{-0.2in}
  \includegraphics[width=0.27\textwidth]{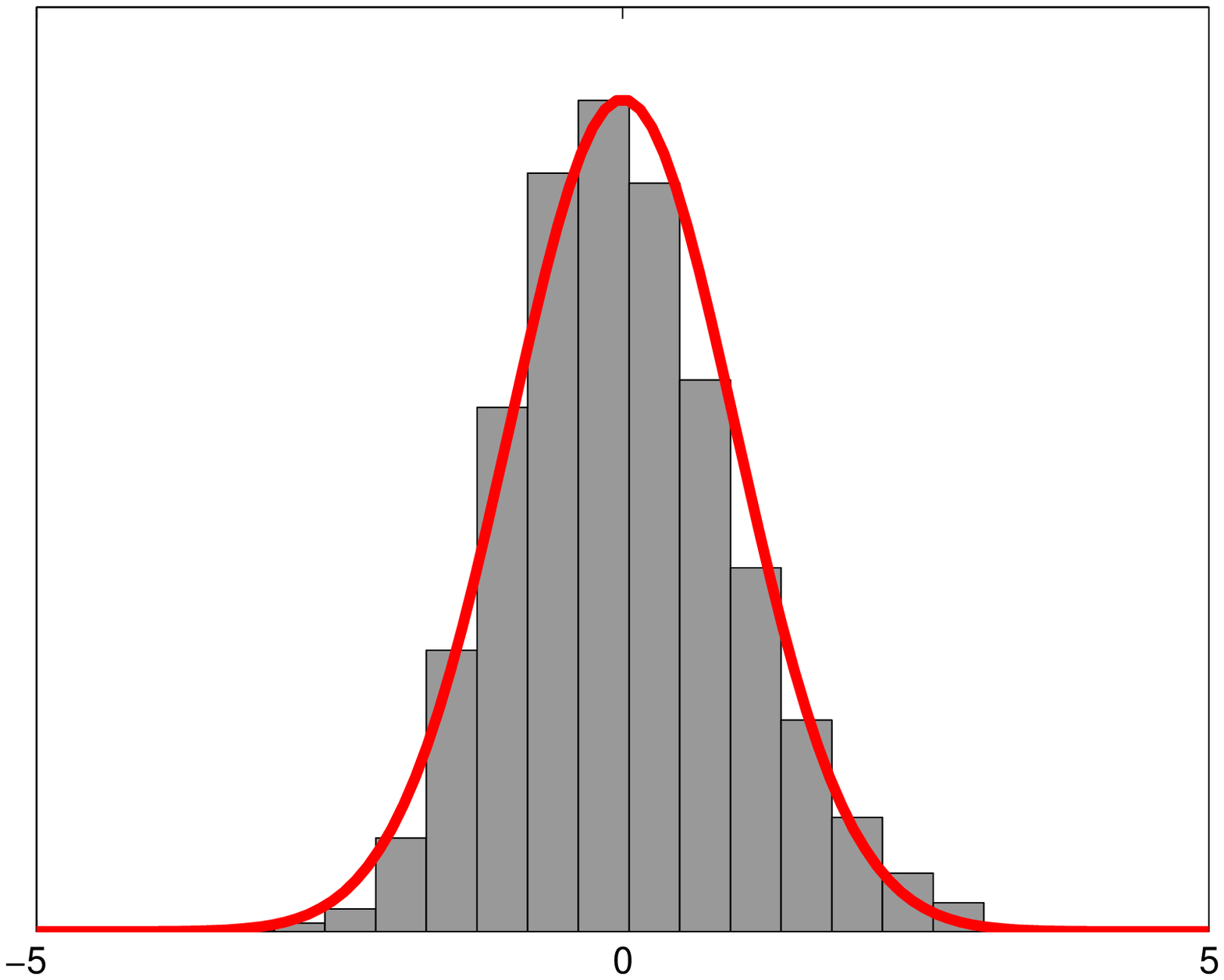}\\
&&{\scriptsize MNIST handwritten digit images}&&
\end{tabular}
  \caption{\small Centered histograms of $f_{\theta}(X)|\{Y=1\}$ overlayed with the pdf of a fitted Gaussian for multiple $\theta$ vectors (four rows: Fisher's LDA, logistic regression, $l_2$ regularized logistic regression, and $l_1$ regularized logistic regression-all regularization parameters were selected by cross validation) and datasets (columns: RCV1 text data \citep{lewis04rcv}, MNIST digit images, and face images \citep{Pham-etal-2002}). For uniformity we subtracted the empirical mean and divided by the empirical standard deviation. The twelve panels show that even in moderate dimensionality (RCV1: 1000 top words, MNIST digits: 784 pixels, face images: 400 pixels) the assumption that $f_{\theta}(X)|Y$ is normal holds well for fitted $\theta$ values (except perhaps for $l_1$ regularization in the last row which promotes sparse $\theta$).}\label{fig:CLT2}
\end{figure}

From a theoretical standpoint normality may be argued using a central limit theorem. We examine below several progressingly more general central limit theorems and discuss whether these theorems are likely to hold in practice for high dimensional data. The original central limit theorem states that $\sum_{i=1}^d Z_i$ is approximately normal for large $d$ if $Z_i$ are iid.
\begin{prop}[de-Moivre]
  If $Z_i, i\in \mathbb{N}$ are iid with expectation $\mu$ and   variance $\sigma^2$ and $\bar{Z}_d=d^{-1}\sum_{i=1}^d Z_i$ then we   have the following convergence in distribution
  \[ \sqrt{d}(\bar{Z}_d -\mu)/\sigma \tood N(0,1) \quad \text{as }   d\to\infty.\]
\end{prop}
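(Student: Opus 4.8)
The plan is to prove this via the characteristic function together with L\'evy's continuity theorem, which is the standard route that succeeds using only the two stated moments (the moment generating function need not exist). First I would reduce to the standardized case by setting $W_i \defeq (Z_i - \mu)/\sigma$, so that the $W_i$ are iid with $\E W_i = 0$ and $\E W_i^2 = 1$, and the target statistic becomes $T_d = \sqrt{d}(\bar Z_d - \mu)/\sigma = d^{-1/2}\sum_{i=1}^d W_i$. It then suffices to show $T_d \tood N(0,1)$.

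Next I would work with the characteristic function $\phi(t) \defeq \E e^{itW_1}$. By independence and the scaling $T_d = d^{-1/2}\sum_{i=1}^d W_i$, the characteristic function of $T_d$ factors as $\phi_{T_d}(t) = [\phi(t/\sqrt d)]^d$. The core of the argument is a second-order expansion of $\phi$ near the origin: since $W_1$ has mean zero and unit variance, one has $\phi(s) = 1 - s^2/2 + o(s^2)$ as $s \to 0$. Substituting $s = t/\sqrt d$ gives $\phi(t/\sqrt d) = 1 - t^2/(2d) + o(1/d)$, and raising to the $d$-th power yields $\phi_{T_d}(t) \to e^{-t^2/2}$ for every fixed $t$. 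Since $e^{-t^2/2}$ is the characteristic function of $N(0,1)$ and is continuous at the origin, L\'evy's continuity theorem delivers $T_d \tood N(0,1)$, which completes the proof after undoing the standardization.

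The main obstacle is establishing the expansion $\phi(s) = 1 - s^2/2 + o(s^2)$ from finiteness of the second moment alone, since we may neither assume higher moments nor differentiate $\phi$ three times. I would handle this through the elementary bound $|e^{ix} - (1 + ix - x^2/2)| \le \min(|x|^3/6,\, x^2)$; taking expectations with $x = sW_1$ and invoking dominated convergence (the integrand is dominated by $s^2 W_1^2$, integrable by the variance assumption) controls the remainder to order $o(s^2)$. A second, more routine technical point is the passage to the limit in $[\,1 - t^2/(2d) + o(1/d)\,]^d \to e^{-t^2/2}$, which I would justify via the complex-logarithm estimate $|\log(1+z) - z| \le |z|^2$ for small $|z|$, applied to the per-factor term $z_d$ with $d\,z_d \to -t^2/2$.
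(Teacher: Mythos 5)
Your argument is correct and complete in outline. Note that the paper itself offers no proof of this proposition: it is the classical Lindeberg--L\'evy central limit theorem, stated as background for the discussion of when $f_\theta(X)\mid Y$ is approximately normal, so there is no in-paper argument to compare against. What you give is the standard characteristic-function proof, and you have correctly identified and handled the only two delicate points: (i) obtaining the expansion $\phi(s)=1-s^2/2+o(s^2)$ from finiteness of the second moment alone, via the inequality $|e^{ix}-(1+ix-x^2/2)|\leq \min(|x|^3/6,\,x^2)$ and dominated convergence (the $|x|^3$ branch alone would require a third moment, so the minimum with $x^2$ is essential); and (ii) justifying $[1-t^2/(2d)+o(1/d)]^d\to e^{-t^2/2}$ for a complex-valued sequence via the logarithm estimate rather than treating it as a real limit. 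Both steps are standard and sound, and L\'evy's continuity theorem correctly closes the argument.
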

As a result, the quantity $\sum_{i=1}^d Z_i$ (which is a linear transformation of $\sqrt{d}(\bar{Z}_d -\mu)/\sigma$) is approximately normal for large $d$. This relatively restricted theorem is unlikely to hold in most practical cases as the data dimensions are often not iid.

A more general CLT does not require the summands $Z_i$ to be identically distributed.
\begin{prop}[Lindberg]
  For $Z_i, i\in \mathbb{N}$ independent with expectation $\mu_i$ and   variance $\sigma^2_i$, and denoting $s_d^2=\sum_{i=1}^d \sigma_i^2$,   we have the following convergence in distribution as $d\to\infty$
  \[ s_d^{-1} \sum_{i=1}^d (Z_i - \mu_i) \tood N(0,1) \] if the   following condition holds for every $\epsilon>0$
  \begin{align}\lim_{d\to\infty} s_d^{-2} \sum_{i=1}^d \E     (Z_i-\mu_i)^2 1_{\{|X_i-\mu_i|>\epsilon s_d\}}=0.
  \label{eq:LindbergCondition}\end{align} 
\end{prop}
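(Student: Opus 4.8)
The plan is to prove the convergence in distribution through characteristic functions together with L\'evy's continuity theorem, so that the whole statement reduces to showing that the characteristic function of the normalized sum converges pointwise to $e^{-t^2/2}$, the characteristic function of $N(0,1)$. First I would center and rescale: set $W_{d,i}=(Z_i-\mu_i)/s_d$, so that $T_d=\sum_{i=1}^d W_{d,i}$ has mean $0$ and variance $1$, and the target becomes $T_d\tood N(0,1)$. By independence the characteristic function factorizes as $\phi_{T_d}(t)=\prod_{i=1}^d \phi_{d,i}(t)$ with $\phi_{d,i}(t)=\E e^{it W_{d,i}}$, so it suffices to show $\prod_{i=1}^d \phi_{d,i}(t)\to e^{-t^2/2}$ for each fixed $t$.

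The core estimate is the Taylor bound $|e^{ix}-(1+ix-x^2/2)|\le \min(|x|^3/6,\,|x|^2)$. Applying it with $x=tW_{d,i}$ and taking expectations (using $\E W_{d,i}=0$ and $\E W_{d,i}^2=\sigma_i^2/s_d^2$) gives $\phi_{d,i}(t)=1-\tfrac{t^2\sigma_i^2}{2s_d^2}+\rho_{d,i}$ with $|\rho_{d,i}|\le \E\min(|tW_{d,i}|^3/6,\,|tW_{d,i}|^2)$. I would bound $\sum_i|\rho_{d,i}|$ by splitting each expectation at the threshold $|W_{d,i}|\le\epsilon$ versus $|W_{d,i}|>\epsilon$: on the first event the cubic bound yields a contribution $\le \tfrac{|t|^3\epsilon}{6}\sum_i \sigma_i^2/s_d^2=\tfrac{|t|^3\epsilon}{6}$, while on the second event the quadratic bound yields $\le t^2 s_d^{-2}\sum_i \E(Z_i-\mu_i)^2 1_{\{|Z_i-\mu_i|>\epsilon s_d\}}$, which is exactly $t^2$ times the Lindeberg quantity in \eqref{eq:LindbergCondition} and so vanishes as $d\to\infty$. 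Sending $d\to\infty$ first and then $\epsilon\to0$ gives $\sum_i|\rho_{d,i}|\to0$.

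To pass from the per-term expansion to the product I would use the elementary inequality $\bigl|\prod_i a_i-\prod_i b_i\bigr|\le \sum_i|a_i-b_i|$, valid whenever all $|a_i|,|b_i|\le1$. This compares $\prod_i\phi_{d,i}(t)$ with $\prod_i\bigl(1-t^2\sigma_i^2/(2s_d^2)\bigr)$, the gap being $\le\sum_i|\rho_{d,i}|\to0$. It remains to show $\prod_i\bigl(1-t^2\sigma_i^2/(2s_d^2)\bigr)\to e^{-t^2/2}$, which needs the uniform negligibility $\max_{i\le d}\sigma_i^2/s_d^2\to0$. I would derive this from \eqref{eq:LindbergCondition} directly, since for any $\epsilon$ one has $\sigma_i^2/s_d^2\le \epsilon^2+s_d^{-2}\sum_j\E(Z_j-\mu_j)^2 1_{\{|Z_j-\mu_j|>\epsilon s_d\}}$ uniformly in $i$. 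With negligibility in hand, $\log\prod_i\bigl(1-t^2\sigma_i^2/(2s_d^2)\bigr)=-\tfrac{t^2}{2}\sum_i \sigma_i^2/s_d^2+o(1)=-t^2/2+o(1)$ via $\log(1-u)=-u+O(u^2)$, and L\'evy's continuity theorem then upgrades the pointwise convergence of characteristic functions to convergence in distribution.

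The main obstacle is the remainder control in the second paragraph: the split of the Taylor remainder at the threshold $\epsilon s_d$ is precisely engineered so that the tail piece matches the Lindeberg sum while the bulk piece stays uniformly small, and getting the order of limits right (fix $\epsilon$, let $d\to\infty$, then $\epsilon\to0$) is what makes the estimate close. A secondary technical point is deducing uniform asymptotic negligibility from \eqref{eq:LindbergCondition}, which is needed both for the product-to-exponential step and to justify discarding higher-order terms. An alternative route avoiding characteristic functions altogether is Lindeberg's replacement method: bound $|\E g(T_d)-\E g(G)|$ for smooth bounded $g$ and $G\sim N(0,1)$ by swapping the summands $W_{d,i}$ for matching independent Gaussians one at a time and controlling each swap by a third-order Taylor expansion of $g$, where the same Lindeberg splitting governs the accumulated error. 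I would keep the characteristic-function argument as the primary proof, being the most standard and self-contained.
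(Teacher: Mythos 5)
Your argument is correct, and it is the standard characteristic-function proof of the Lindeberg CLT. Note, however, that the paper does not prove this proposition at all: it is quoted as a classical background result (with the classical references implicit), so there is no in-paper proof to compare against; what you have supplied is the textbook Lindeberg--L\'evy--Feller argument. All the essential ingredients are present and in the right places: the Taylor bound $|e^{ix}-(1+ix-x^2/2)|\leq \min(|x|^3/6,\,|x|^2)$, the split of the remainder at the threshold $\epsilon s_d$ so that the bulk term is controlled by $|t|^3\epsilon/6$ and the tail term by the Lindeberg sum, the order of limits ($d\to\infty$ first, then $\epsilon\to 0$), the telescoping inequality $|\prod_i a_i-\prod_i b_i|\leq\sum_i|a_i-b_i|$, and the deduction of uniform asymptotic negligibility $\max_{i\leq d}\sigma_i^2/s_d^2\to 0$ from the Lindeberg condition. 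One small ordering point: the telescoping inequality requires $|1-t^2\sigma_i^2/(2s_d^2)|\leq 1$, which for fixed $t$ only holds once $\max_i\sigma_i^2/s_d^2\leq 4/t^2$; so you should establish uniform negligibility \emph{before} invoking the product comparison, not only for the final product-to-exponential step. This is a presentational reordering, not a gap. You also silently correct the typo in the statement (the indicator should read $|Z_i-\mu_i|>\epsilon s_d$ rather than $|X_i-\mu_i|>\epsilon s_d$), which is the right reading.
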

This CLT is more general as it only requires that the data dimensions be independent. The condition \eqref{eq:LindbergCondition} is relatively mild and specifies that contributions of each of the $Z_i$ to the variance $s_d$ should not dominate it. Nevertheless, the Lindberg CLT is still inapplicable for dependent data dimensions.

More general CLTs replace the condition that $Z_i, i\in\mathbb{N}$ be independent with the notion of $m(k)$-dependence.
\begin{defn}
  The random variables $Z_i,i\in\mathbb{N}$ are said to be   $m(k)$-dependent if whenever $s-r>m(k)$ the two sets   $\{Z_1,\ldots,Z_r\}$, $\{Z_s,\ldots,Z_k\}$ are independent.
\end{defn}
An early CLT for $m(k)$-dependent RVs is \citep{Hoeffding1948}. Below is a slightly weakened version of the CLT in  \citep{Berk1973}.
\begin{prop}[Berk] \label{prop:Berk} For each $k\in\mathbb{N}$ let   $d(k)$ and $m(k)$ be increasing sequences and suppose that   $Z_1^{(k)},\ldots,Z_{d(k)}^{(k)}$ is an $m(k)$-dependent sequence of   random variables. If
  \begin{enumerate}
  \item $\E|Z_i^{(k)}|^2 \leq M$ for all $i$ and $k$
  \item $\Var (Z_{i+1}^{(k)} +\ldots+ Z_j^{(k)})\leq (j-i)K$ for all     $i,j,k$
  \item $\lim_{k\to\infty} \Var (Z_{1}^{(k)} +\ldots+     Z_{d(k)}^{(k)})/d(k)$ exists and is non-zero
  \item $\lim_{k\to\infty} m^2(k)/d(k)=0$
  \end{enumerate}
  then $\frac{\sum_{i=1}^{d(k)} Z_i^{(k)}}{\sqrt{d(k)}}$ is   asymptotically normal as $k\to\infty$.
\end{prop}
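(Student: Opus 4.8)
The plan is to prove asymptotic normality by the classical big-block/small-block decomposition, which converts the $m(k)$-dependent sum into a sum of \emph{independent} block-sums to which a Lindeberg-type CLT applies. Throughout write $S_k=\sum_{i=1}^{d(k)}Z_i^{(k)}$ and, centering each variable (condition~1 bounds $|\E Z_i^{(k)}|\le\sqrt M$, so centering is harmless), assume $\E Z_i^{(k)}=0$. By condition~3 set $\sigma^2=\lim_k \Var(S_k)/d(k)>0$; the goal is $S_k/\sqrt{d(k)}\tood N(0,\sigma^2)$.

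First I would choose block lengths. Partition $\{1,\dots,d(k)\}$ into alternating ``big'' blocks of length $p(k)$ and ``small'' buffer blocks of length $q(k)=m(k)+1$, giving $r(k)\approx d(k)/(p(k)+q(k))$ blocks of each type. The choice $p(k)=\lfloor\sqrt{d(k)\,m(k)}\rfloor$ works: condition~4 ($m^2/d\to0$) gives $m/d\to0$, hence $q/p=\Theta(\sqrt{m/d})\to0$, $p/d=\Theta(\sqrt{m/d})\to0$, and $r\approx\sqrt{d/m}\to\infty$. Let $U_j$ and $V_j$ denote the sums over the $j$-th big and small block, so $S_k=\sum_j U_j+\sum_j V_j$. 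Two big blocks are separated by a buffer of length $q>m(k)$ and two small blocks by a big block of length $p>m(k)$; by the standard consequence of $m(k)$-dependence (disjoint index intervals pairwise separated by more than $m(k)$ are jointly independent, obtained from the stated definition by peeling off one block at a time), the $U_j$ are mutually independent and likewise the $V_j$.

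Next I would discard the buffers. Since the $V_j$ are independent and each is a consecutive sum of length $q$, condition~2 gives $\Var(V_j)\le qK$, so $\Var(\sum_j V_j)=\sum_j\Var(V_j)\le r\,qK$ and $\Var(\sum_j V_j)/d\le rqK/d=\Theta(\sqrt{m/d})\to0$; thus $\sum_j V_j/\sqrt{d}\to0$ in probability and, by Slutsky, it suffices to treat $\sum_j U_j/\sqrt d$. A covariance bookkeeping step then identifies the limiting variance: $\Var(\sum_j U_j)\le 2\Var(S_k)+2\Var(\sum_j V_j)=O(d)$, so by Cauchy--Schwarz $|\mathrm{Cov}(\sum_j U_j,\sum_j V_j)|\le\sqrt{O(d)\cdot o(d)}=o(d)$, and expanding $\Var(S_k)$ gives $\sum_j\Var(U_j)=\Var(\sum_j U_j)=\Var(S_k)-o(d)$, whence $\sum_j\Var(U_j)/d\to\sigma^2$.

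Finally, the array $\{U_j/\sqrt d\}_j$ is a triangular array of independent, mean-zero, asymptotically negligible summands ($\max_j\Var(U_j)/d\le pK/d\to0$) with total variance tending to $\sigma^2$, so by the Lindeberg--Feller theorem it only remains to verify, for every $\epsilon>0$, the Lindeberg condition $d^{-1}\sum_j\E[U_j^2\,\mathbf 1\{|U_j|>\epsilon\sqrt d\}]\to0$. This is the heart of the proof and the main obstacle: with only the second-moment hypotheses the blocks $U_j$ need not have controllable higher moments, so I would verify Lindeberg through truncation, replacing $Z_i^{(k)}$ by its truncation at level $c(k)\to\infty$ (which preserves $m(k)$-dependence). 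Bounding $d^{-1}\sum_j\E[U_j^2\mathbf 1\{|U_j|>\epsilon\sqrt d\}]\le d^{-1}\,r\,p\,\max_j\E[(U_j/\sqrt p)^2\mathbf 1\{|U_j/\sqrt p|>\epsilon\sqrt{d/p}\}]$ and using $d/p\to\infty$, the right-hand side vanishes provided the normalized block sums $\{(U_j/\sqrt p)^2\}$ are uniformly integrable. Establishing this uniform integrability for normalized sums of $m(k)$-dependent variables while $m(k)\to\infty$ is the delicate point: one bounds a $(2+\delta)$ (or fourth) moment of the \emph{bounded} blocks via the $m(k)$-dependent covariance structure together with condition~2, and shows the truncation remainder $S''$ has variance $o(d)$ through $\Var(S'')\le d\,(2m+1)\sup_{i,k}\E[(Z_i^{(k)})^2\mathbf 1\{|Z_i^{(k)}|>c\}]$. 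Making both estimates vanish simultaneously forces the truncation level and $m(k)$ to be balanced, and it is precisely condition~4, $m^2(k)/d(k)\to0$, that opens a window in which this balance is achievable; carrying out that balance carefully is the crux of the argument.
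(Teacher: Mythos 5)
The paper does not actually prove this proposition: it is quoted (as ``a slightly weakened version'') of a result of Berk (1973), so there is no in-paper argument to compare against. Your overall strategy --- big blocks separated by buffers of length $m(k)+1$, discarding the buffers via a variance bound, establishing mutual independence of the block sums by peeling, and applying Lindeberg--Feller to the resulting triangular array --- is the standard route and is essentially Berk's own method; your block-length choice, the independence argument, and the variance bookkeeping are all correct.

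The genuine gap is the one you flag and then leave unexecuted: the Lindeberg condition for the blocks. Under the hypotheses as printed it cannot be closed, because the statement with only a uniform second-moment bound is false. Concretely, let $m(k)$ be any slowly increasing sequence with $m^2(k)/d(k)\to 0$ and let $Z_1^{(k)},\dots,Z_{d(k)}^{(k)}$ be independent with $Z_i^{(k)}=d(k)$ with probability $M/d(k)^2$ and $0$ otherwise. Conditions 1--4 all hold ($\E|Z_i^{(k)}|^2=M$, $\Var(S_k)/d(k)\to M>0$), yet $P(S_k\neq 0)\le M/d(k)\to 0$, so $S_k/\sqrt{d(k)}\to 0$ in probability and no nondegenerate normal limit exists, even after recentering. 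The failure sits exactly where your truncation plan stalls: a uniform bound on $\E|Z_i^{(k)}|^2$ does not imply uniform integrability of $\{(Z_i^{(k)})^2\}$, so $\sup_{i,k}\E[(Z_i^{(k)})^2 \mathbf{1}\{|Z_i^{(k)}|>c\}]$ need not tend to $0$ as $c\to\infty$, and your remainder bound $d(2m+1)\sup_{i,k}\E[(Z_i^{(k)})^2\mathbf{1}\{|Z_i^{(k)}|>c\}]$ need not be $o(d)$ for any choice of truncation level; the ``window'' you hope condition 4 opens does not exist. Berk's actual theorem assumes $\E|Z_i^{(k)}|^{2+\delta}\le M$ for some $\delta>0$, with condition 4 strengthened to couple $m(k)$, $d(k)$, and $\delta$; that hypothesis supplies the quantitative tail bound $\E[Z_i^2\mathbf{1}\{|Z_i|>c\}]\le Mc^{-\delta}$ needed to balance the truncation level against $m(k)$. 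With the moment condition restored your outline can be carried through; without it, no argument can.
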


Proposition~\ref{prop:Berk} states that under mild conditions the sum of $m(k)$-dependent RVs is asymptotically normal. If $m(k)$ is a constant i.e., $m(k)=m$, $m(k)$-dependence implies that a $Z_i$ may only depend on its neighboring dimensions. Or in other words, dimensions that are removed from each other are independent. The full power of Proposition~\ref{prop:Berk} is invoked when $m(k)$ grows with $k$ relaxing the independence restriction as the dimensionality grows. Intuitively, the dependency of the summands is not fixed to a certain order, but it cannot grow too rapidly. 

A more realistic variation of $m(k)$ dependence where the dependency of each variable is specified using a dependency graph (rather than each dimension depends on neighboring dimensions) is advocated in a number of papers, including the following recent result by \cite{rinott1994}. 
\begin{defn} \label{def:gd}
A graph  $\mathcal{G} = \left(\mathcal{V},\mathcal{E}\right)$  indexing random variables is called a dependency graph if for any pair of disjoint subsets of $\mathcal{V}$, $A_1$ and $A_2$ such that no edge in $\mathcal{E}$ has one endpoint in $A_1$ and the other in $A_2$, we have independence between $\{Z_i: i \in A_1\}$ and $\{Z_i: i \in A_2\}$. The degree $d(v)$ of a vertex is the number of edges connected to it and the maximal degree is $\max_{v \in \mathcal{V}} d(v)$. 
\end{defn}
\begin{prop}[Rinott] \label{lab:prop1}
Let $Z_1, \ldots , Z_n$ be random variables having a dependency graph whose maximal degree is strictly less than $D$, satisfying $|Z_i-\E Z_i| \leq B$ a.s., $\forall i$, $\E (\sum_{i=1}^{n} Z_i) = \lambda$ and $\Var (\sum_{i=1}^{n} Z_i) = \sigma^2 >0$, Then for any $w \in \mathbb{R}$,
\[
\left|P\left(\frac{\sum_{i=1}^{n}Z_i-\lambda}{\sigma} \leq w \right) -\Phi (w) \right	| \leq \frac{1}{\sigma} \left(\frac{1}{\sqrt{2\pi}}DB + 16 \left(\frac{n}{\sigma^2}\right)^{1/2} D^{3/2}B^2 +10 \left(\frac{n}{\sigma^2}\right)D^2B^3	\right)
\] 
\end{prop}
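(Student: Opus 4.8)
The plan is to prove this Berry--Esseen-type bound via Stein's method, the standard device for normal approximation of sums of locally dependent random variables. First I would reduce to a clean normalized setting: put $\xi_i = (Z_i - \E Z_i)/\sigma$ and $W = \sum_{i=1}^n \xi_i$, so that $\E W = 0$, $\Var W = 1$, and $|\xi_i| \le B/\sigma =: \beta$ almost surely. The quantity to control is $P(W \le w) - \Phi(w)$. The entry point of Stein's method is the characterization that $W$ is standard normal iff $\E[f'(W) - W f(W)] = 0$ for all smooth bounded $f$; concretely, for the fixed threshold $w$ I would introduce the solution $f = f_w$ of the Stein equation $f'(s) - s f(s) = \mathbf{1}_{\{s \le w\}} - \Phi(w)$, so that $P(W \le w) - \Phi(w) = \E[f_w'(W) - W f_w(W)]$ exactly. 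Thus the whole problem becomes bounding the right-hand side. I would also record at this stage the standard a priori bounds on the Stein solution that drive all the estimates, namely $0 \le f_w(s) \le \sqrt{2\pi}/4$ and $|f_w'(s)| \le 1$, together with the oscillation estimates for $s \mapsto s f_w(s)$; these are where the constant $1/\sqrt{2\pi}$ and the remaining numerical constants ultimately originate.

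The core of the argument is to exploit the dependency graph. For each $i$ let $N_i$ denote the closed neighborhood of vertex $i$ (so $|N_i| \le D$ by the maximal-degree hypothesis), and set $W_i = W - \sum_{j \in N_i} \xi_j$; by the defining property of a dependency graph (Definition~\ref{def:gd}), $\xi_i$ is independent of $W_i$. I would then write $\E[W f_w(W)] = \sum_{i=1}^n \E[\xi_i f_w(W)]$, expand $f_w(W)$ around $W_i$ with $V_i := \sum_{j \in N_i}\xi_j = W - W_i$, and use $\E[\xi_i f_w(W_i)] = \E[\xi_i]\,\E[f_w(W_i)] = 0$. The surviving first-order term is then matched against $\E[f_w'(W)]$, after rewriting $1 = \sum_i \E[\xi_i V_i]$ (which holds because $\sum_i\sum_{j\in N_i}\E[\xi_i\xi_j] = \sum_{i,j}\E[\xi_i\xi_j] = \Var W = 1$, the off-neighborhood covariances vanishing by independence).

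Bounding the leftover terms with $|\xi_i| \le \beta$, $|N_i| \le D$, and the derivative bounds on $f_w$, and then summing over the $n$ vertices, produces the two higher-order contributions of the claimed bound: a third-order term of size $\sum_i \beta\,(D\beta)^2 \asymp n D^2 \beta^3 = (n/\sigma^2)\,D^2 B^3/\sigma$, and a second-order term whose $\sqrt{n}$ scaling comes from applying Cauchy--Schwarz to the fluctuation $\E[f_w'(W)(1 - \sum_i \xi_i V_i)] \le \E\,|1 - \sum_i\xi_i V_i| \le (\Var \sum_i \xi_i V_i)^{1/2}$, giving $\asymp \sqrt{n}\,D^{3/2}\beta^2 = (n/\sigma^2)^{1/2}\,D^{3/2}B^2/\sigma$. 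These are precisely the last two terms.

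I expect the main obstacle to be the non-smoothness of the test function: because $\mathbf{1}_{\{s \le w\}}$ is discontinuous, the Stein solution $f_w$ has a derivative with a jump at $w$, and the naive Taylor expansion cannot control the error near the threshold. The remedy, and the delicate part of the proof, is a concentration inequality bounding $P(a \le W \le b)$ in terms of $b-a$ plus a model-dependent error; this substitutes for the missing smoothness and is exactly what generates the leading first-order term $\tfrac{1}{\sqrt{2\pi}}DB/\sigma$ together with the comparatively large absolute constants $16$ and $10$. Carrying this concentration estimate through the neighborhood decomposition, and finally taking the supremum over $w$, assembles the three contributions into the stated uniform bound.
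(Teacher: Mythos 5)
The paper does not prove this proposition: it is imported from \citep{rinott1994} as a quoted external result, so there is no internal proof to compare against. Your outline reproduces the approach of that cited source --- Stein's method with a dependency-neighborhood decomposition --- and its skeleton is sound: the reduction to $W=\sum_i\xi_i$ with $\Var W=1$, the identity $P(W\le w)-\Phi(w)=\E[f_w'(W)-Wf_w(W)]$, the independence of $\xi_i$ from $W_i=W-V_i$ guaranteed by Definition~\ref{def:gd}, the identity $\sum_i\E[\xi_i V_i]=\Var W=1$ (off-neighborhood terms vanish since the $\xi_j$ are centered), and the Cauchy--Schwarz step producing the $(\Var\sum_i\xi_iV_i)^{1/2}$ fluctuation are all correct, and the three contributions you isolate do scale as $D\beta$, $\sqrt{n}\,D^{3/2}\beta^2$ and $nD^2\beta^3$ with $\beta=B/\sigma$, matching the shape of the stated bound.

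The gap is that the proposition is a Berry--Esseen bound with explicit constants $1/\sqrt{2\pi}$, $16$ and $10$, and your argument as written delivers only the form $C_1D\beta+C_2\sqrt{n}\,D^{3/2}\beta^2+C_3nD^2\beta^3$ with unspecified $C_i$. Two pieces are asserted rather than carried out. First, the bound $\Var\left(\sum_i\xi_iV_i\right)=O(nD^3\beta^4)$ needed for the exponent $3/2$ on $D$ requires counting, via the dependency graph, the quadruples $(i,k,j,l)$ with $k\in N_i$, $l\in N_j$ for which $\E[\xi_i\xi_k\xi_j\xi_l]\ne\E[\xi_i\xi_k]\,\E[\xi_j\xi_l]$; a cruder count gives $O(nD^4\beta^4)$ and hence $D^2$ in place of $D^{3/2}$, so this step is not automatic. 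Second, the concentration inequality for $P(a\le W\le b)$ --- which you correctly identify as the crux, since $f_w'$ jumps at $w$ and the naive expansion fails near the threshold --- is exactly where the leading term $\frac{1}{\sqrt{2\pi}}DB/\sigma$ and the constants $16$ and $10$ are manufactured (a smoothing of the indicator at scale $DB/\sigma$ against the normal density bound $1/\sqrt{2\pi}$), and it is the genuinely delicate part of Rinott's argument; flagging it is not the same as proving it. As a roadmap to the cited theorem your proposal is faithful, but as a proof of the statement with its stated constants it is incomplete precisely at the step you defer.
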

The above theorem states a stronger result than convergence in distribution to a Gaussian in that it states a uniform rate of convergence of the CDF. Such results are known in the literature as Berry Essen bounds. When $D$ and $B$ are bounded and $\Var (\sum_{i=1}^{n} Z_i)=O(n)$ it yields a CLT with an optimal convergence rate of $n^{-1/2}$.

The question of whether the above CLTs apply in practice is a delicate one. For text one can argue that the appearance of a word depends on some words but is independent of other words. Similarly for images it is plausible to say that the brightness of a pixel is independent of pixels that are spatially far removed from it. In practice one needs to verify the normality assumption empirically, which is simple to do by comparing the empirical histogram of $f_{\theta}(X)$ with that of a fitted mixture of Gaussians. As the figures above indicate this holds for text and image data for most values of $\theta$, assuming it
 is not sparse. 

\subsection{Unsupervised Consistency of $\hat R_n(\theta)$}  \label{sec:consistency} 

We start with proving identifiability of the maximum likelihood estimator (MLE) for a mixture of two Gaussians with known ordering of mixture proportions. Invoking classical consistency results in conjunction with identifiability we show consistency of the MLE estimator for $(\mu,\sigma)$ parameterizing the distribution of $f_{\theta}(X)|Y$. As a result consistency of the estimator $\hat R_n(\theta)$ follows. 

\begin{defn}
A parametric family $\{p_{\alpha}:\alpha\in A\}$ is identifiable   when $p_{\alpha}(x)= p_{\alpha'}(x), \forall x$ implies   $\alpha=\alpha'$.
\end{defn}

\begin{prop} \label{prop:identifiability} Assuming known label marginals with $p(Y=1)\neq p(Y=-1)$ the Gaussian mixture family
\begin{align} \label{eq:GaussianMix}
    p_{\mu,\sigma}(x) = p(y=1) N(x\,; \mu_1,\sigma_1^2)  + p(y=-1) N(x\,; \mu_{-1},\sigma_{-1}^2)
\end{align}
is identifiable.
\end{prop}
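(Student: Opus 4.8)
The plan is to reduce the claim to the classical identifiability of finite Gaussian mixtures up to a permutation of the components, and then to use the hypothesis $p(Y=1)\neq p(Y=-1)$ to eliminate the single remaining permutation, namely the label swap. Write the two candidate parameters as $\alpha=(\mu_1,\mu_{-1},\sigma_1,\sigma_{-1})$ and $\alpha'=(\mu_1',\mu_{-1}',\sigma_1',\sigma_{-1}')$, and suppose $p_{\alpha}(x)=p_{\alpha'}(x)$ for every $x$. First I would invoke Teicher's classical identifiability theorem (Teicher, 1963), which states that the class of finite mixtures of univariate normals is identifiable: if two such mixtures have equal densities, then they have the same number of distinct components and the multisets of (weight, mean, variance) triples coincide. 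Applied here this yields the multiset equality $\{(p(Y=1),\mu_1,\sigma_1^2),(p(Y=-1),\mu_{-1},\sigma_{-1}^2)\}=\{(p(Y=1),\mu_1',\sigma_1'^2),(p(Y=-1),\mu_{-1}',\sigma_{-1}'^2)\}$.

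The core of the argument is the second step, where I match the two representations. A priori there are two bijections between the triples of $\alpha$ and those of $\alpha'$. The identity matching immediately gives $\mu_1=\mu_1'$, $\sigma_1=\sigma_1'$, $\mu_{-1}=\mu_{-1}'$, $\sigma_{-1}=\sigma_{-1}'$, hence $\alpha=\alpha'$. The swap would instead require the triple carrying weight $p(Y=1)$ to coincide with the triple carrying weight $p(Y=-1)$; but any matching of the multisets must agree in the first (weight) coordinate, so the swap forces $p(Y=1)=p(Y=-1)$, contradicting the hypothesis. Only the identity matching survives, and therefore $\alpha=\alpha'$, which is exactly identifiability.

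Two points need care. The degenerate case $(\mu_1,\sigma_1)=(\mu_{-1},\sigma_{-1})$ must be handled separately, since then $p_{\alpha}$ collapses to a single Gaussian; here the component-count part of Teicher's theorem forces $\alpha'$ to collapse as well, and univariate-normal identifiability pins the common mean and variance, so again $\alpha=\alpha'$. I expect the only genuine obstacle to be articulating precisely why the label-swap symmetry is the \emph{sole} source of non-identifiability and why $p(Y=1)\neq p(Y=-1)$ suffices to break it; the rest is bookkeeping. If one prefers a self-contained argument that avoids citing Teicher, his multiset conclusion can be reproved directly from the equality of characteristic functions $p(Y=1)e^{i\mu_1 t-\sigma_1^2 t^2/2}+p(Y=-1)e^{i\mu_{-1} t-\sigma_{-1}^2 t^2/2}$ for the two parameter choices, by peeling off components in order of decreasing variance (and, among equal variances, decreasing mean) through the asymptotics of both sides as $t\to\infty$; the distinct-weight hypothesis then enters exactly as above to fix the correspondence.
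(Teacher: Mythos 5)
Your proposal is correct and follows essentially the same route as the paper's proof: invoke Teicher's (1963) identifiability of finite Gaussian mixtures up to permutation of components, then use $p(Y=1)\neq p(Y=-1)$ to rule out the label-swap matching and conclude $(\mu,\sigma)=(\mu',\sigma')$. Your treatment is somewhat more careful than the paper's (explicitly noting the degenerate single-component case and the multiset-matching bookkeeping), but the underlying argument is identical.
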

\begin{proof}
It can be shown that the family of Gaussian mixture model with no apriori information about label marginals is identifiable up to a permutation of the labels   $y$ \citep{Teicher1963}. We proceed by assuming with no loss of generality that   $p(y=1)>p(y=-1)$. The alternative case ${p(y=1)}<p(y=-1)$ may be   handled in the same manner. Using the result of \citep{Teicher1963}   we have that if $p_{\mu,\sigma}(x)=p_{\mu',\sigma'}(x)$ for all $x$,   then $(p(y),\mu,\sigma)=(p(y),\mu',\sigma')$ up to a permutation of   the labels. Since permuting the labels violates our assumption   ${p(y=1)}> {p(y=-1)}$ we establish $(\mu,\sigma)=(\mu',\sigma')$ proving identifiability.
\end{proof}
The assumption that $p(y)$ is known is not entirely crucial. It may be relaxed by assuming that it is known whether $p(Y=1)>p(Y=-1)$ or ${p(Y=1)}<{p(Y=-1)}$. Proving Proposition~\ref{prop:identifiability} under this much weaker assumption follows identical lines. 
\begin{prop}\label{prop:GaussianMixConsistency}  
Under the assumptions of Proposition~\ref{prop:identifiability}  the MLE estimates for $(\mu,\sigma)=(\mu_1,\mu_{-1},\sigma_1,\sigma_{-1})$
\begin{align}
(\hat\mu^{(n)},\hat\sigma^{(n)})&=\argmax_{\mu,\sigma}   \ell_{n}(\mu,\sigma)\\ 
\ell_{n}(\mu,\sigma)
&= \sum_{i=1}^n \log \sum_{y^{(i)}\in\{-1,+1\}} p(y^{(i)})   p_{\mu_y,\sigma_y}(f_{\theta}(X^{(i)})|y^{(i)}).  
\end{align}
are consistent i.e., 
$(\hat\mu^{(n)}_1,\hat\mu^{(n)}_{-1},\hat\sigma^{(n)}_1,\hat\sigma^{(n)}_{-1})$ converge as $n\to\infty$ to the true parameter values with   probability 1.
\end{prop}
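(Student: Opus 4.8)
The plan is to reduce the statement to a standard application of Wald's strong consistency theorem for maximum likelihood estimators, leaning on the identifiability already secured in Proposition~\ref{prop:identifiability}. First I would observe that, under the normality assumption, the scalars $Z^{(i)} \defeq f_{\theta}(X^{(i)})$ are i.i.d.\ draws from the mixture density $p_{\mu^*,\sigma^*}$ of \eqref{eq:GaussianMix}, where $(\mu^*,\sigma^*)$ denotes the true parameter vector; since $\theta$ is held fixed throughout, the problem is exactly that of estimating the means and variances of a two-component Gaussian mixture with known, unequal mixing weights from i.i.d.\ univariate data. The log-likelihood $\ell_n$ in the statement is then the ordinary mixture log-likelihood and $(\hat\mu^{(n)},\hat\sigma^{(n)})$ is its maximizer, so the claim is precisely strong consistency of the mixture MLE.

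The core of the argument is to apply the strong law of large numbers to the normalized log-likelihood ratio,
\[
\frac{1}{n}\sum_{i=1}^n \log\frac{p_{\mu,\sigma}(Z^{(i)})}{p_{\mu^*,\sigma^*}(Z^{(i)})} \xrightarrow{\text{a.s.}} -\,\mathrm{KL}\!\left(p_{\mu^*,\sigma^*}\,\|\,p_{\mu,\sigma}\right),
\]
and to note that the right-hand side is $\le 0$ with equality only when $p_{\mu,\sigma}=p_{\mu^*,\sigma^*}$ almost everywhere, which by Proposition~\ref{prop:identifiability} forces $(\mu,\sigma)=(\mu^*,\sigma^*)$. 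Thus every parameter value other than the truth is eventually assigned strictly smaller likelihood; promoting this pointwise separation to a uniform one over the parameter space, so that the maximizer itself (not merely each fixed competitor) is pinned near the truth, is what yields a.s.\ convergence. The routine steps I would carry out are: (i) verify measurability and continuity of $(\mu,\sigma)\mapsto p_{\mu,\sigma}(z)$, which is immediate; (ii) establish the integrability/domination conditions required by Wald, using the Gaussian tails to bound $\E_{\mu^*,\sigma^*}[\sup \log^+ p_{\mu,\sigma}]$ on the relevant region; and (iii) invoke identifiability to conclude the limiting criterion has a unique maximizer.

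The main obstacle is that the natural parameter space $\R^2 \times (0,\infty)^2$ is noncompact and, worse, the Gaussian-mixture likelihood is unbounded: sending one component's mean to a sample point while letting its variance shrink to zero drives $\ell_n$ to $+\infty$, so a literal global maximizer need not exist and Wald's compactness hypothesis fails on the boundary $\sigma_y\to 0$. I would handle this in two stages. For the means and the upper range of the variances a coercivity argument suffices: as $\|\mu\|\to\infty$ or $\sigma_y\to\infty$ the per-sample likelihood is driven below its value at $(\mu^*,\sigma^*)$, so maximizers remain in a bounded region of those coordinates almost surely. The genuine difficulty is the lower boundary $\sigma_y\to 0$; here I would restrict the variances to $\sigma_y \ge c$ for a constant $0<c<\min(\sigma^*_1,\sigma^*_{-1})$, apply Wald on the resulting compact set to obtain a consistent constrained maximizer, and then argue that the unconstrained degenerate configurations are spurious, since a variance collapsing to zero concentrates the fitted density on a vanishing neighborhood of finitely many points and therefore cannot match $p_{\mu^*,\sigma^*}$ in the SLLN limit, excluding such solutions for all large $n$. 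This mirrors the classical treatment of constrained Gaussian-mixture MLE; alternatively one may invoke the Kiefer--Wolfowitz extension of Wald's theorem, which is designed precisely to accommodate this noncompactness and is the cleanest way to close the argument.
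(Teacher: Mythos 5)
Your proposal follows the same skeleton as the paper's proof: treat $z^{(i)}=f_{\theta}(X^{(i)})$ as i.i.d.\ draws from the two-component mixture, invoke the identifiability of Proposition~\ref{prop:identifiability}, and conclude by Wald-type strong consistency of the MLE (the paper cites \citep{Ferguson1996} and stops there). Where you genuinely go further is in recognizing that identifiability alone does not deliver the conclusion: Wald's theorem also requires compactness or coercivity of the parameter space and a domination condition, and for Gaussian mixtures the unconstrained likelihood is in fact unbounded (a component mean placed at a sample point with vanishing variance drives $\ell_n\to+\infty$), so the global maximizer in \eqref{eq:ll} need not even exist as literally stated. Your two-stage remedy --- coercivity to confine the means and large variances, then a lower bound $\sigma_y\geq c$ (or the Kiefer--Wolfowitz extension) to handle the degenerate boundary --- is the standard and correct way to close this gap, which the paper's one-line proof leaves open; note that the paper itself implicitly adopts your device later, since Proposition~\ref{prop:unifConvMLE} restricts $\eta$ to a compact set $E$. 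The one soft spot in your write-up is the final step of arguing that the unconstrained degenerate configurations are ``spurious'': you cannot dismiss them by comparing likelihood values (theirs are infinite for every finite $n$), so in practice one must either retain the constraint $\sigma_y\geq c$ or reinterpret the MLE as a constrained or local maximizer --- but that is a defect in how the proposition is stated rather than in your argument.
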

\begin{proof}
Denoting $p_{\eta}(z)=\sum_y p(y)p_{\mu_y,\sigma_y}(z|y)$ with $\eta=(\mu,\sigma)$ we note that $p_{\eta}$ is identifiable (see Proposition~\ref{prop:identifiability}) in $\eta$ and the available samples $z^{(i)}=f_{\theta}(X^{(i)})$ are iid samples from $p_{\eta}(z)$. We therefore use standard statistics theory which indicates that the MLE for identifiable parametric model is strongly consistent \citep{Ferguson1996}.
\end{proof}
\begin{prop} \label{prop:riskConsistency} Under the assumptions of Proposition~\ref{prop:identifiability} and assuming the loss $L$ is   given by one of \eqref{eq:loss1}-\eqref{eq:loss3} with a normal $f_{\theta}(X)|Y\sim N(\mu_y,\sigma_y^2)$, the plug-in risk   estimate 
\begin{align} \label{eq:plugin}
\hat R_{n}(\theta) =  \sum_{y\in\{-1,+1\}}  p(y)  & \int_{\R}   p_{\hat\mu^{(n)}_y,\hat\sigma^{(n)}_y}(f_{\theta}(X)=\alpha|y)   L(y,\alpha) \, d\alpha. 
\end{align}
is consistent, i.e., for all $\theta$, \[ P\left(\lim_n \hat R_n(\theta) = R(\theta)\right) = 1.\] 
\end{prop}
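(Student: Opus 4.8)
The plan is to recognize $\hat R_n(\theta)$ and $R(\theta)$ as one and the same deterministic functional evaluated at the estimated and at the true parameters, and then to transport the parameter convergence already established in Proposition~\ref{prop:GaussianMixConsistency} through a continuity argument. Concretely, I would define the map
\[
G(\mu,\sigma) \defeq \sum_{y\in\{-1,+1\}} p(y) \int_{\R} N(\alpha\,;\mu_y,\sigma_y^2)\, L(y,\alpha)\,d\alpha,
\]
where $N(\alpha;\mu_y,\sigma_y^2) = p_{\mu_y,\sigma_y}(f_\theta(X)=\alpha\,|\,y)$, so that $R(\theta)=G(\mu^*,\sigma^*)$ for the true parameters $(\mu^*,\sigma^*)$ of $f_\theta(X)|Y$, while $\hat R_n(\theta)=G(\hat\mu^{(n)},\hat\sigma^{(n)})$. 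Since Proposition~\ref{prop:GaussianMixConsistency} already gives $(\hat\mu^{(n)},\hat\sigma^{(n)})\to(\mu^*,\sigma^*)$ almost surely, it suffices to prove that $G$ is continuous at $(\mu^*,\sigma^*)$; the conclusion then follows at once from the continuous mapping theorem applied to almost sure convergence (if $W_n\to W$ a.s.\ and $G$ is continuous at $W$, then $G(W_n)\to G(W)$ a.s.).

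The core of the argument is thus the continuity of each inner integral $I_y(\mu_y,\sigma_y)=\int_{\R} N(\alpha;\mu_y,\sigma_y^2)L(y,\alpha)\,d\alpha$ on the region $\sigma_y>0$. I would establish this by dominated convergence. For any sequence of parameters converging to $(\mu_y,\sigma_y)$ with $\sigma_y>0$, the integrand converges pointwise in $\alpha$ because the Gaussian density is jointly continuous in its parameters; and along the tail of such a sequence the parameters lie in a compact set on which $\sigma_y$ is bounded away from $0$ and from above. On that compact set the family of Gaussian densities admits a single integrable Gaussian-type envelope, which directly controls the logloss and hinge-loss integrands since those grow at most linearly in $|\alpha|$. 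For the exponential loss one completes the square to recognize $N(\alpha;\mu_y,\sigma_y^2)e^{-y\alpha}$ as a rescaled Gaussian, whence the same envelope argument applies and in fact yields the closed form $I_y=\exp(-y\mu_y+\sigma_y^2/2)$, manifestly smooth in $(\mu_y,\sigma_y)$. Continuity of the finite convex combination $G=\sum_y p(y)I_y$ is then immediate.

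The step I expect to be the main obstacle is the exponential-loss case, because there continuity cannot be read off from the loss growing slowly; one must instead exploit the $-y\alpha$ term to re-center the Gaussian and then verify that the dominating envelope stays integrable uniformly over a neighborhood of $(\mu^*,\sigma^*)$. This is precisely where $\sigma^*>0$ is essential: keeping $\sigma_y$ bounded away from zero prevents the density from developing a spike and keeps the envelope finite. (The closed form above makes the danger visible---$I_y$ blows up only as $\sigma_y\to\infty$, never in a neighborhood of a finite positive $\sigma^*$.) Once continuity of each $I_y$ at the true parameters is secured, composing $G$ with the almost sure convergence of the MLE from Proposition~\ref{prop:GaussianMixConsistency} completes the proof for all three losses \eqref{eq:loss1}--\eqref{eq:loss3}.
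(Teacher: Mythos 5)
Your proposal is correct and follows essentially the same route as the paper: both express $\hat R_n(\theta)$ and $R(\theta)$ as the same map evaluated at the estimated and true $(\mu,\sigma)$, invoke the almost-sure convergence from Proposition~\ref{prop:GaussianMixConsistency}, and conclude by continuity of that map. The only difference is that you actually substantiate the continuity claim (dominated convergence with a Gaussian envelope, and the moment-generating-function computation for the exponential loss), which the paper simply asserts.
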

\begin{proof}
  The plug-in risk estimate $\hat R_n$ in \eqref{eq:plugin} is a   continuous function (when $L$ is given by \eqref{eq:loss1},   \eqref{eq:loss2} or \eqref{eq:loss3}) of $\hat \mu_1^{(n)},\hat   \mu_{-1}^{(n)},\hat \sigma_1^{(n)},\hat \sigma_{-1}^{(n)}$ (note   that $\mu_y$ and $\sigma_y$ are functions of $\theta$), which we   denote $\hat R_n (\theta) = h(\hat \mu_1^{(n)},\hat   \mu_{-1}^{(n)},\hat \sigma_1^{(n)},\hat \sigma_{-1}^{(n)})$.

Using Proposition~\ref{prop:GaussianMixConsistency} we have that
\begin{align*}
\lim_{n\to\infty} (\hat \mu_1^{(n)},\hat \mu_{-1}^{(n)},\hat     \sigma_1^{(n)},\hat \sigma_{-1}^{(n)}) = (\mu_1^{\text{true}},     \mu_{-1}^{\text{true}}, \sigma_1^{\text{true}},     \sigma_{-1}^{\text{true}})
\end{align*}
with probability 1. Since continuous functions preserve limits we have \[ \lim_{n\to\infty} h(\hat \mu_1^{(n)},\hat \mu_{-1}^{(n)},\hat \sigma_1^{(n)},\hat \sigma_{-1}^{(n)})= h(\mu^{\text{true}}_1,\mu^{\text{true}}_{-1},\sigma^{\text{true}}_1,\sigma^{\text{true}}_{-1})\] with probability 1 which implies convergence $\lim_{n\to\infty}\hat R_n(\theta) = R(\theta)$ with probability 1.
\end{proof}

\subsection{Unsupervised Consistency of $\argmin \hat R_n(\theta)$} 
The convergence above $\hat R_n(\theta)\to R(\theta)$ is pointwise in $\theta$. If the stronger concept of uniform convergence is assumed over $\theta\in\Theta$ we obtain consistency of $\argmin_{\theta} \hat R_n(\theta)$. This surprising result indicates that in some cases it is possible to retrieve the expected risk minimizer (and therefore the Bayes classifier in the case of the hinge loss, log-loss and exp-loss) using only unlabeled data. We show this uniform convergence using a modification of Wald's classical MLE consistency result \citep{Ferguson1996}. 

Denoting 
\begin{align*}
p_{\eta}(z)&=\sum_{y\in\{-1,+1\}} p(y) p_{\mu_y,\sigma_y}(f(X)=z|y), \quad \eta=(\mu_1,\mu_{-1},\sigma_1,\sigma_{-1})
\end{align*}
we first show that the MLE converges to the true parameter value $\hat \eta_n\to\eta_0$ uniformly. Uniform convergence of the risk estimator $\hat R_{n}(\theta)$ follows. Since changing $\theta\in\Theta$ results in a different $\eta\in E$ we can state the uniform convergence in $\theta\in\Theta$ or alternatively in $\eta\in E$.
\begin{prop} \label{prop:unifConvMLE}
Let $\theta$ take values in $\Theta$ for which $\eta\in E$ for some compact set $E$. Then assuming the conditions in Proposition~\ref{prop:riskConsistency} the convergence of the MLE to the true value $\hat \eta_n\to \eta_0$ is uniform in $\eta_0\in E$ (or alternatively $\theta\in\Theta$).
\end{prop}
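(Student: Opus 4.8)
The plan is to adapt Wald's classical consistency argument, whose engine is the Gibbs inequality combined with a strong law of large numbers, while tracking the uniformity in the true parameter $\eta_0$ at every step. Write the normalized log-likelihood as $\tfrac{1}{n}\ell_n(\eta) = \tfrac{1}{n}\sum_{i=1}^n \log p_{\eta}(z^{(i)})$, where the $z^{(i)} = f_{\theta}(X^{(i)})$ are iid draws from $p_{\eta_0}$, and let its population counterpart be $M(\eta;\eta_0) = \E_{\eta_0}[\log p_{\eta}(Z)]$. By the Gibbs inequality, $M(\eta;\eta_0) - M(\eta_0;\eta_0) = -\mathrm{KL}(p_{\eta_0}\,\|\,p_{\eta}) \le 0$, with equality iff $p_{\eta} = p_{\eta_0}$; by the identifiability established in Proposition~\ref{prop:identifiability}, this forces $\eta = \eta_0$. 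Thus $\eta_0$ is the unique maximizer of the population objective, and the task reduces to showing that the empirical maximizer $\hat\eta_n$ falls into any prescribed neighborhood of $\eta_0$ for all large $n$, uniformly over $\eta_0 \in E$.

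First I would establish a uniform strong law of large numbers for the family $\{\log p_{\eta} : \eta \in E\}$. Because $E$ is a compact subset of the parameter region where $\sigma_y > 0$, the variance parameters stay bounded away from $0$ and from $\infty$ on $E$, and the means stay bounded; hence $\eta \mapsto \log p_{\eta}(z)$ is continuous, and a short computation on the Gaussian mixture density yields an envelope of the form $|\log p_{\eta}(z)| \le a + b z^2$ with constants $a,b$ that are valid simultaneously for all $\eta \in E$. Since the second moment of $Z$ under $p_{\eta_0}$ is also bounded over $\eta_0 \in E$, this envelope is integrable uniformly in $\eta_0$, and a standard uniform SLLN (of Jennrich/Wald type) gives $\sup_{\eta \in E} |\tfrac{1}{n}\ell_n(\eta) - M(\eta;\eta_0)| \to 0$ almost surely.

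The uniformity-in-$\eta_0$ step is where I would invoke compactness decisively. Define the separation $\kappa(\epsilon) = \inf_{\eta_0 \in E}\, \inf_{\eta \in E,\, |\eta - \eta_0| \ge \epsilon} \big[ M(\eta_0;\eta_0) - M(\eta;\eta_0) \big]$. Joint continuity of $(\eta,\eta_0) \mapsto M(\eta;\eta_0)$, compactness of $E \times E$, and the strict positivity of the KL gap away from the diagonal established above together yield $\kappa(\epsilon) > 0$ for every $\epsilon > 0$. Combining this uniform gap with the uniform SLLN, one sees that whenever $\sup_{\eta}|\tfrac{1}{n}\ell_n(\eta) - M(\eta;\eta_0)| < \kappa(\epsilon)/2$ the maximizer must satisfy $|\hat\eta_n - \eta_0| < \epsilon$; this event occurs for all large $n$ with probability $1$, and the threshold $n$ does not depend on $\eta_0$. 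This gives the claimed uniform convergence $\hat\eta_n \to \eta_0$, from which uniform convergence of $\hat R_n(\theta)$ follows by the continuity of the plug-in map $h$ used in Proposition~\ref{prop:riskConsistency}.

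The hard part will be the subtlety that the data-generating distribution $p_{\eta_0}$ itself varies with the parameter $\eta_0$ over which uniformity is asserted. In the textbook Wald setting the true parameter is fixed, so one controls a single empirical process; here both the integrand $\log p_{\eta}$ and the measure $p_{\eta_0}$ move, and I must secure the SLLN, the envelope domination, and the separation bound \emph{simultaneously} over the entire family $\{p_{\eta_0} : \eta_0 \in E\}$. Compactness of $E$ together with the explicit Gaussian form is exactly what makes this tractable, furnishing one envelope and uniform moment bounds for all $\eta_0$ at once; verifying that the empirical fluctuations are controlled uniformly in $\eta_0$, and not merely in $\eta$, is the genuine technical core of the argument.
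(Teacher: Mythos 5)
Your proposal is correct and follows essentially the same route as the paper's proof: Wald's argument with Shannon's (Gibbs') inequality plus identifiability giving a strictly negative KL gap, compactness of $E\times E$ turning that gap into a separation constant uniform in $\eta_0$ (your $\kappa(\epsilon)$ is the paper's $-\delta$), and a uniform SLLN over $(\eta,\eta_0)\in E^2$ forcing the maximizer into the $\rho$-ball with an $N$ independent of $\eta_0$. The only difference is that you spell out the envelope $|\log p_\eta(z)|\le a+bz^2$ justifying the uniform SLLN, which the paper handles by citation to Ferguson.
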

\begin{proof}
We start by making the following notation 
\begin{align*}
U(z,\eta,\eta_0)&=\log p_{\eta}(z)-\log p_{\eta_0}(z)\\
\alpha(\eta,\eta_0)&=E_{p_{\eta_0}} U(z,\eta,\eta_0)=-D(p_{\eta_0},p_{\eta}) \leq 0
\end{align*} 
with the latter quantity being non-positive and 0 iff $\eta=\eta_0$ (due to Shannon's inequality and identifiability of $p_{\eta}$).

For $\rho>0$ we define the compact set $S_{\eta_0,\rho}=\{\eta\in E: \|\eta-\eta_0\|\geq \rho\}$. Since $\alpha(\eta,\eta_0)$ is continuous it achieves its maximum (with respect to $\eta$) on $S_{\eta_0,\rho}$ denoted by $\delta_{\rho}(\eta_0)=\max_{\eta\in S_{\eta_0,\rho}} \alpha(\eta,\eta_0)<0$ which is negative since $\alpha(\eta,\eta_0)=0$ iff $\eta=\eta_0$. Furthermore, note that $\delta_{\rho}(\eta_0)$ is itself continuous in $\eta_0\in E$ and since $E$ is compact it achieves its maximum 
\[\delta=\max_{\eta_0\in E} \delta_{\rho}(\eta_0)=\max_{\eta_0\in E}\,\,\,\max_{\eta\in S_{\eta_0,\rho}}\,\,\, \alpha(\eta,\eta_0)<0\]
which is negative for the same reason.

Invoking the uniform strong law of large numbers \citep{Ferguson1996} we have $n^{-1}\sum_{i=1}^n U(z^{(i)},\eta,\eta_0)\to \alpha(\eta,\eta_0)$ uniformly over  $(\eta,\eta_0)\in E^2$. Consequentially, there exists $N$ such that for $n>N$   (with probability 1)
\[\sup_{\eta_0\in E}\,\,\,\sup_{\eta\in S_{\eta_0,\rho}} \,\,\, \frac{1}{n} \sum_{i=1}^n  U(z^{(i)},\eta,\eta_0)<\delta/2<0.\]
But since $n^{-1}\sum_{i=1}^n  U(z^{(i)},\eta,\eta_0)\to 0$ for $\eta=\eta_0$ it follows that the MLE 
\[\hat \eta_n = \,\,\max_{\eta\in E} \,\,\frac{1}{n}\sum_{i=1}^n  U(z^{(i)},\eta,\eta_0)\] 
is outside $S_{\eta_0,\rho}$ (for $n>N$ uniformly in $\eta_0\in E$) which implies $\|\hat\eta_n-\eta_0\|\leq \rho$. Since $\rho>0$ is arbitrarily and $N$ does not depend on $\eta_0$ we have $\hat\eta_n\to\eta_0$ uniformly over $\eta_0\in E$.
\end{proof}

\begin{prop} \label{prop:unifConvRisk}
Assuming that $X,\Theta$ are bounded in addition to the assumptions of Proposition~\ref{prop:unifConvMLE} the convergence $\hat R_n(\theta)\to R(\theta)$ is uniform in $\theta\in\Theta$.
\end{prop}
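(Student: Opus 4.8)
The plan is to reduce this uniform convergence to the uniform convergence of the MLE already proved in Proposition~\ref{prop:unifConvMLE}, exploiting the fact that the risk is obtained from the Gaussian parameters through a single fixed, continuous map. Writing $\eta=(\mu_1,\mu_{-1},\sigma_1,\sigma_{-1})$, I would introduce
\begin{align*}
h(\eta) = \sum_{y\in\{-1,+1\}} p(y)\int_{\R} p_{\mu_y,\sigma_y}(\alpha\mid y)\, L(y,\alpha)\, d\alpha,
\end{align*}
so that $R(\theta)=h(\eta_0(\theta))$ and $\hat R_n(\theta)=h(\hat\eta_n(\theta))$ with the \emph{same} $h$ (this is exactly the function called $h$ in the proof of Proposition~\ref{prop:riskConsistency}). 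The strategy is then the standard ``uniformly continuous function of a uniformly convergent sequence'' argument: if $h$ is uniformly continuous on the range of $\eta$ and $\hat\eta_n\to\eta_0$ uniformly, then $h(\hat\eta_n)\to h(\eta_0)$ uniformly.

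First I would record that $h$ is continuous and in fact computable in closed form for each of the three losses \eqref{eq:loss1}--\eqref{eq:loss3}: for the exponential loss the inner integral is the Gaussian moment generating function $\exp(-y\mu_y+\sigma_y^2/2)$, for the hinge loss it is a linear-times-Gaussian integral expressible through $\Phi$ and the normal density, and the logloss integral is likewise a smooth function of $(\mu_y,\sigma_y)$; each is continuous wherever $\sigma_y>0$. Next I would invoke the added hypothesis that $X$ and $\Theta$ are bounded. Because $\mu_y(\theta)=\sum_j\theta_j\,\E(X_j\mid Y=y)$ and $\sigma_y^2(\theta)=\Var(f_{\theta}(X)\mid Y=y)$ are bounded continuous functions of $\theta$ when $X$ and $\theta$ range over bounded sets, the image $E=\{\eta(\theta):\theta\in\Theta\}$ is bounded; combined with the compactness of $E$ assumed in Proposition~\ref{prop:unifConvMLE} this lets me work on a compact parameter set, on which the Heine--Cantor theorem upgrades continuity of $h$ to uniform continuity.

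Finally I would splice the two facts together. Fix $\epsilon>0$; uniform continuity of $h$ on $E$ yields $\delta>0$ such that $\|\eta-\eta'\|<\delta$ implies $|h(\eta)-h(\eta')|<\epsilon$ for all $\eta,\eta'\in E$. Proposition~\ref{prop:unifConvMLE} supplies a threshold $N$, independent of $\eta_0$ (equivalently of $\theta$), with $\sup_{\theta\in\Theta}\|\hat\eta_n(\theta)-\eta_0(\theta)\|<\delta$ for all $n>N$ with probability one. Hence $\sup_{\theta\in\Theta}|\hat R_n(\theta)-R(\theta)|=\sup_{\theta\in\Theta}|h(\hat\eta_n(\theta))-h(\eta_0(\theta))|<\epsilon$ for $n>N$ almost surely, and since $\epsilon>0$ was arbitrary the convergence $\hat R_n(\theta)\to R(\theta)$ is uniform in $\theta\in\Theta$.

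The main obstacle I anticipate is the uniform continuity step rather than the final gluing. The exponential loss makes the integrand unbounded in $\alpha$, so $h$ is genuinely unbounded as $\sigma_y\to\infty$; the entire role of assuming $X,\Theta$ bounded is to confine $(\mu_y,\sigma_y)$ to a compact region on which $h$ is tame and uniformly continuous. I would also take care that the exceptional null set and the threshold $N$ inherited from Proposition~\ref{prop:unifConvMLE} are chosen uniformly in $\theta$, but this is precisely the content of that proposition, so no additional probabilistic work is required here.
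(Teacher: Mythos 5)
Your proposal is correct, but it reaches the conclusion by a genuinely different route than the paper. The paper works directly with the integral representation: it bounds $|\hat R_n(\theta)-R(\theta)|$ by $C\int|p_{\hat\eta_n}(\alpha)-p_{\eta_0}(\alpha)|\,d\alpha$ using a constant bound $C$ on the loss (justified by boundedness of $X,\Theta$), and then shows the two-component Gaussian mixture density is Lipschitz in $(\mu_1,\mu_{-1},\sigma_1,\sigma_{-1})$ by exhibiting bounded partial derivatives on the compact set $E$, yielding the quantitative estimate $|\hat R_n(\theta)-R(\theta)|\leq C'\|\hat\eta_n-\eta_0\|$ uniformly in $\theta$. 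You instead compose the uniformly convergent MLE of Proposition~\ref{prop:unifConvMLE} with the single map $h$ and upgrade continuity of $h$ to uniform continuity on the compact $E$ via Heine--Cantor. What the paper's route buys is an explicit Lipschitz rate tying the risk error to the parameter error; what your route buys is robustness for the exponential loss: the paper's pointwise bound $|L(y,\alpha)|\leq C$ is applied inside an integral over all of $\R$ against Gaussian densities of full support, where $e^{-y\alpha}$ is in fact unbounded (the paper quietly restricts to $\int_a^b$), whereas you evaluate the inner integral in closed form as $\exp(-y\mu_y+\sigma_y^2/2)$, which is finite and continuous on $E$ with no such caveat. Both arguments correctly reduce the problem to Proposition~\ref{prop:unifConvMLE}, and your final $\epsilon$--$\delta$ gluing, including the observation that the threshold $N$ is already uniform in $\theta$, is exactly what is needed.
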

\begin{proof}
Since $X,\Theta$ are bounded the margin value $f_{\theta}(X)$ is bounded with probability 1. As a result the loss function is bounded in absolute value by a constant $C$. We also note that a mixture of two Gaussian model (with known mixing proportions) is Lipschitz continuous in its parameters
\begin{multline*}
 \Bigg|\sum_{y\in\{-1,+1\}}  p(y)p_{\hat\mu^{(n)}_y,\hat\sigma^{(n)}_y}(z)  -\sum_{y\in\{-1,+1\}}  p(y) p_{\mu^{true}_y,\sigma^{true}_y}(z) \Bigg| \\ \leq  t(z) \, \cdot \, \Big|\Big|(\hat \mu_1^{(n)},\hat \mu_{-1}^{(n)},\hat     \sigma_1^{(n)},\hat \sigma_{-1}^{(n)}) - (\mu_1^{\text{true}},     \mu_{-1}^{\text{true}}, \sigma_1^{\text{true}},     \sigma_{-1}^{\text{true}})\Big|\Big|
\end{multline*}
which may be verified by noting that the partial derivatives of 
$p_{\eta}(z)=\sum_y p(y)p_{\mu_y,\sigma_y}(z|y)$ 
\begin{align*}
\frac{\partial p_{\eta}(z) } {\partial \hat\mu^{(n)}_1}&= 
\frac{p(y=1) (z - \hat\mu^{(n)}_1)} {(2\pi)^{1/2} \hat\sigma^{(n)^3}_1}
e^{-\frac{ (z - \hat\mu^{(n)}_1)^2 } {2 \hat\sigma^{(n)^3}_1} }\\
\frac{\partial p_{\eta}(z)}{\partial \hat\mu^{(n)}_{-1}}&= \frac{p(y=-1) (z - \hat\mu^{(n)}_{-1})}{(2\pi)^{1/2} \hat\sigma^{(n)^3}_{-1}} e^{-\frac{(z - \hat\mu^{(n)}_{-1})^2}{2 \hat\sigma^{(n)^3}_{-1}}}\\
\frac{\partial p_{\eta}(z)}{\partial \hat\sigma^{(n)}_1}&= -\frac{p(y=1) (z - \hat\mu^{(n)}_1)^2}{(2\pi)^{3/2} \hat\sigma^{(n)^6}_1} e^{-\frac{(z - \hat\mu^{(n)}_1)^2}{2 \hat\sigma^{(n)^2}_1}}\\
\frac{\partial p_{\eta}(z)}{\partial \hat\sigma^{(n)}_{-1}}&= -\frac{p(y=-1) (z - \hat\mu^{(n)}_{-1})^2}{(2\pi)^{3/2} \hat\sigma^{(n)^6}_{-1}} e^{-\frac{(z - \hat\mu^{(n)}_{-1})^2}{2 \hat\sigma^{(n)^2}_{-1}}}\\
\end{align*}
are bounded for a compact $E$. These observations, together with Proposition~\ref{prop:unifConvMLE} lead to 
\begin{align*}
|\hat R_n(\theta)-R(\theta)| &\leq  \sum_{y\in\{-1,+1\}}  p(y) \int   \Big |p_{\hat\mu^{(n)}_y,\hat\sigma^{(n)}_y}(f_{\theta}(X)=\alpha)  
-p_{\mu^{\text{true}}_y,\sigma^{\text{true}}_y}(f_{\theta}(X)=\alpha)  \Big|\,
|L(y,\alpha)| d\alpha\\
& \leq C \int \Big|\sum_{y\in\{-1,+1\}}  p(y)p_{\hat\mu^{(n)}_y,\hat\sigma^{(n)}_y}(\alpha)  -\sum_{y\in\{-1,+1\}}  p(y) p_{\mu^{\text{true}}_y,\sigma^{\text{true}}_y}(\alpha) \Big|  \, d\alpha \\
& \leq C\,  \| (\hat \mu_1^{(n)},\hat \mu_{-1}^{(n)},\hat     \sigma_1^{(n)},\hat \sigma_{-1}^{(n)}) - (\mu_1^{\text{true}},     \mu_{-1}^{\text{true}}, \sigma_1^{\text{true}},     \sigma_{-1}^{\text{true}}) \| \int_a^b t(z)dz \\
&\leq C'\,  \| (\hat \mu_1^{(n)},\hat \mu_{-1}^{(n)},\hat     \sigma_1^{(n)},\hat \sigma_{-1}^{(n)}) - (\mu_1^{\text{true}},     \mu_{-1}^{\text{true}}, \sigma_1^{\text{true}},     \sigma_{-1}^{\text{true}}) \| \to 0 \\
\end{align*}
uniformly over $\theta\in\Theta$.
\end{proof}

\begin{prop}
Under the assumptions of Proposition~\ref{prop:unifConvRisk} 
\[ P\left(\lim_{n\to\infty} \argmin_{\theta\in\Theta} \hat R_n(\theta)=\argmin_{\theta\in\Theta} R(\theta)\right)=1.\]
\end{prop}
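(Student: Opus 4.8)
The plan is to invoke the standard argmin-consistency argument for M-estimators, taking the uniform convergence of Proposition~\ref{prop:unifConvRisk} as the key input. Write $\theta^\star = \argmin_{\theta\in\Theta} R(\theta)$ and $\hat\theta_n = \argmin_{\theta\in\Theta} \hat R_n(\theta)$, and set $\epsilon_n = \sup_{\theta\in\Theta} |\hat R_n(\theta) - R(\theta)|$, which tends to $0$ with probability $1$ by Proposition~\ref{prop:unifConvRisk}. Throughout I would assume that $\Theta$ is compact (the boundedness hypothesis of Proposition~\ref{prop:unifConvRisk} together with closedness) and that $R$, which is continuous by the reasoning used in Proposition~\ref{prop:riskConsistency}, has a unique minimizer $\theta^\star$, so that the right-hand side of the claim is a single well-defined point.

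First I would establish the sandwich $R(\hat\theta_n) \to R(\theta^\star)$ almost surely. By definition of $\hat\theta_n$ we have $\hat R_n(\hat\theta_n) \le \hat R_n(\theta^\star)$, and combining this with the uniform bound $\epsilon_n$ on both sides gives
\[
R(\hat\theta_n) \le \hat R_n(\hat\theta_n) + \epsilon_n \le \hat R_n(\theta^\star) + \epsilon_n \le R(\theta^\star) + 2\epsilon_n.
\]
Since $\theta^\star$ minimizes $R$ we also have $R(\theta^\star) \le R(\hat\theta_n)$, so $0 \le R(\hat\theta_n) - R(\theta^\star) \le 2\epsilon_n \to 0$, yielding $R(\hat\theta_n) \to R(\theta^\star)$ with probability $1$.

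The main step, and the one demanding the most care, is to upgrade convergence of the risk \emph{values} to convergence of the \emph{minimizers}, i.e.\ to deduce $\hat\theta_n \to \theta^\star$ from $R(\hat\theta_n) \to R(\theta^\star)$. This is where well-separatedness of $\theta^\star$ is essential and is the crux of the argument. I would proceed by contradiction: if $\hat\theta_n \not\to \theta^\star$ on an event of positive probability, then on that event there exist a neighborhood $G$ of $\theta^\star$ and a subsequence with $\hat\theta_{n_k} \notin G$ for all $k$. By compactness of $\Theta$ this subsequence has a further convergent subsequence with limit $\theta' \in \Theta \setminus G$, so in particular $\theta' \neq \theta^\star$. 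Continuity of $R$ then gives $R(\hat\theta_{n_k}) \to R(\theta')$ along that subsequence, while we have already shown $R(\hat\theta_{n_k}) \to R(\theta^\star)$; hence $R(\theta') = R(\theta^\star)$, contradicting uniqueness of the minimizer. Equivalently, uniqueness and continuity on the compact set $\Theta$ supply the well-separated-minimum condition $\inf_{\theta \notin G} R(\theta) > R(\theta^\star)$, which together with $R(\hat\theta_n) \to R(\theta^\star)$ forces $\hat\theta_n$ into $G$ eventually. This yields $\hat\theta_n \to \theta^\star$ with probability $1$, which is the claim.

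The only genuine obstacle is the uniqueness (well-separatedness) of $\theta^\star$, an assumption implicitly built into writing $\argmin_{\theta} R(\theta)$ as a single point; absent uniqueness the argument only shows that the distance from $\hat\theta_n$ to the set of minimizers of $R$ tends to zero, and the statement should be read in that weaker sense.
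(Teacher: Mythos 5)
Your proposal is correct and follows essentially the same route as the paper: both are the classical Wald-type argmin-consistency argument that combines the uniform convergence from Proposition~\ref{prop:unifConvRisk} with the well-separation of the (implicitly unique) minimizer on the compact complement of a ball around $\theta^\star$. Your explicit sandwich step $R(\hat\theta_n)\leq R(\theta^\star)+2\epsilon_n$ and your remark that uniqueness is an unstated hypothesis are slightly more careful than the paper's version, but the underlying idea is identical.
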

\begin{proof} 
We denote $t^*=\argmin R(\theta)$, $t_n = \argmin \hat R_n(\theta)$. Since $\hat R_n(\theta)\to R(\theta)$ uniformly, for each $\epsilon>0$ there exists $N$ such that for all $n>N$, $|\hat R_n(\theta)-R(\theta)|<\epsilon$. 

Let $S=\{\theta: \|\theta-t^*\|\geq \epsilon \}$ and $\min_{\theta\in S} R(\theta)>R(t^*)$ ($S$ is compact and thus $R$ achieves its minimum on it). There exists $N'$ such that for all $n>N'$ and $\theta\in S$, $\hat R_n(\theta) \geq R(t^*)+\epsilon$. On the other hand, $\hat R_n(t^*)\to R(t^*)$ which together with the previous statement implies that there exists $N''$ such that for $n>N''$, $\hat R_n(t^*) < \hat R_n(\theta)$ for all $\theta\in S$. We thus conclude that for $n>N''$, $t_n\not\in S$. Since we showed that for each $\epsilon>0$ there exists $N$ such that for all $n>N$ we have $\|t_n-t^*\|\leq \epsilon$, $t_n\to t^*$ which concludes the proof.
\end{proof}

\subsection{Asymptotic Variance} \label{sec:asymVar}
In addition to consistency, it is useful to characterize the accuracy of our estimator $\hat R_n(\theta)$ as a function of $p(y),\mu,\sigma$. We do so by computing the asymptotic variance of the estimator which equals the inverse Fisher information 
\begin{align*}
\sqrt{n} (\hat \eta^{\text{mle}}_n -\eta_0) \tood N(0,I^{-1}(\eta^{\text{true}}))
\end{align*}
and analyzing its dependency on the model parameters. We first derive the asymptotic variance of MLE for mixture of Gaussians (we denote below $\eta=(\eta_1,\eta_2), \eta_i=(\mu_i,\sigma_{i})$)
\begin{align}
p_{\eta}(z) &=p(Y=1) N(z; \mu_1,\sigma_1^2)  + p(Y=-1) N(z; \mu_{-1},\sigma_{-1}^2) \\
&=p_1 p_{\eta_1}(z)  + p_{-1} p_{\eta_{-1}}(z).
\end{align}
The  elements of $4 \times 4$ information matrix $I(\eta)$ 
\begin{align*}
I(\eta_i,\eta_j) = \E \left(\frac{\partial \log p_{\eta}(z) }{\partial \eta_i}\frac{\partial \log p_{\eta}(z)}{\partial \eta_j}\right)
\end{align*}
may be computing using the following derivatives 
\begin{align*}
\frac{\partial \log p_{\eta}(z) }{\partial \mu_i} &= \frac{p_i}{\sigma_i} \left(\frac{z-\mu_i}{\sigma_i}\right) \frac{p_{\eta_i}(z)}{p_{\eta}(z)}  \\
\frac{\partial \log p_{\eta}(z) }{\partial \sigma^2_i} &=\frac{p_i}{2\sigma_i} \left(\left(\frac{z-\mu_i}{\sigma_i}\right)^2-1\right)\frac{p_{\eta_i}(z)}{p_{\eta}(z)}
\end{align*}
for $i=1,-1$. Using derivations similar to the ones in \citep{behboodian1972} we obtain
\begin{align*}
I(\mu_i,\mu_j) &=\frac{p_ip_j}{\sigma_i\sigma_j} M_{11} \Big(p_{\eta_i}(z),p_{\eta_i}(z)\Big)\\
I(\mu_1,\sigma^2_i)&=\frac{p_1p_i}{2\sigma_1\sigma^2_i} \Big[M_{12} \Big(p_{\eta_i}(z),p_{\eta_i}(z)\Big)-M_{10} \Big(p_{\eta_1}(z),p_{\eta_i}(z)\Big)\Big] \\
I(\mu_{-1},\sigma^2_i)&=\frac{p_{-1}p_i}{2\sigma_{-1}\sigma^2_i} \Big[M_{21} \Big(p_{\eta_i}(z),p_{\eta_{-1}}(z)\Big)-M_{01} \Big(p_{\eta_i}(z),p_{\eta_{-1}}(z)\Big)\Big] \\
I(\sigma^2_i,\sigma^2_i)&=\frac{p^4_i}{4\sigma^4_i} \Big[M_{00} \Big(p_{\eta_i}(z),p_{\eta_i}(z)\Big)-2M_{11} \Big(p_{\eta_i}(z),p_{\eta_i}(z)\Big)+M_{22} \Big(p_{\eta_i}(z),p_{\eta_i}(z)\Big)\Big] \\
I(\sigma^2_1,\sigma^2_{-1})&=\frac{p_1p_{-1}}{4\sigma^2_1\sigma^2_{-1}} \Big[M_{00} \Big(p_{\eta_1}(z),p_{\eta_{-1}}(z)\Big)-M_{20} \Big(p_{\eta_1}(z),p_{\eta_{-1}}(z)\Big)\\&\qquad -M_{02} \Big(p_{\eta_1}(z),p_{\eta_{-1}}(z)\Big)+M_{22} \Big(p_{\eta_1}(z),p_{\eta_{-1}}(z)\Big)\Big]
\end{align*}
where 
\begin{align*}
M_{m,n}\Big(p_{\eta_i}(z),p_{\eta_j}(z)\Big) &= \int_{-\infty}^{\infty} \left(\frac{z-\mu_i}{\sigma_i}\right)^m\left(\frac{z-\mu_j}{\sigma_j}\right)^n \frac{p_{\eta_i}(z)p_{\eta_j}(z)}{p_{\eta}(z)} \,dx.
\end{align*}

In some cases it is more instructive to consider the asymptotic variance of the risk estimator $\hat R_n(\theta)$ rather than that of the parameter estimate for $\eta=(\mu,\sigma)$. This could be computed using the delta method and the above Fisher information matrix 
\begin{align*}
\sqrt{n} (\hat R_n(\theta) - R(\theta)) \tood N(0,\nabla h(\eta^{\text{true}})^T I^{-1}(\eta^{true})\nabla h(\eta^{\text{true}}))
\end{align*}
where $\nabla h$ is the gradient vector of the mapping $R(\theta)= h(\eta)$. For example, in the case of the exponential loss \eqref{eq:loss1} we get
\begin{align*}
h(\eta)&=p(Y=1)\sigma_1\sqrt{2} \exp {\Big(\frac{(\mu_1-1)^2}{2}-\frac{\mu^2_1}{2\sigma^2_1}\Big)}+p(Y=-1)\sigma_{-1}\sqrt{2} \exp {\Big(\frac{(\mu_{-1}-1)^2}{2}-\frac{\mu^2_{-1}}{2\sigma^2_{-1}}\Big)}\\
\frac{\partial h(\eta)}{\partial \mu_1}& = \frac{\sqrt{2}P(Y=1)(\mu_1(\sigma^2_1-1)-\sigma^2_1)}{\sigma_1} \exp \left(\frac{(\mu_1-1)^2}{2}-\frac{\mu^2_1}{2\sigma^2_1}\right) \\
\frac{\partial h(\eta)}{\partial \mu_{-1}}& = \frac{\sqrt{2}P(Y=-1)(\mu_{-1}(\sigma^2_{-1}-1)+\sigma^2_{-1})}{\sigma_{-1}} \exp \left(\frac{(\mu_{-1}+1)^2}{2}-\frac{\mu^2_{-1}}{2\sigma^2_{-1}}\right) \\
\frac{\partial h(\eta)}{\partial \sigma^2_1}& = \frac{P(Y=1) (\mu^2_1+\sigma^2_1)}{\sqrt{2\sigma_1}} \Big(\frac{(\mu_1-1)^2}{2}-\frac{\mu^2_1}{2\sigma^2_1}\Big) \\
\frac{\partial h(\eta)}{\partial \sigma^2_{-1}}& = \frac{P(Y=-1) (\mu^2_{-1}+\sigma^2_{-1})}{\sqrt{2\sigma_{-1}}} \Big(\frac{(\mu_{-1}+1)^2}{2}-\frac{\mu^2_{-1}}{2\sigma^2_{-1}}\Big).
\end{align*}

Figure~\ref{fig:fim} plots the asymptotic accuracy of $\hat R_n(\theta)$ for log-loss. The left panel shows that the accuracy of $\hat R_n$ increases with the imbalance of the marginal distribution $p(Y)$. The right panel shows that the accuracy of $\hat R_n$ increases with the difference between the means $|\mu_1-\mu_{-1}|$ and the variances $\sigma_1/\sigma_2$. 

\begin{figure} 
\centering
  \includegraphics[scale=0.4]{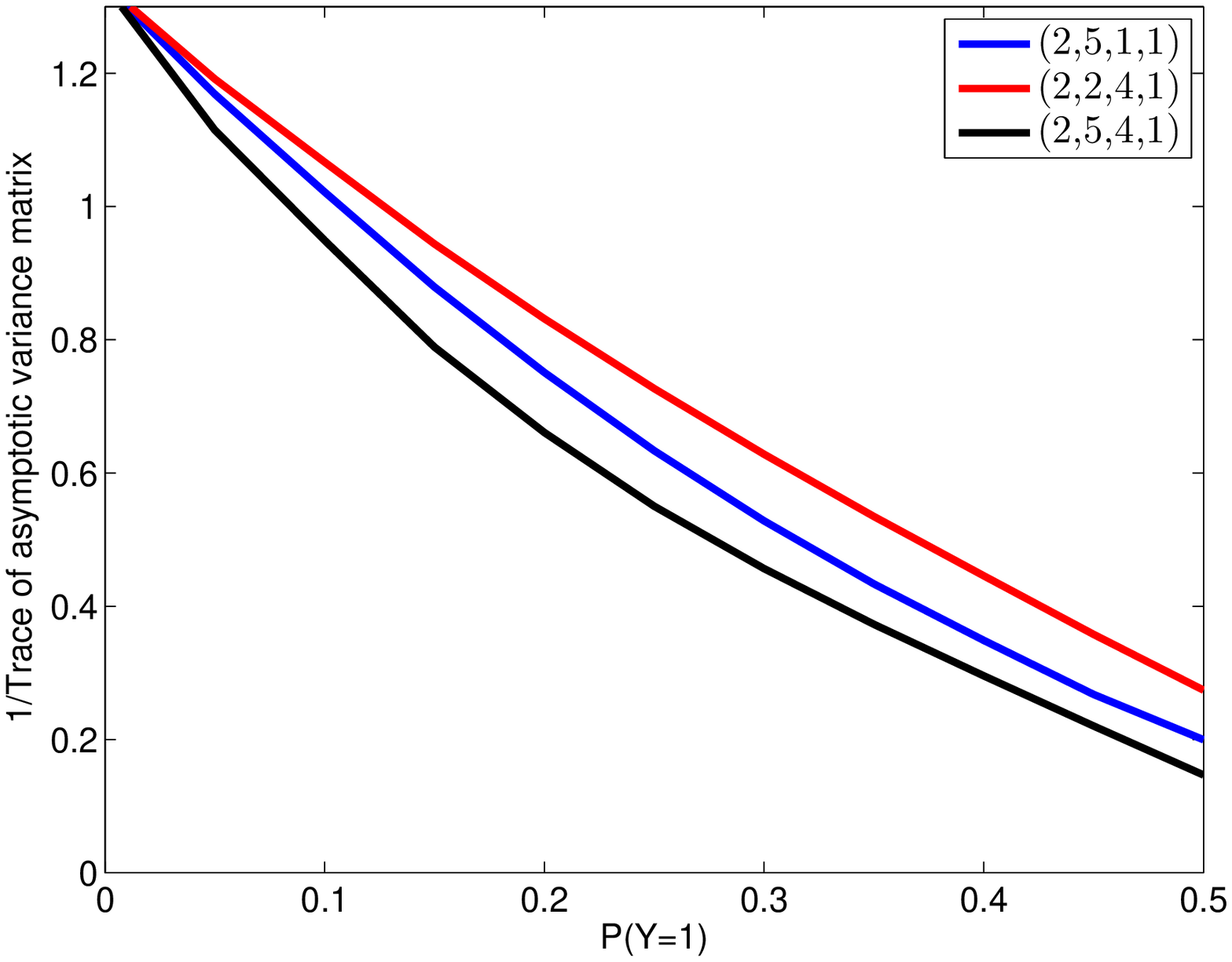}
  \includegraphics[scale=0.4]{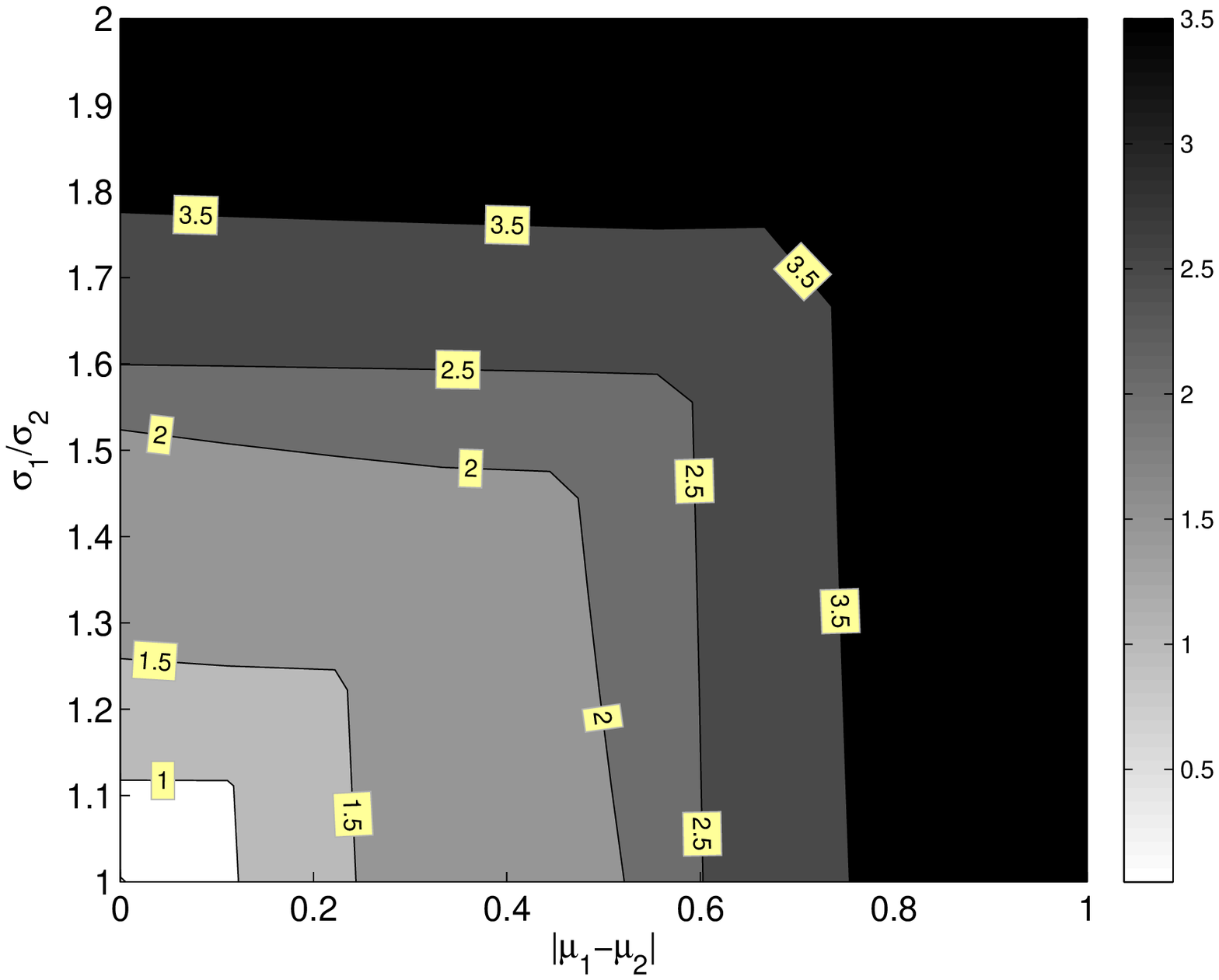}
  \caption{Left panel: asymptotic accuracy (inverse of trace of asymptotic variance) of $\hat R_n(\theta)$ for logloss as a function of the imbalance of the class marginal $p(Y)$. The accuracy increases with the class imbalance as it is easier to separate the two mixture components. Right panel: asymptotic accuracy (inverse of trace of asymptotic variance) as a function of the difference between the means $|\mu_1-\mu_{-1}|$ and the variances $\sigma_1/\sigma_2$. See text for more information.} 
\label{fig:fim}
\end{figure}

\subsection{Multiclass Classification}
Thus far, we have considered unsupervised risk estimation in binary classification. In this section we describe a multiclass extension based on standard extensions of the margin concept to multiclass classification. In this case the margin vector associated with the multiclass classifier
\begin{align}
\hat Y = \argmax_{k=1,\ldots,K} f_{\theta^k}(X), \qquad  X,\theta^k \in\mathbb{R}^d
\end{align}
is $f_{\bm \theta}(X)=(f_{\theta^1}(X),\dots,f_{\theta^K}(X))$. Following our discussion of the binary case, $f_{\theta^k}(X)|Y$, $k=1,\ldots,K$ is assumed to be normally distributed with parameters that are estimated by maximizing the likelihood of a Gaussian mixture model. We thus have $K$ Gaussian mixture models, each one with $K$ mixture components. The estimated parameters are plugged-in as before into the multiclass risk
\begin{align}
R(\bm \theta) &= E_{p(f_{\bm \theta}(X),Y)}L(Y,f_{\bm \theta}(X))
\end{align}
where $L$ is a multiclass margin based loss function such as 
\begin{align}
L(Y,f_{\bm \theta}(X)) &=  \sum_{k\neq Y} \log(1+\exp(- f_{\theta^k}(X))) \label{eq:loglossMC} \\
L(Y,f_{\bm \theta}(X)) &= \sum_{k\neq Y}(1 + f_{\theta^k}(X))_{+}. \label{eq:hingelossMC}
\end{align}
Since the MLE for a Gaussian mixture model with $K$ components is consistent (assuming $P(Y)$ is known and all probabilities $P(Y=k), k=1,\ldots,K$ are distinct) the MLE estimator for $f_{\theta^k}(X)|Y=k'$ are consistent. Furthermore, if the loss $L$ is a continuous function of these parameters (as is the case for \eqref{eq:loglossMC}-\eqref{eq:hingelossMC}) the risk estimator $\hat R_n(\bm \theta)$ is consistent as well. 

\section{Application 1: Estimating Risk in Transfer Learning}
We consider applying our estimation framework in two ways. The first application, which we describe in this section, is estimating margin-based risks in transfer learning where classifiers are trained on one domain but tested on a somewhat different domain. The transfer learning assumption that labeled data exists for the training domain but not for the test domain motivates the use of our unsupervised risk estimation. The second application, which we describe in the next section, is more ambitious. It is concerned with training classifiers without labeled data whatsoever.

In evaluating our framework we consider both synthetic and real-world data. In the synthetic experiments we generate high dimensional data from two uniform distributions $X|\{Y=1\}$ and $X|\{Y=-1\}$ with independent dimensions and prescribed $p(Y)$ and classification accuracy. This controlled setting allows us to examine the accuracy of the risk estimator as a function of $n$, $p(Y)$, and the classifier accuracy. 

Figure~\ref{fig:accPyfig} shows that the relative error of $\hat R_n(\theta)$ (measured by
$|\hat R_n(\theta)-R_n(\theta)|/R_n(\theta)$) in estimating the logloss (left) and hinge loss (right) decreases with $n$ achieving accuracy of greater than 99\% for $n>1000$. In accordance with the theoretical results in Section~\ref{sec:asymVar} the figure shows that the estimation error decreases as the classifiers become more accurate and as $p(Y)$ becomes less uniform. We found these trends to hold in other experiments as well. In the case of exponential loss, however, the estimator performed substantially worse (figure omitted). This is likely due to the exponential dependency of the loss on $Yf_{\theta}(X)$ which makes it very sensitive to outliers.

\begin{figure}
\centering
\raisebox{2.1ex}{\includegraphics[scale=0.35]{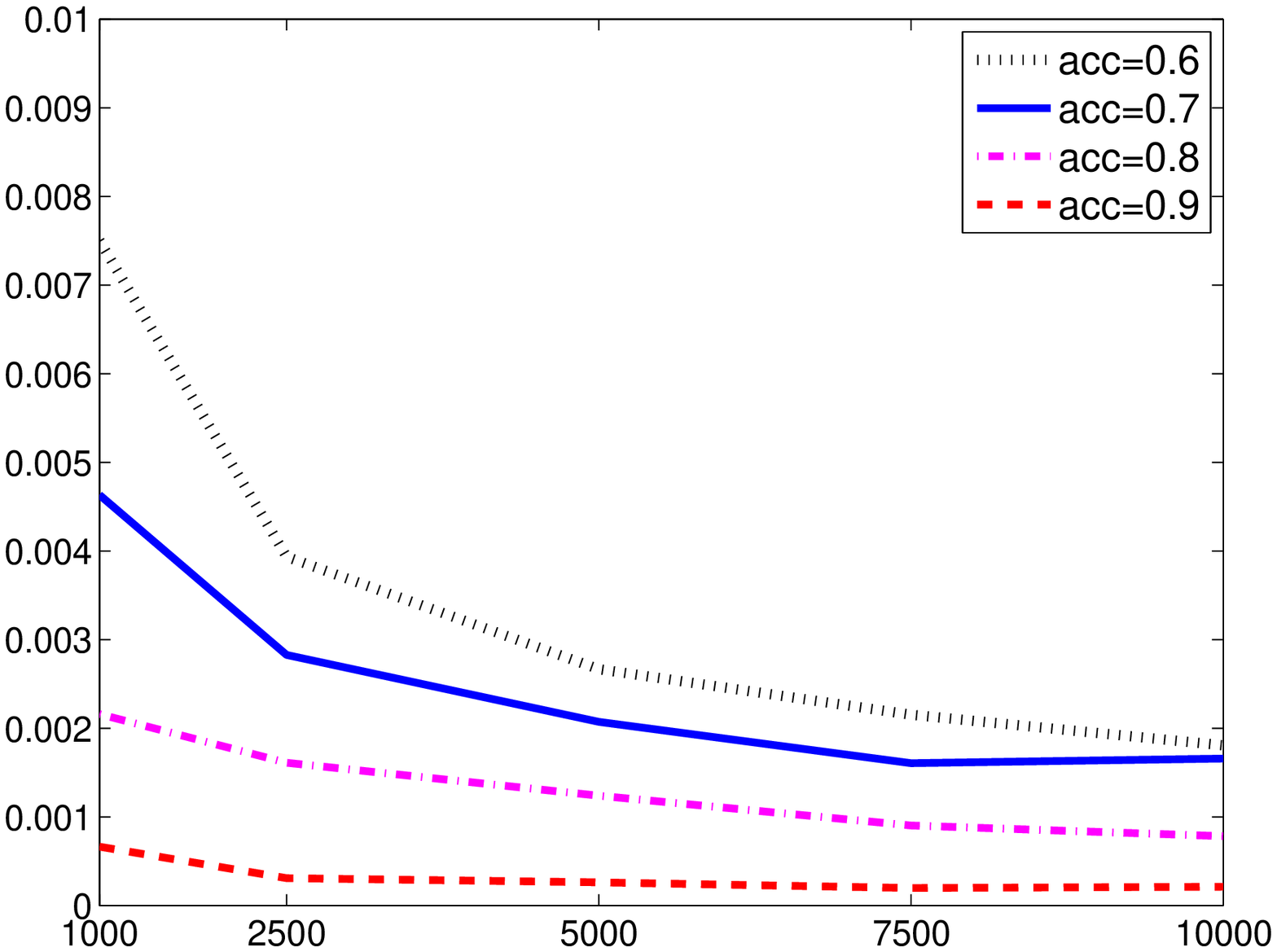}}
\includegraphics[scale=0.4815]{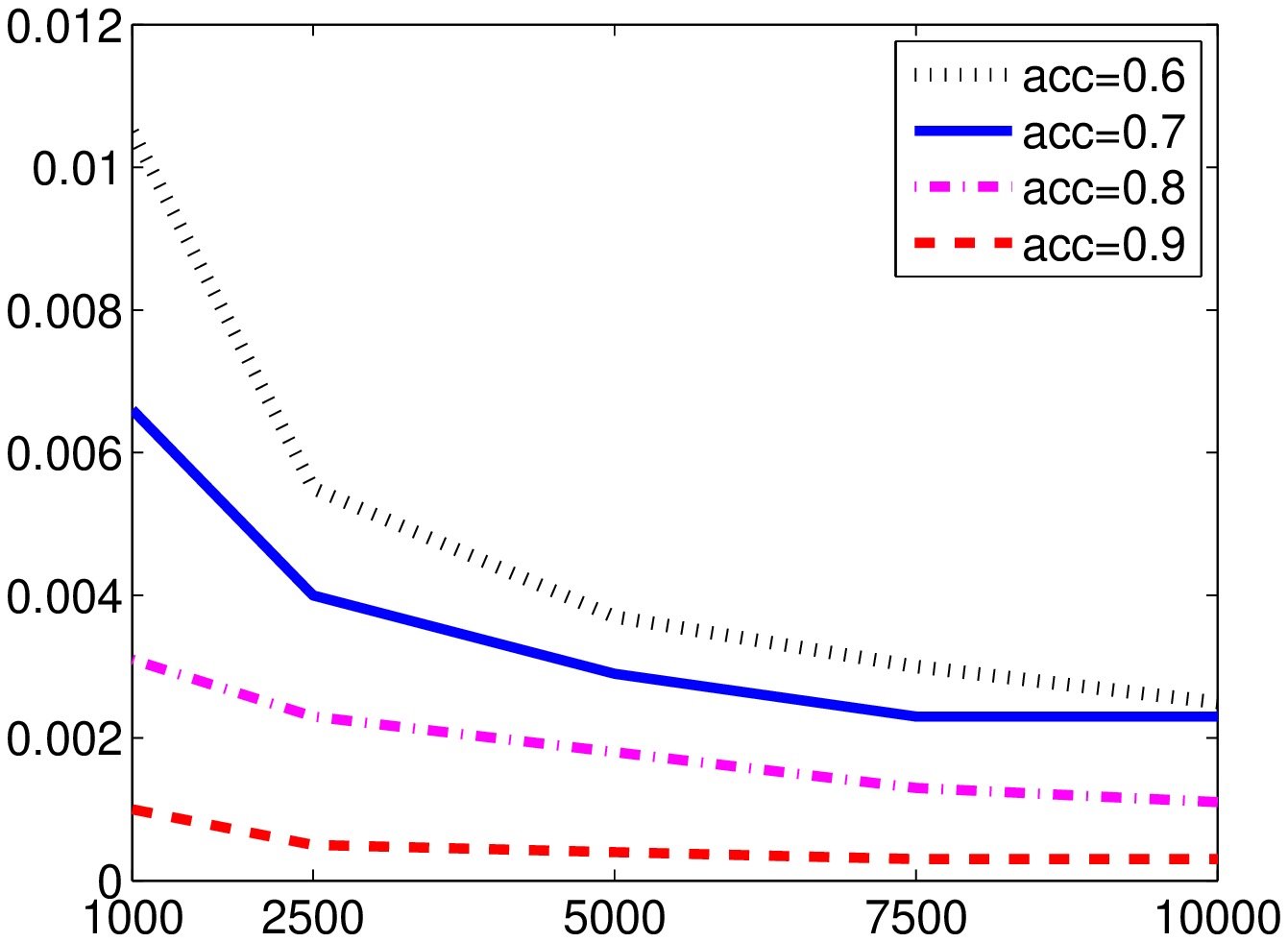}\\
\raisebox{2.1ex}{\includegraphics[scale=0.35]{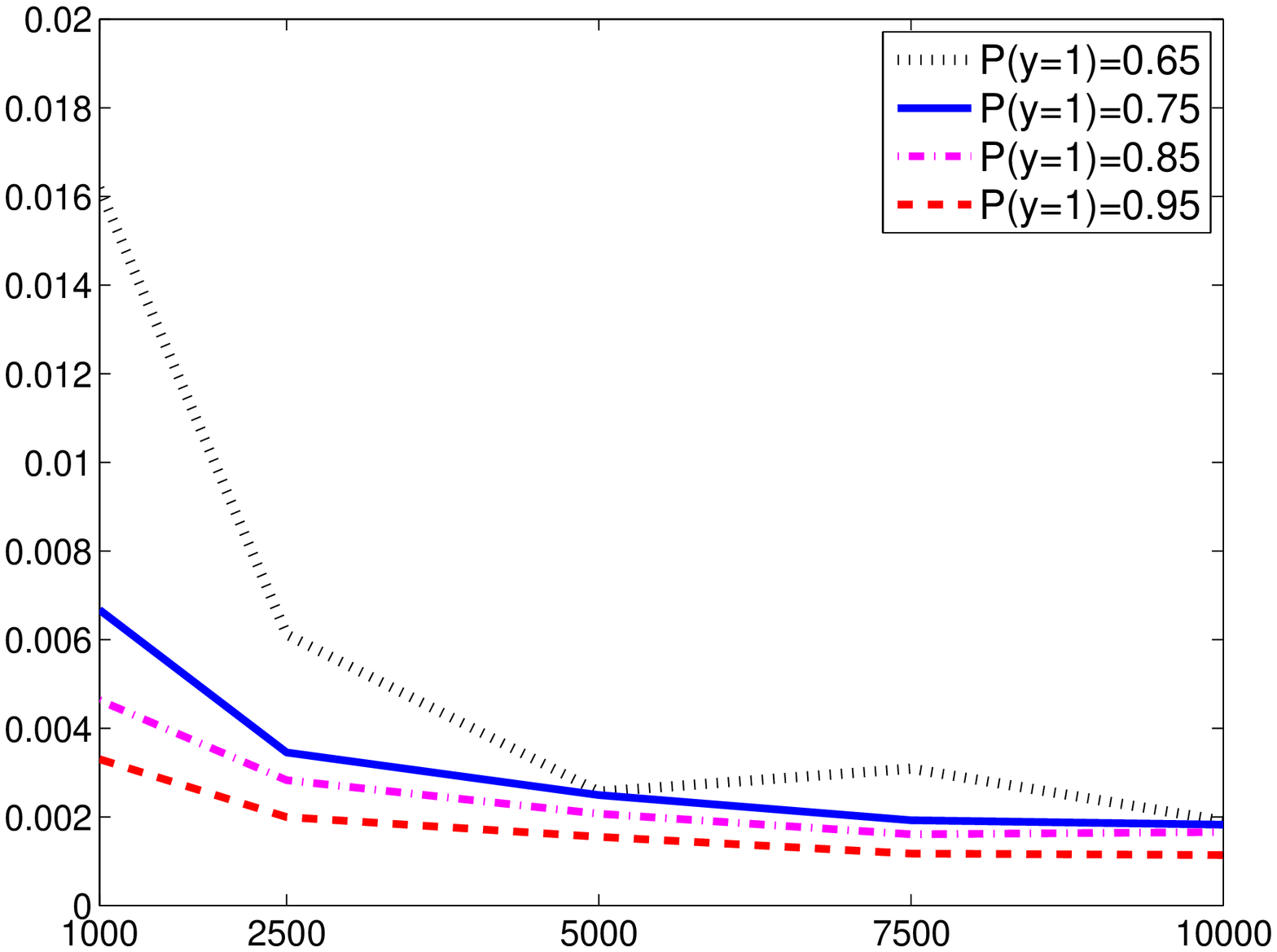}}
\includegraphics[scale=0.4815]{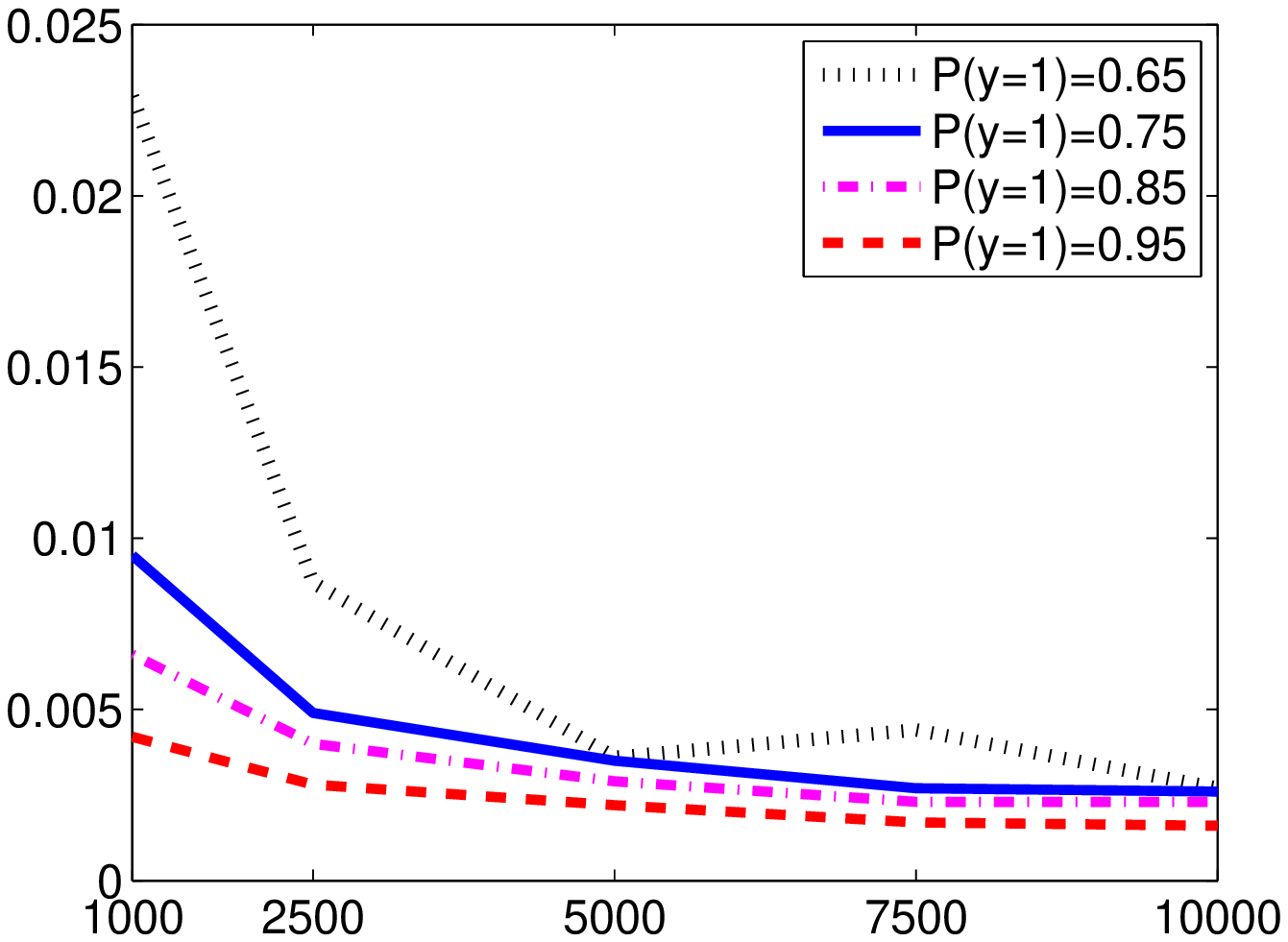}\\
\caption{The relative accuracy of $\hat R_n$ (measured by
$|\hat R_n(\theta)-R_{n}(\theta)|/R_{n}(\theta)$)
 as a function of $n$,  classifier accuracy (acc) and the label marginal $p(Y)$ (left: logloss, right: hinge-loss). The estimation error nicely decreases with $n$ (approaching 1\% at $n=1000$ and decaying further). It also decreases with the accuracy of the classifier (top) and non-uniformity of $p(Y)$ (bottom) in accordance with the theory of Section~\ref{sec:asymVar}.}\label{fig:accPyfig}
\end{figure}
  
Figure~\ref{fig:asymm} shows the accuracy of logloss estimation for a real world transfer learning experiment based on the 20-newsgroup data. Following the experimental setup of \citep{dai07} we trained a classifier (logistic regression) on one 20 newsgroup classification problem and tested it on a related problem.  Specifically, we used the hierarchical category structure to generate train and testing sets with different distributions (see Figure~\ref{fig:asymm} and \citep{dai07} for more detail). The unsupervised estimation of the logloss risk was very effective with relative accuracy greater than 96\% and absolute error less than 0.02.

\begin{figure}
  \centering { \begin{tabular}{|l|l|l|l|l|l|} \hline 
Data & $R_n$ & $|R_n-\hat R_n|$ & $|R_n-\hat R_n|/R_n$ & $n$ &       $p(Y=1)$\\ \hline 
sci vs. comp & 0.7088 & 0.0093 & 0.013 & 3590 & 0.8257\\ \hline
sci vs. rec & 0.641 & 0.0141 & 0.022 & 3958 & 0.7484\\ \hline
talk vs. rec & 0.5933 & 0.0159 & 0.026 & 3476 & 0.7126\\ \hline
talk vs. comp & 0.4678 & 0.0119 & 0.025 & 3459 & 0.7161 \\ \hline 
talk vs. sci & 0.5442 & 0.0241 & 0.044 & 3464 & 0.7151\\ \hline 
comp vs. rec & 0.4851 & 0.0049 & 0.010 & 4927 & 0.7972\\ \hline
\end{tabular}}
\caption{Error in estimating logloss for logistic regression classifiers trained on one 20-newsgroup classification task and tested on another. We followed the transfer learning setup described in \citep{dai07} which may be referred to for more detail. The train and testing sets contained samples from two top categories in the topic hierarchy but with different subcategory proportions. The first column indicates the top category classification task and the second indicates the empirical log-loss $R_n$ calculated using the true labels of the testing set \eqref{eq:empiricalLoss}. The third and forth columns indicate the absolute and relative errors of $\hat R_n$. The fifth and sixth columns indicate the train set size and the label marginal distribution.}
\label{fig:asymm}
\end{figure}

\section{Application 2: Unsupervised Learning of 
Classifiers} \label{sec:UnsuperTrain}
Our second application is a very ambitious one: training classifiers using unlabeled data by minimizing the unsupervised risk estimate $\hat \theta_n=\argmin \hat R_n(\theta)$. We evaluate the performance of the learned classifier $\hat \theta_n$ based on three quantities: (i) the unsupervised risk estimate  $\hat R_n(\hat\theta_n)$, (ii) the supervised risk estimate $R_n(\hat\theta_n)$, and (iii) its classification error rate. We also compare the performance of $\hat \theta_n=\argmin \hat R_n(\theta)$ with that of its supervised analog $\argmin R_n(\theta)$. 

We compute $\hat \theta_n=\argmin \hat R_n(\theta)$ using two algorithms (see Algorithms \ref{alg:gradDescent}-\ref{alg:gridSearch}) that start with an initial $\theta^{(0)}$ and iteratively construct a sequence of classifiers $\theta^{(1)},\ldots,\theta^{(T)}$ which steadily decrease $\hat R_n$. Algorithm~\ref{alg:gradDescent} adopts a gradient descent-based optimization. At each iteration $t$, it approximates the gradient vector $\nabla \hat R_n(\theta^{(t)})$ numerically using a finite difference approximation~\eqref{eq:central}. Algorithm~\ref{alg:gridSearch} proceeds by constructing a grid search along every dimension of $\theta^{(t)}$ and set $[\theta^{(t)}]_i$ to the grid value that minimizes $\hat R_n$. Although we focus on unsupervised training of logistic regression (minimizing unsupervised logloss estimate), the same techniques may be generalized to train other margin-based classifiers such as SVM by minimizing the unsupervised hinge-loss estimate. 

\begin{algorithm*}
\begin{algorithmic}
\caption{Unsupervised Gradient Descent}
\label{alg:gradDescent}
{
   \STATE {\bfseries Input:} $X^{(1)},\ldots,X^{(n)}\in \mathbb{R}^d$, $p(Y)$, step size $\alpha$
   \REPEAT
   \STATE Initialize $t=0$, $\theta^{(t)} =\theta^0\in \mathbb{R}^d$
   \STATE Compute $f_{\theta^{(t)}}(X^{(j)}) = \langle\theta^{(t)},X^{(j)}\rangle$ $\forall j=1,\dots,n$
   \STATE Estimate $(\hat \mu_1,\hat\mu_{-1}, \hat\sigma_1,\hat\sigma_{-1})$ by maximizing  \eqref{eq:ll}
   \FOR{$i=1$ {\bfseries to} $d$}
   \STATE Plug-in the estimates into~\eqref{eq:plugin} to approximate
   \STATE 
\begin{align}&\frac{\partial \hat R_n(\theta^{(t)})}{\partial \theta_i}=\frac{\hat R_n(\theta^{(t)} + h_i e_i) - \hat R_n(\theta^{(t)} - h_i e_i)}{2h_i}\nonumber \\
&(e_i \text{ is an all zero vector except for } [e_i]_i=1) 
\label{eq:central} 
\end{align} 
   \ENDFOR
   \STATE Form $\nabla \hat    R_n(\theta^{(t)})=(\frac{\partial \hat R_n(\theta^{(t)})}{\partial      \theta^{(t)}_1}, \dots, \frac{\partial \hat R_n(\theta^{(t)})}{\partial\theta^{(t)}_d})$ 
\STATE Update $\theta^{(t+1)} = \theta^{(t)} - \alpha    \nabla \hat R_n(\theta^{(t)})$, $t = t +1$ \UNTIL{convergence} \STATE    {\bfseries Output:} linear classifier $\theta^{\text{final}} =    \theta^{(t)}$ }
\end{algorithmic}
\end{algorithm*}

\begin{algorithm*}
\caption{Unsupervised Grid Search}
\label{alg:gridSearch}
\begin{algorithmic} 
{   \STATE {\bfseries Input:} $X^{(1)},\ldots,X^{(n)}\in \mathbb{R}^d$, $p(Y)$, grid-size $\tau$
   \STATE Initialize $\theta_i \sim \text{Uniform}(-2,2)$ for all $i$
   \REPEAT
   \FOR{$i=1$ {\bfseries to} $d$}
   \STATE Construct $\tau$ points grid in the range $[\theta_i - 4 \tau,\theta_i + 4 \tau]$ 
   \STATE Compute the risk estimate \eqref{eq:plugin} where all dimensions of $\theta^{(t)}$ are fixed except for $[\theta^{(t)}]_i$ which is evaluated at each grid point. 
\STATE Set $[\theta^{(t+1)}]_i$ to the grid value that minimized \eqref{eq:plugin}
   \ENDFOR
   \UNTIL{convergence} \STATE {\bfseries Output:} linear classifier    $\theta^{\text{final}} = \theta$ }
\end{algorithmic} 
\end{algorithm*}

Figures~\ref{fig:rcv1}-\ref{fig:mnist} display $\hat R_n(\hat\theta_n)$, $R_n(\hat\theta_n)$ and $\text{error-rate}(\hat\theta_n)$ on the training and testing sets as on two real world datasets: RCV1 (text documents) and MNIST (handwritten digit images) datasets. In the case of RCV1 we discarded all but the most frequent $504$ words (after stop-word removal) and represented documents using their tfidf scores. We experimented on the binary classification task of distinguishing the top category (positive) from the next $4$ top categories (negative) which resulted in $p(y=1)=0.3$ and $n=199328$. $70\%$ of the data was chosen as a (unlabeled) training set and the rest was held-out as a test-set. In the case of MNIST data, we normalized each of the $28\times 28=784$ pixels to have $0$ mean and unit variance. Our classification task was to distinguish images of the digit one (positive) from the digit 2 (negative) resulting in  $14867$ samples and $p(Y=1)=0.53$.  We randomly choose $70\%$ of the data as a training set and kept the rest as a testing set.

\begin{figure} \centering
\includegraphics[scale=0.4]{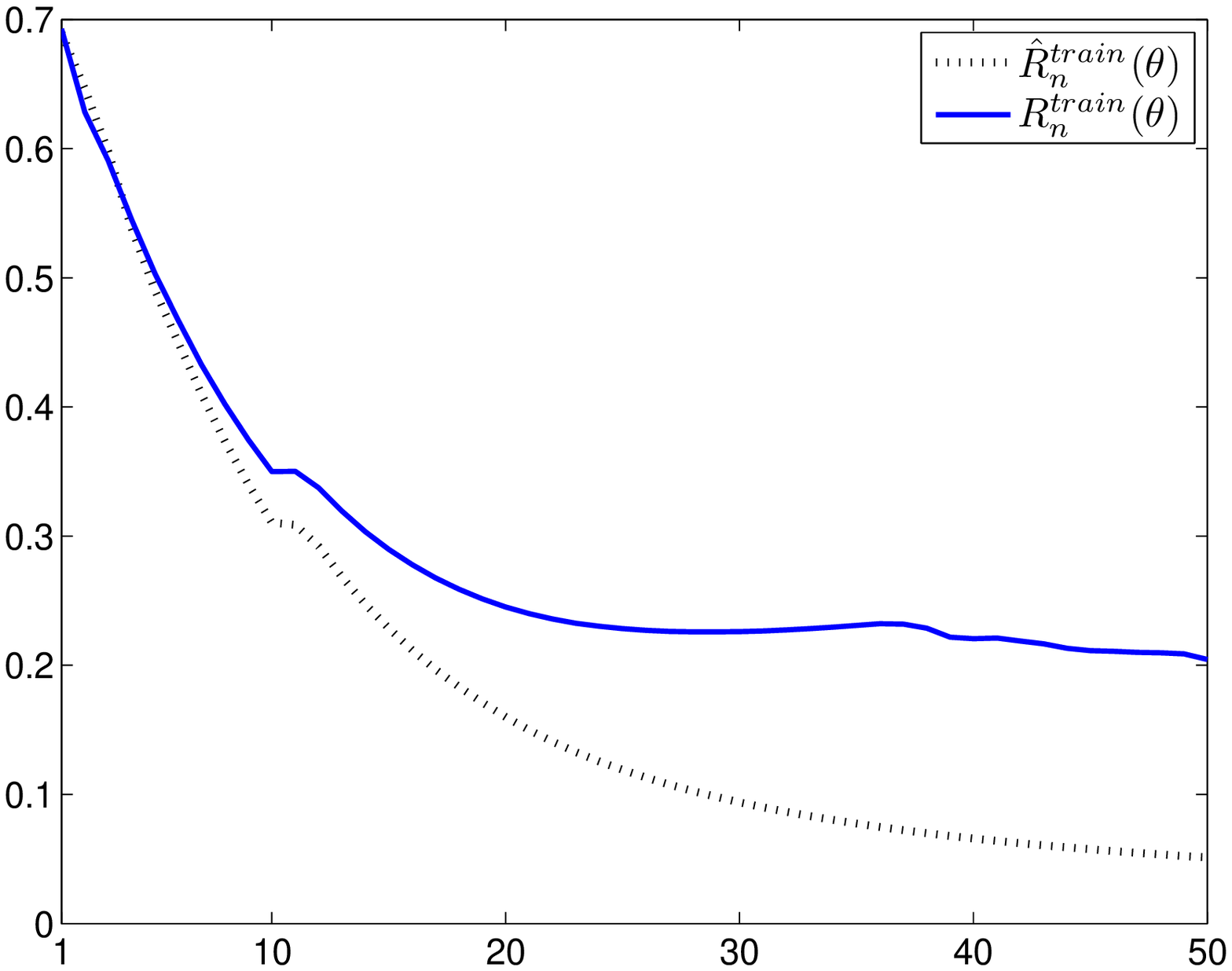}   \includegraphics[scale=0.4]{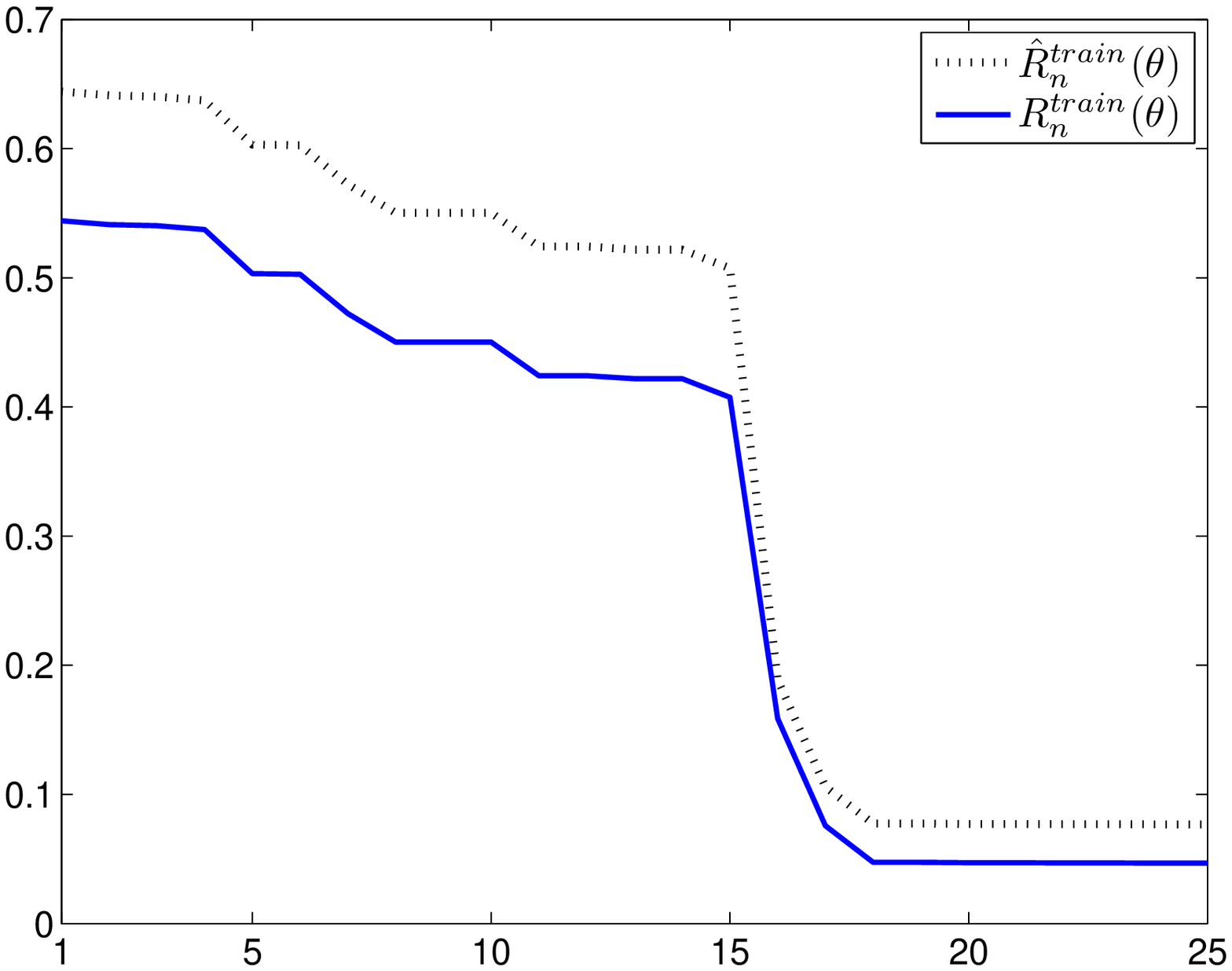}\\
\includegraphics[scale=0.4]{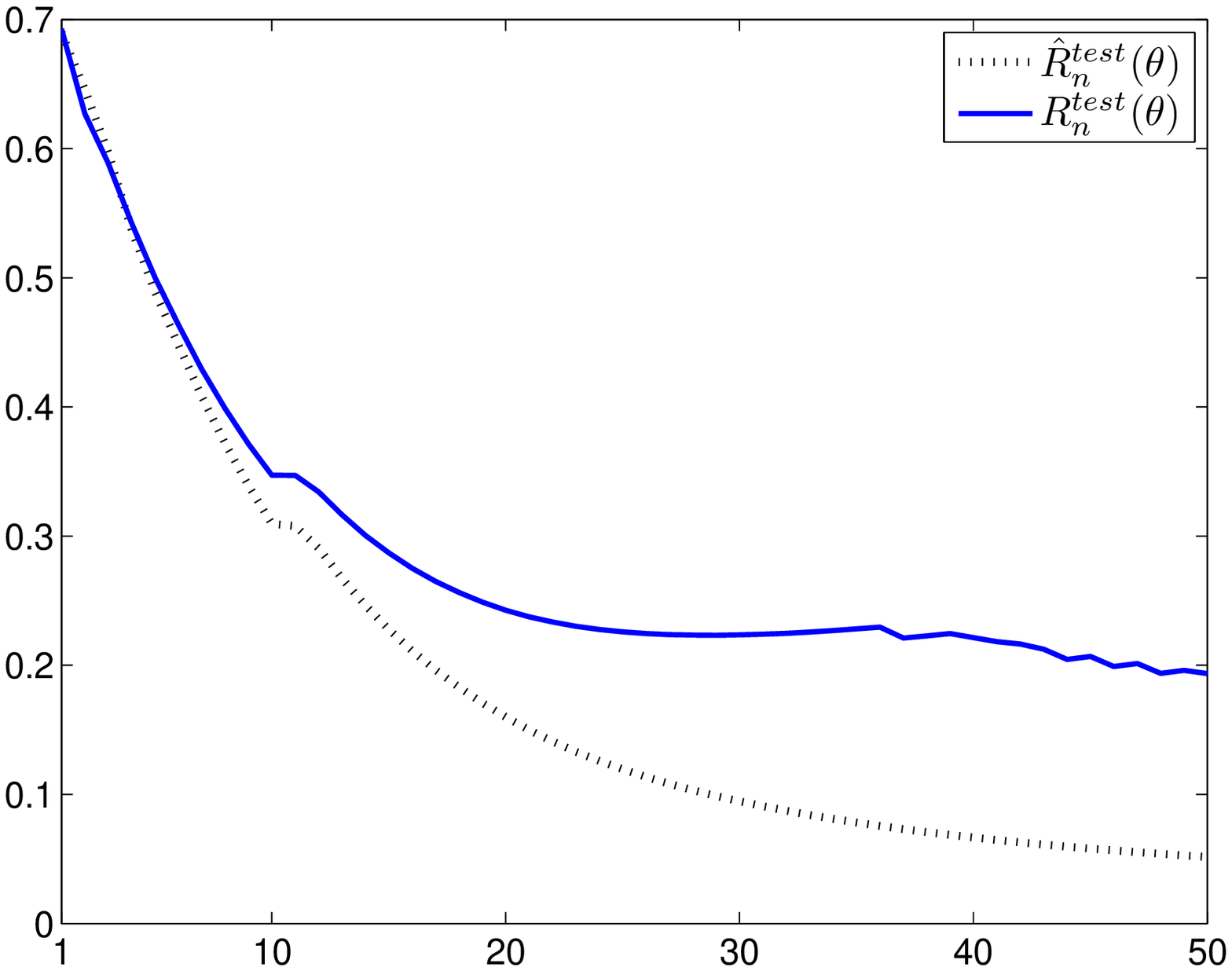}
\includegraphics[scale=0.4]{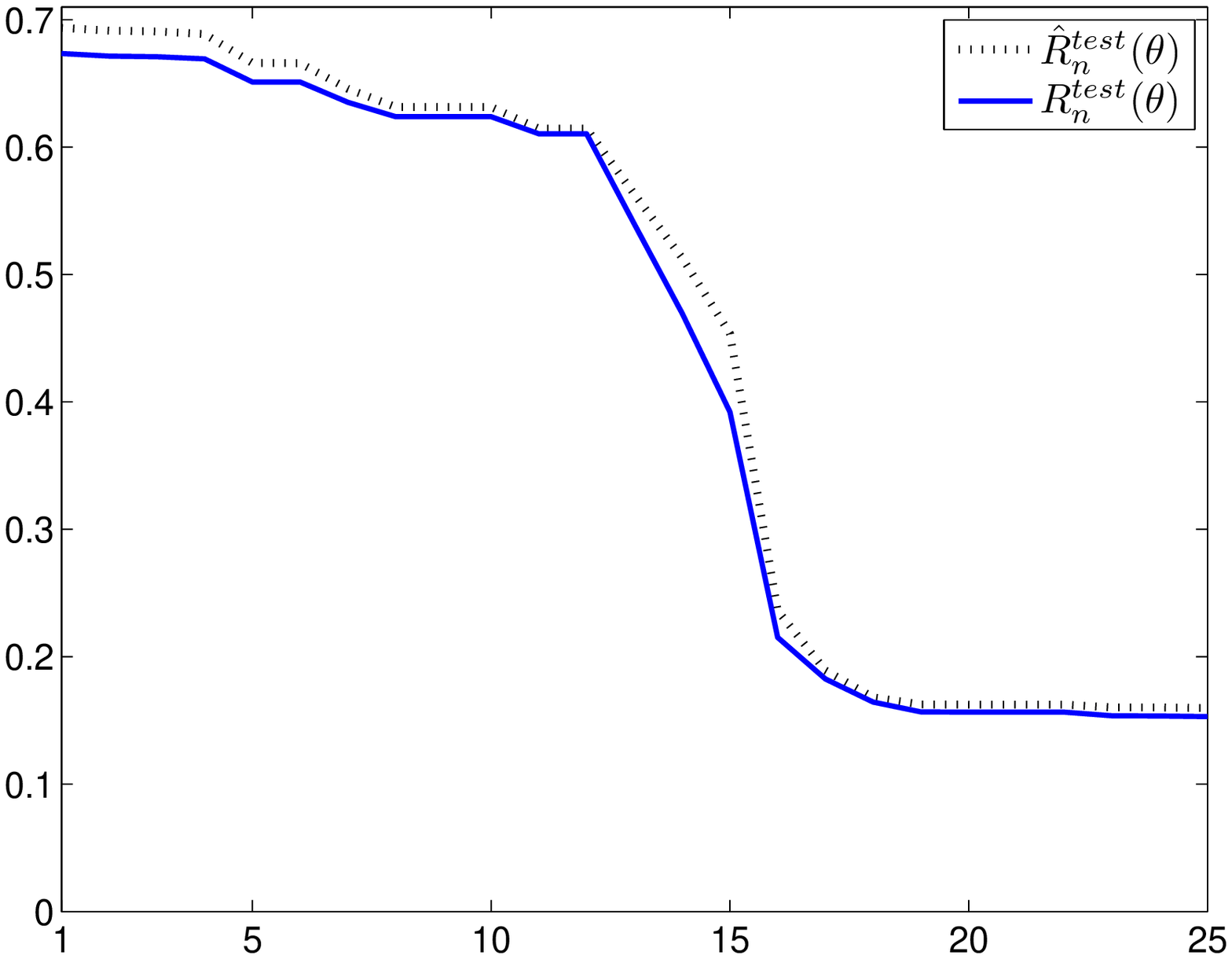} \\
\includegraphics[scale=0.4]{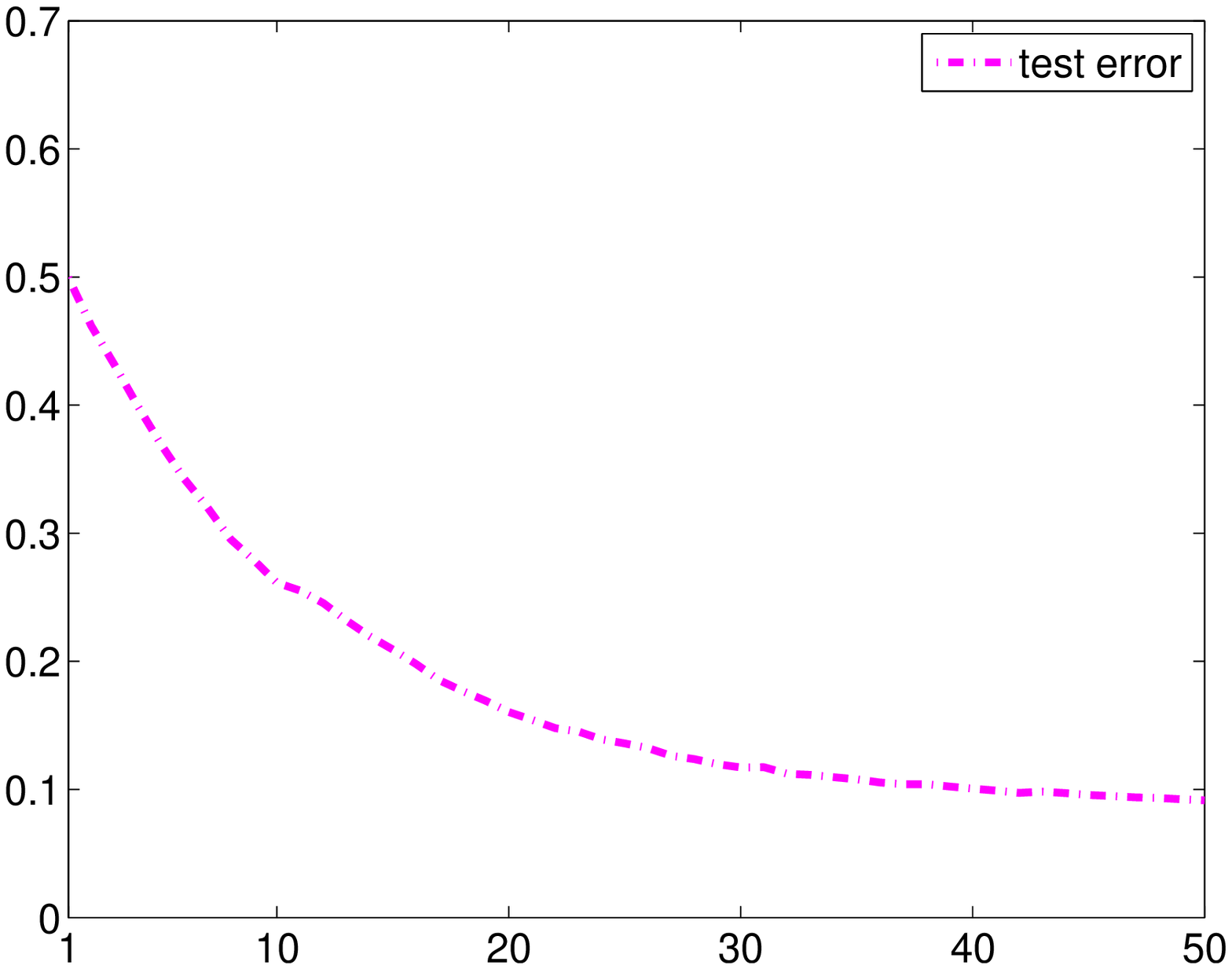}
\includegraphics[scale=0.4]{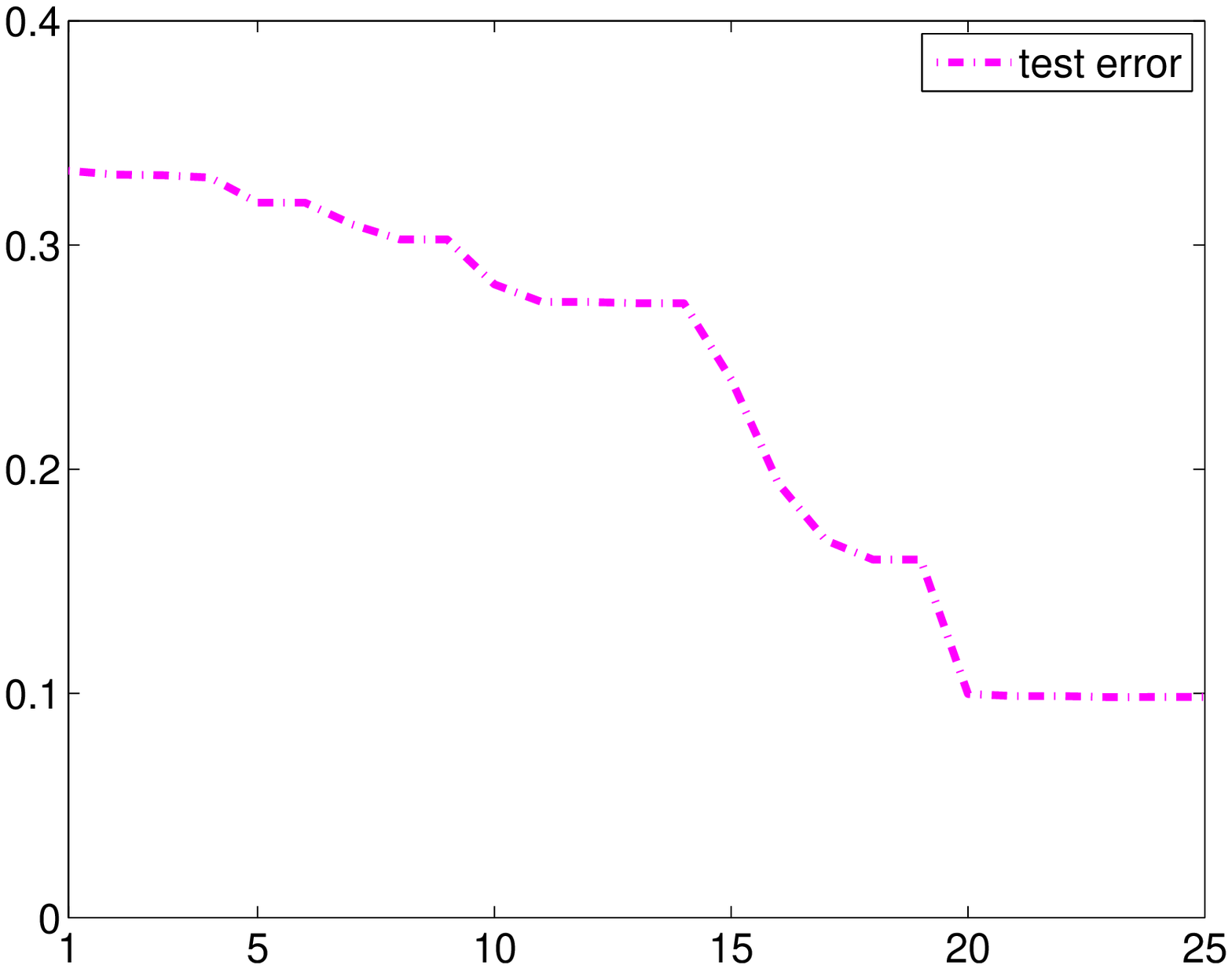}
\caption{Performance of unsupervised logistic regression classifier $\hat\theta_n$ computed using  Algorithm~\ref{alg:gradDescent} (left) and Algorithm~\ref{alg:gridSearch} (right) on the RCV1 dataset. The top two rows show the decay of the two risk estimates $\hat R_n(\hat\theta_n)$, $R_n(\hat\theta_n)$ as a function of the algorithm iterations. The risk estimates of $\hat\theta_n$ were computed using the train set (top) and the test set (middle). The bottom row displays the decay of the test set error rate of $\hat\theta_n$ as a function of the algorithm iterations. The figure shows that the algorithm obtains a relatively accurate classifier (testing set error rate 0.1, and $\hat R_n$ decaying similarly to $R_n$) without the use of a single labeled example. For comparison, the test error rate for supervised logistic regression with the same $n$ is 0.07.}  
\label{fig:rcv1}
\end{figure}

\begin{figure} \centering
  \includegraphics[scale=0.4]{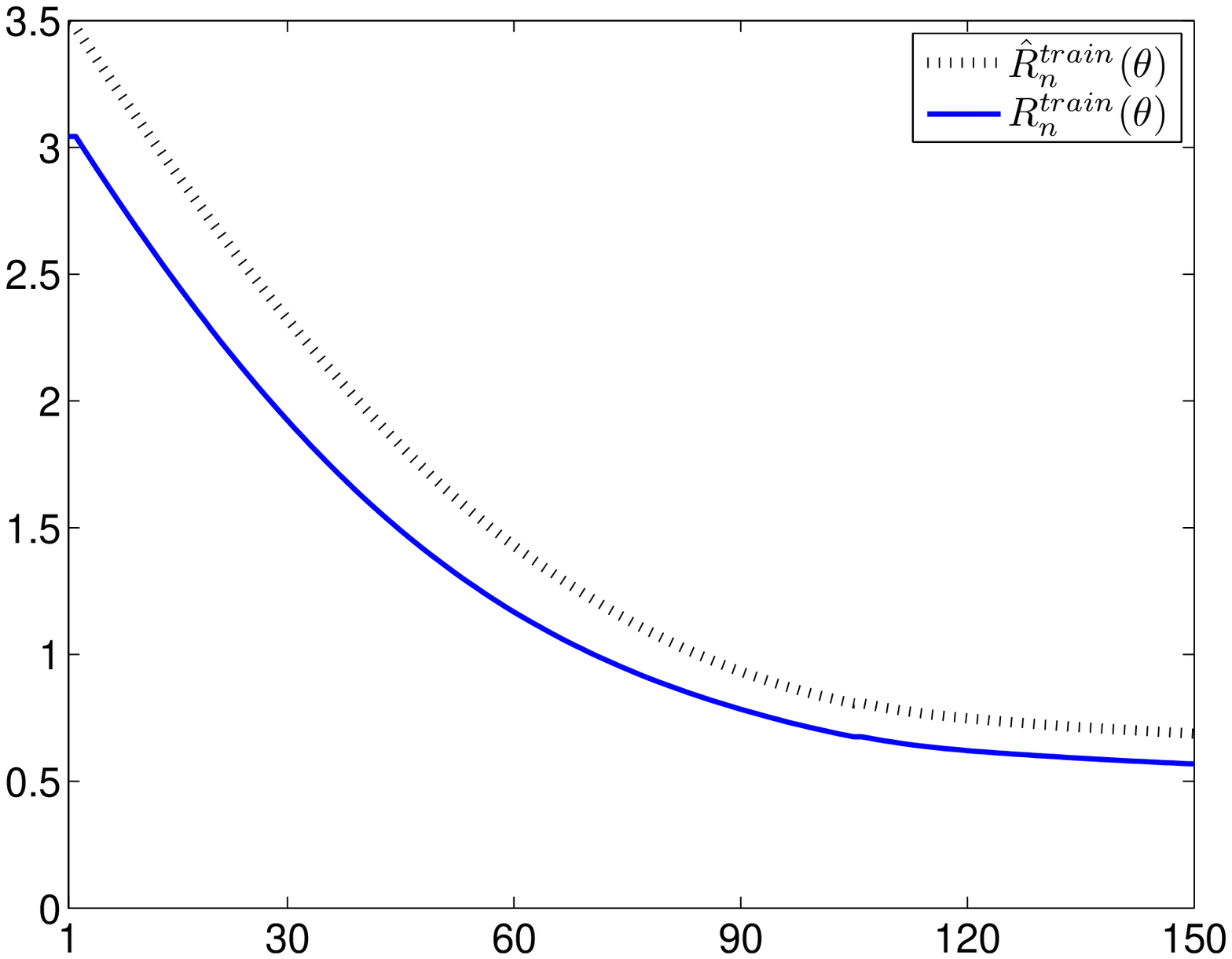}
  \includegraphics[scale=0.4]{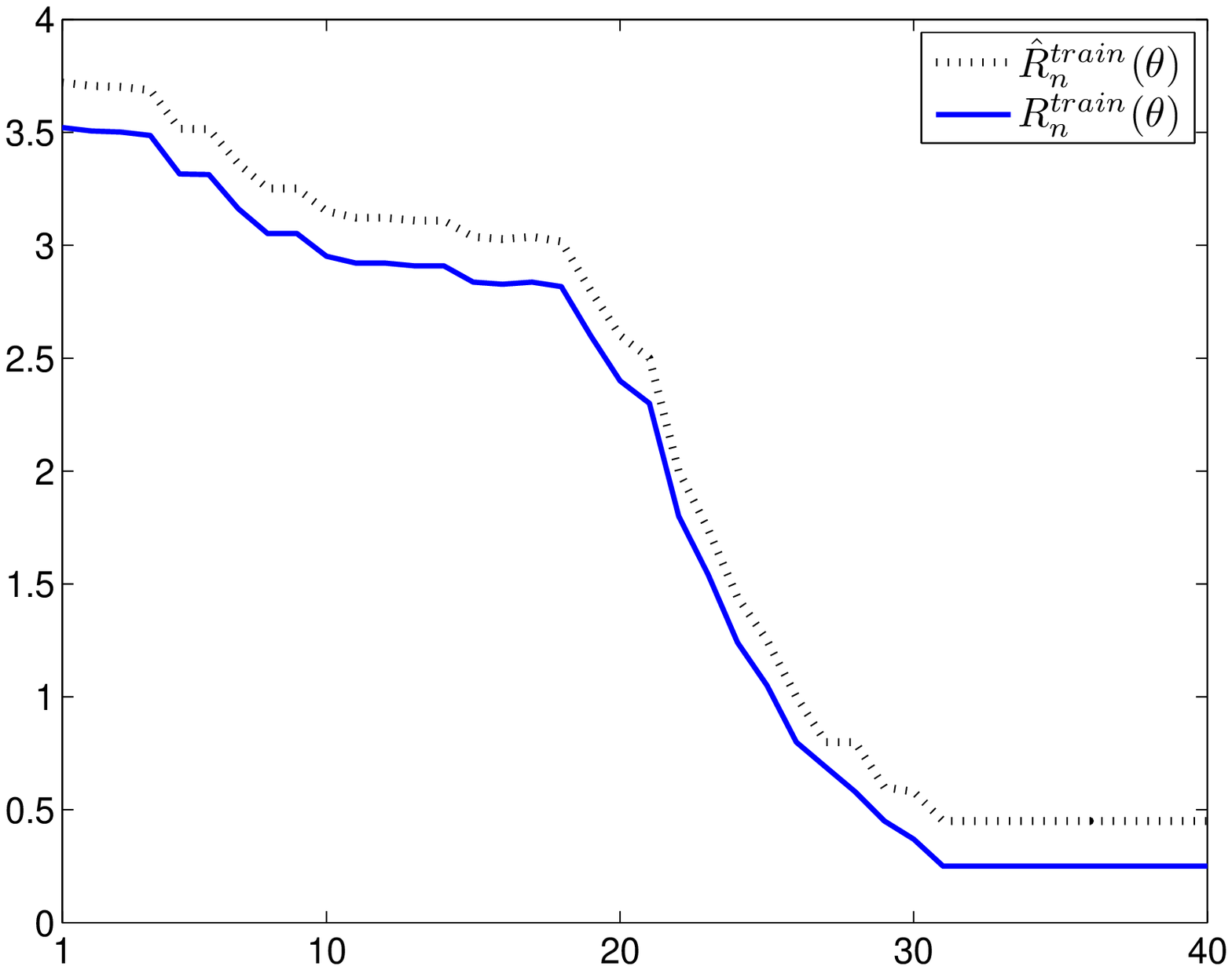}\\
  \includegraphics[scale=0.4]{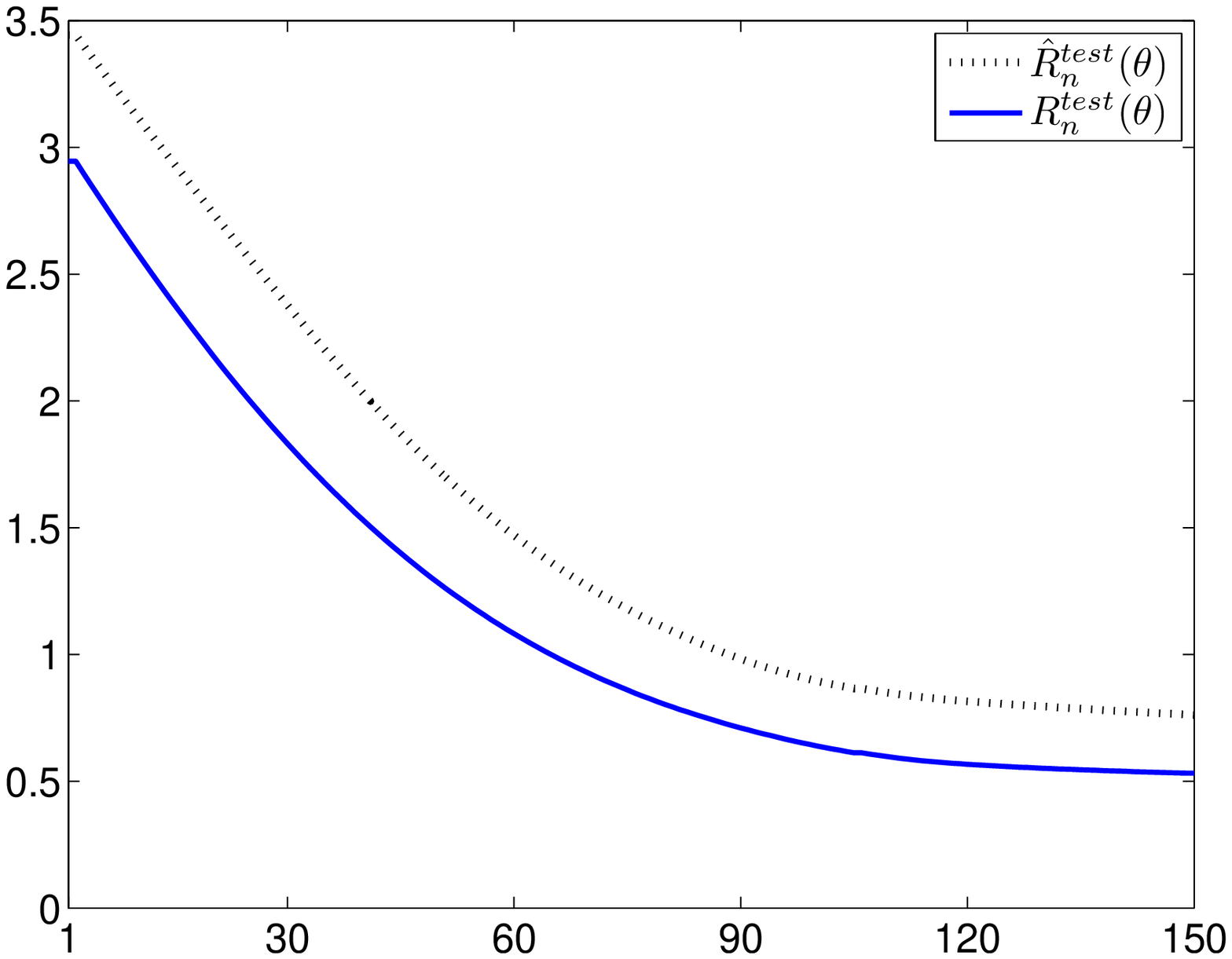}
  \includegraphics[scale=0.4]{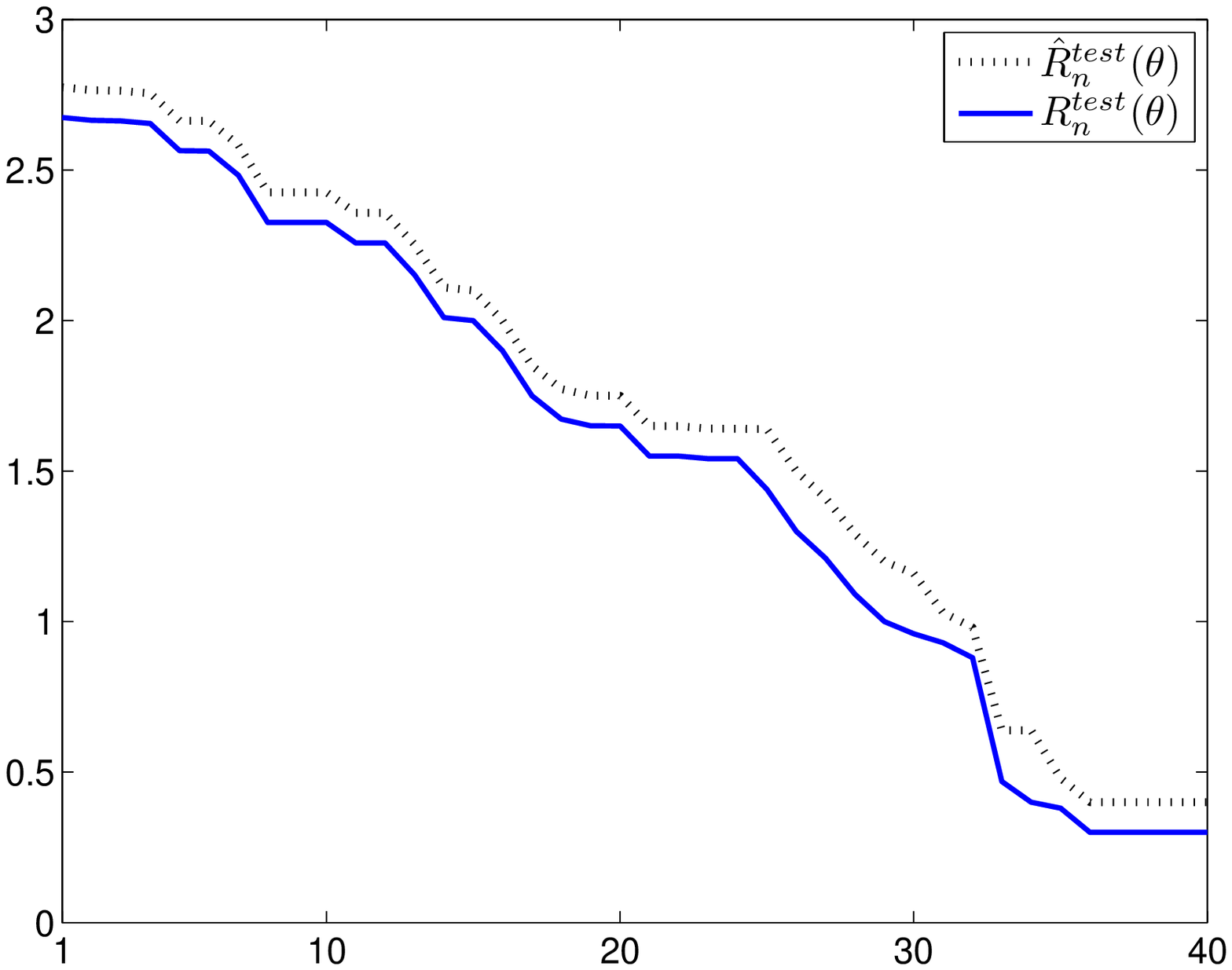}\\
  \includegraphics[scale=0.4]{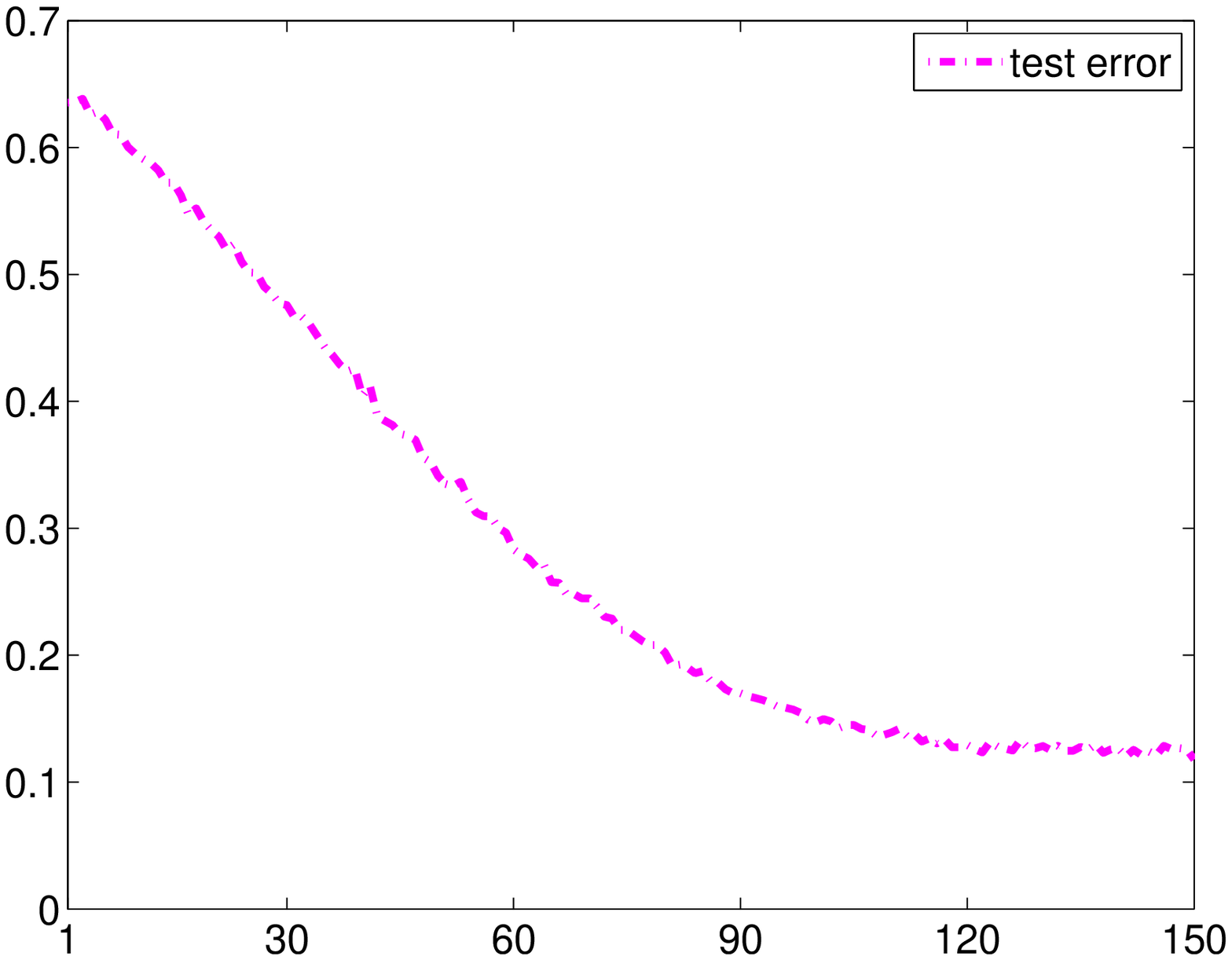}
  \includegraphics[scale=0.4]{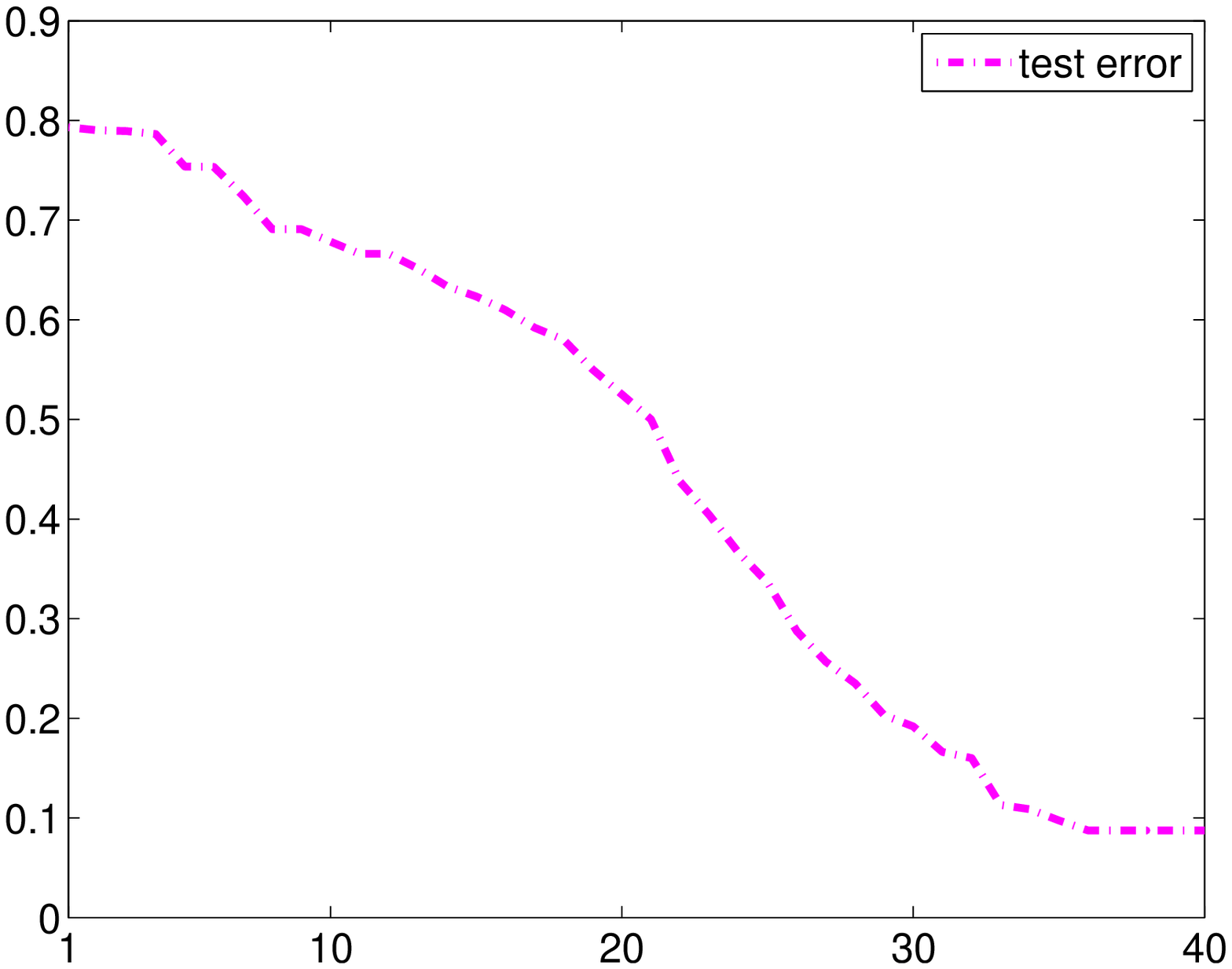}
  \caption{Performance of unsupervised logistic regression classifier $\hat\theta_n$ computed using  Algorithm~\ref{alg:gradDescent} (left) and Algorithm~\ref{alg:gridSearch} (right) on the MNIST dataset. The top two rows show the decay of the two risk estimates $\hat R_n(\hat\theta_n)$, $R_n(\hat\theta_n)$ as a function of the algorithm iterations. The risk estimates of $\hat\theta_n$ were computed using the train set (top) and the test set (middle). The bottom row displays the decay of the test set error rate of $\hat\theta_n$ as a function of the algorithm iterations. The figure shows that the algorithm obtains a relatively accurate classifier (testing set error rate 0.1, and $\hat R_n$ decaying similarly to $R_n$) without the use of a single labeled example. For comparison, the test error rate for supervised logistic regression with the same $n$ is 0.05.}  
\label{fig:mnist}
\end{figure}

Figures~\ref{fig:rcv1}-\ref{fig:mnist} indicate that minimizing the unsupervised logloss estimate is quite effective in learning an accurate classifier without labels. Both the unsupervised and supervised risk estimates $\hat R_n(\hat\theta_n)$, $R_n(\hat\theta_n)$ decay nicely when computed over the train set as well as the test set. Also interesting is the decay of the error rate. For comparison purposes supervised logistic regression with the same $n$ achieved only slightly better test set error rate: 0.05 on RCV1 (instead of 0.1) and 0.07 or MNIST (instead of 0.1).

\subsection{Inaccurate Specification of $p(Y)$}
Our estimation framework assumes that the marginal $p(Y)$ is known. In some cases we may only have an inaccurate estimate of $p(Y)$. It is instructive to consider how the performance of the learned classifier degrades with the inaccuracy of the assumed $p(Y)$. 

Figure~\ref{fig:misspecfig} displays the performance of the learned classifier for RCV1 data as a function of the assumed value of $p(Y=1)$ (correct value is $p(Y=1)=0.3$). We conclude that knowledge of $p(Y)$ is an important component in our framework but precise knowledge is not crucial. Small deviations of the assumed $p(Y)$ from the true $p(Y)$ result in a small degradation of logloss estimation quality and testing set error rate. Naturally, large deviation of the assumed $p(Y)$ from the true $p(Y)$ renders the framework ineffective.

\begin{figure}
\centering
\includegraphics[scale=0.4]{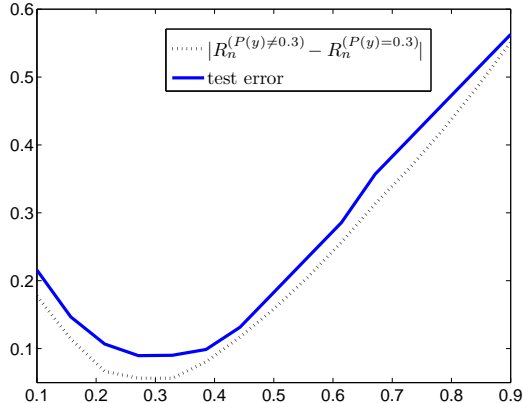}
\caption{Performance of unsupervised classifier training on RCV1 data (top class vs. classes 2-5) for misspecified $p(Y)$. The performance of the estimated classifier (in terms of training set empirical logloss $R_n$~\eqref{eq:empiricalLoss} and test error rate measured using held-out labels) decreases with the deviation between the assumed and true $p(Y=1)$ (true $p(Y=1)=0.3)$). The classifier performance is very good when the assumed $p(Y)$ is close to the truth and degrades gracefully when the assumed $p(Y)$ is not too far from the truth.}\label{fig:misspecfig}
\end{figure}

\section{Related Work}
Related problems have been addressed in \citep{Mann07} and \citep{Quad09}. The work in \citep{Mann07} performs transduction by enforcing constraints on the label proportions. However, their method requires labeled data. The work in \citep{Quad09} aims to estimate the labels of an unlabeled testing set using known label proportions of $n$ sets of unlabeled observations. The key difference between their approach and ours is that they require as many splits of the data as the number of classes and therefore require the knowledge of the label proportions in each split. This is a much stronger assumption than knowing $p(y)$. As noted previously (see comment after Proposition~\ref{prop:identifiability}), our analysis is in fact valid when only the order of label proportions is known, rather than the absolute values. 

An important distinction between our work and the references above is that our work provides an estimate for the margin-based risk and therefore leads naturally to unsupervised versions of logistic regression and support vector machines. We also provide asymptotic analysis showing convergence of the resulting classifier to the optimal classifier (minimizer of \eqref{eq:defR}). Experimental results show that in practice the accuracy of the unsupervised classifier is on the same order  (but slightly lower naturally) as its supervised analog.

\section{Discussion}
In this paper we developed a novel framework for estimating margin-based risks using only unlabeled data. We shows that it performs well in practice on several different datasets. We derived a theoretical basis by casting it as a maximum likelihood problem for Gaussian mixture model followed by plug-in estimation. 

Remarkably, the theory states that assuming normality of $f_{\theta}(X)$ and a known $p(Y)$ we are able to estimate the risk $R(\theta)$ without a single labeled example. That is the risk estimate converges to the true risk as the number of unlabeled data increase. Moreover, using uniform convergence arguments it is possible to show that the proposed training algorithm converges to the optimal classifier as $n\to\infty$ without any labeled data. 

On a more philosophical level, our approach points at novel questions that go beyond supervised and semi-supervised learning. What benefit do labels provide over unsupervised training? Can our framework be extended to semi-supervised learning where a few labels do exist? Can it be extended to non-classification scenarios such as margin based regression or margin based structured prediction? When are the assumptions likely to hold and how can we make our framework even more resistant to deviations from them? These questions and others form new and exciting open research directions. 
{
	\bibliographystyle{plain}
	\bibliography{../../common/groupPapers,../../common/externalPapers}

\begin{thebibliography}{10}

\bibitem{behboodian1972}
J.~Behboodian.
\newblock {Information matrix for a mixture of two normal distributions}.
\newblock {\em Journal of statistical computation and simulation},
  1(4):295--314, 1972.

\bibitem{Berk1973}
K.~N. Berk.
\newblock A central limit theorem for $m$-dependent random variables with
  unbounded $m$.
\newblock {\em The Annals of Probability}, 1(2):352--354, 1973.

\bibitem{dai07}
W.~Dai, Q.~Yang, G.-R. Xue, and Y.~Yu.
\newblock Boosting for transfer learning.
\newblock In {\em Proc. of International Conference on Machine Learning}, 2007.

\bibitem{Ferguson1996}
T.~S. Ferguson.
\newblock {\em A Course in Large Sample Theory}.
\newblock Chapman \& Hall, 1996.

\bibitem{Hoeffding1948}
W.~Hoeffding and H.~Robbins.
\newblock The central limit theorem for dependent random variables.
\newblock {\em Duke Mathematical Journal}, 15:773--780, 1948.

\bibitem{lewis04rcv}
D.~Lewis, Y.~Yang, T.~Rose, and F.~Li.
\newblock {RCV1}: A new benchmark collection for text categorization research.
\newblock {\em Journal of Machine Learning Research}, 5:361--397, 2004.

\bibitem{Mann07}
G.~Mann and A.~McCallum.
\newblock Simple, robust, scalable semi-supervised learning via expectation
  regularization.
\newblock In {\em Proc. 24th International Conference on Machine Learning},
  2007.

\bibitem{Pham-etal-2002}
T.~V. Pham, M.~Worring, and A.~W.~M. Smeulders.
\newblock Face detection by aggregated bayesian network classifiers.
\newblock {\em Pattern Recognition Letters}, 23(4):451--461, February 2002.

\bibitem{Quad09}
N.~Quadrianto, A.~J. Smola, T.~S. Caetano, and Q.~V. Le.
\newblock Estimating labels from label proportions.
\newblock {\em Journal of Machine Learning Research}, 10:2349--2374, 2009.

\bibitem{rinott1994}
Y.~Rinott.
\newblock On normal approximation rates for certain sums of dependent random
  variables.
\newblock {\em Journal of Computational and Applied Mathematics},
  55(2):135--143, 1994.

\bibitem{Teicher1963}
H.~Teicher.
\newblock Identifiability of finite mixtures.
\newblock {\em The Annals of Mathematical Statistics}, 34(4):1265--1269, 1963.

\end{thebibliography}
}

\end{document}